\documentclass{article}
\usepackage[toc,page,header]{appendix}

\usepackage[sort&compress]{natbib}
\PassOptionsToPackage{sort, compress}{natbib}
\usepackage[margin=1in]{geometry}

\usepackage{etoc}
\usepackage{amsmath} 
\usepackage{amsthm}
\usepackage{bbm}
\usepackage{bm}
\usepackage{xspace}
\usepackage{caption}
\usepackage{amssymb, amsmath}
\usepackage{mathtools}
\usepackage{enumitem}
\usepackage[utf8]{inputenc}
\usepackage{xcolor}
\usepackage{graphicx}
\usepackage{nicefrac}
\usepackage{centernot}
\usepackage{array}
\usepackage{mathabx}
\usepackage{hyperref}
\usepackage{subcaption}
\usepackage{titlesec}
\usepackage{hhline}
\usepackage[ruled,vlined,linesnumbered]{algorithm2e}
\usepackage{float}
\usepackage{multirow}
\usepackage{multicol}

\SetKwInput{KwInput}{Input}
\SetKwInput{KwHyper}{Hyperparameter}
\SetKwInput{KwOutput}{Output}
\usepackage[font=itshape,begintext=`,endtext=']{quoting}

\newtheorem{theorem}{Theorem}%
\newtheorem{proposition}{Proposition}%

\theoremstyle{definition}
\newtheorem{definition}{Definition}
\newtheorem{example}{Example}
\newtheorem{remark}{Remark}

\allowdisplaybreaks

\newcommand{\naturals}{\mathbb{N}}

\newcommand{\x}{\mathbf{x}}
\newcommand{\y}{\mathbf{y}}

\newcommand{\s}{\mathbf{s}}
\newcommand{\g}{\mathbf{g}}
\newcommand{\e}{\mathbf{e}}
\newcommand{\q}{\mathbf{q}}

\newcommand{\h}{\mathbf{h}}
\newcommand{\bfepsilon}{\bm{\varepsilon}}
\newcommand{\bfalpha}{\bm{\alpha}}

\newcommand{\uvec}{\mathbf{u}}

\newcommand{\kvec}{\mathbf{k}}
\newcommand{\svec}{\mathbf{s}}

\newcommand{\Y}{\mathbf{Y}}

\newcommand{\fancyX}{\mathcal{X}}

\newcommand{\Acal}{\mathcal{A}}
\newcommand{\Bcal}{\mathcal{B}}
\newcommand{\Ccal}{\mathcal{C}}

\newcommand{\Xcal}{\mathcal{X}}

\newcommand{\ceil}[1]{\left \lceil{#1}\right \rceil}
\newcommand{\floor}[1]{\left \lfloor{#1}\right \rfloor}

\newcommand{\abs}[1]{\left\lvert#1\right\rvert}
\newcommand{\textabs}[1]{\mathrm{abs}\roundbrack{#1}}

\DeclareMathOperator*{\argmax}{arg\,max}
\newcommand{\Real}{\mathbb{R}}

\newcommand{\indicator}[1]{\mathbbm{1}\curlybrack{#1}}

\newcommand{\roundbrack}[1]{\left( #1 \right)}
\newcommand{\curlybrack}[1]{\left\lbrace #1 \right\rbrace}
\newcommand{\squarebrack}[1]{\left\lbrack #1 \right\rbrack}

\newcommand{\E}{\mathbb{E}}

\newcommand{\Prob}{P} %
\renewcommand{\Pr}{P} %
\newcommand{\Exp}[2]{\mathbb{E}_{#1}\left\lbrack#2\right\rbrack}

\newcommand{\Dcal}{\mathcal{D}}

\newcommand{\cut}{q}

\title{Top-label calibration\\ and multiclass-to-binary reductions}
\date{\today}
\newcommand{\rev}[1]{{\textcolor{black}{#1}}}

\begin{document}
\etocdepthtag.toc{mtchapter}
\etocsettagdepth{mtchapter}{subsection}
\etocsettagdepth{mtappendix}{none}

\author{Chirag Gupta \& Aaditya Ramdas \\
Carnegie Mellon University\\
\texttt{\{chiragg,aramdas\}@cmu.edu} }

\maketitle

\begin{abstract}
    A multiclass classifier is said to be top-label calibrated if the reported probability for the predicted class---the top-label---is calibrated, conditioned on the top-label. This conditioning on the top-label is absent in the closely related and popular notion of confidence calibration, which we argue makes confidence calibration difficult to interpret for decision-making. We propose top-label calibration as a rectification of confidence calibration. Further, we outline a multiclass-to-binary (M2B) reduction framework that unifies confidence, top-label, and class-wise calibration, among others. As its name suggests, M2B works by reducing multiclass calibration to numerous binary calibration problems, each of which can be solved using simple binary calibration routines. We instantiate the M2B framework with the well-studied histogram binning (HB) binary calibrator, and prove that the overall procedure is multiclass calibrated without making any assumptions on the underlying data distribution. In an empirical evaluation with four deep net architectures on CIFAR-10 and CIFAR-100, we find that the M2B + HB procedure achieves lower top-label and class-wise calibration error than other approaches such as temperature scaling. Code for this work is available at \url{https://github.com/aigen/df-posthoc-calibration}.
\end{abstract}

\section{Introduction}
\label{sec:intro}
Machine learning models often make probabilistic predictions. The ideal prediction is the true conditional distribution of the output given the input. However, nature never reveals true probability distributions, making it infeasible to achieve this ideal in most situations. Instead, there is significant interest towards designing models that are calibrated, which is often feasible. %
We motivate the definition of calibration using a standard example of predicting the probability of rain. Suppose a meteorologist claims that the probability of rain on a particular day is $0.7$. Regardless of whether it rains on that day or not, we cannot know if $0.7$ was the \emph{underlying probability of rain}. However, we can test if the meteorologist is calibrated in the long run, by checking if on the D days when $0.7$ was predicted, it indeed rained on around $0.7$D days (and the same is true for other probabilities). 

This example is readily converted to a formal binary calibration setting. Denote a random (feature, label)-pair as $(X, Y) \in  \Xcal \times \{0,1\}$, where $\Xcal$ is the feature space. %
A probabilistic predictor $h : \Xcal \to [0,1]$ is said to be calibrated if for every prediction $q \in [0,1]$, $\text{Pr}(Y = 1 \mid h(X) = q) = q$ (almost surely).  Arguably, if an ML classification model produces such calibrated scores for the classes, downstream users of the model can reliably use its predictions for a broader set of tasks. %

Our focus in this paper is calibration for multiclass classification, with $L \geq 3$ classes and $Y \in [L] := \{1, 2, \ldots, L \geq 3\}$. We assume all (training and test) data is drawn i.i.d. from a fixed distribution $P$, and denote a general point from this distribution as $(X, Y) \sim P$. Consider a typical multiclass predictor, $\h : \Xcal \to \Delta^{L-1}$, whose range $\Delta^{L-1}$ is the probability simplex in $\Real^L$. A natural notion of calibration for $\h$, called \emph{canonical calibration} is the following: for every $l \in [L]$, $P(Y = l \mid \h(X) = \mathbf{q}) = q_l$ ($q_l$ denotes the $l$-th component of $\q$). However, canonical calibration becomes infeasible to achieve or verify once $L$ is even $4$ or $5$ \citep{vaicenavicius2019bayescalibration}. Thus, there is interest in studying statistically feasible relaxations of canonical notion, such as confidence calibration \citep{guo2017nn_calibration} and class-wise calibration  \citep{kull2017beyond}.%

In particular, the notion of confidence calibration \citep{guo2017nn_calibration} has been popular recently. A model is confidence calibrated if the following is true: ``when the reported confidence for the predicted class is $q \in [0,1]$, the accuracy is also $q$". In any practical setting, the confidence $q$ is never reported alone; it is always reported along with the actual class prediction $l \in [L]$. One may expect that if a model is confidence calibrated, the following also holds: ``when the class $l$ is predicted with confidence $q$, the probability of the actual class being $l$ is also $q$"? Unfortunately, this expectation is rarely met---there exist confidence calibrated classifier for whom the latter statement is grossly violated for all classes (Example~\ref{ex:conf-ECE-counterexample}). On the other hand, our proposed notion of top-label calibration enforces the latter statement. It is philosophically more coherent, because it requires conditioning on all relevant reported quantities (both the predicted top label and our confidence in it). In Section~\ref{sec:conf-to-top-label}, we argue further that top-label calibration is a simple and practically meaningful replacement of confidence calibration.

In Section~\ref{sec:m2b-calibration}, we unify top-label, confidence, and a number of other popular notions of multiclass calibration into the framework of multiclass-to-binary (M2B) reductions. The M2B framework relies on the simple observation that each of these notions internally verifies binary calibration claims. As a consequence, each M2B notion of calibration can be achieved by solving a number of binary calibration problems. With the M2B framework at our disposal, all of the rich literature on binary calibration can now be used for multiclass calibration. %
    We illustrate this by instantiating the M2B framework with the binary calibration algorithm of histogram binning or HB \citep{zadrozny2001obtaining, gupta2021distribution}.
 The M2B + HB procedure achieves state-of-the-art results with respect to standard notions of calibration error (Section~\ref{sec:experiments}). Further, we show that our procedure is provably calibrated for arbitrary data-generating distributions. The formal theorems are delayed to Appendices~\ref{sec:umd-top-label}, \ref{appsec:umd-class-wise}, but an informal result is presented in Section~\ref{sec:experiments}.

\section{Modifying confidence calibration to top-label calibration}
\label{sec:conf-to-top-label}
Let $c: \Xcal \to [L]$ denote a classifier or top-label predictor and $h: \Xcal \to [0, 1]$ a function that provides a confidence or probability score for the top-label $c(X)$.  
The predictor $(c, h)$ is said to be confidence calibrated (for the data-generating distribution $P$) if %
\begin{equation}
    \smash{\Prob(Y = c(X) \mid h(X)) = h(X)}.
    \label{eq:conf-calibration}
\end{equation} 
In other words, when the reported confidence $h(X)$ equals $p \in [0,1]$, then the fraction of instances where the predicted label is correct also equals $p$. \rev{Note that for an $L$-dimensional predictor $\h : \Xcal \to \Delta^{L-1}$, one would use $c(\cdot) = \argmax_{l \in [L]} h_l(\cdot)$ and $h(\cdot) = h_{c(\cdot)}(\cdot)$; ties are broken arbitrarily. Then $\h$ is confidence calibrated if the corresponding $(c, h)$ 
satisfies \eqref{eq:conf-calibration}. }

\rev{Confidence calibration is most applicable in high-accuracy settings where we trust the label prediction $c(x)$. For instance, if a high-accuracy cancer-grade-prediction model predicts a patient as having ``95\% grade III, 3\% grade II, and 2\% grade I", we would suggest the patient to undergo an invasive treatment. However, we may want to know (and control) the number of non-grade-III patients that were given this suggestion incorrectly. In other words, is $\text{Pr}(\text{cancer is not grade III} \mid  \text{cancer is predicted to be of grade III with confidence } 95\%)$ equal to $5\%$? It would appear that by focusing on the the probability of the predicted label, confidence calibration enforces such control.}

However, as we illustrate next, confidence calibration fails at this goal by providing a guarantee that is neither practically interpretable, nor actionable. %
Translating the probabilistic statement \eqref{eq:conf-calibration} into words, we ascertain that confidence calibration leads to guarantees of the form: ``if the confidence $h(X)$ in the top-label is 0.6, then the accuracy (frequency with which $Y$ equals $c(X)$) is 0.6". Such a guarantee is not very useful. Suppose a patient P is informed (based on their symptoms $X$), that they are most likely to have a certain disease D with probability 0.6. Further patient P is told that this score is confidence calibrated. P can now infer the following: ``among all patients who have probability 0.6 of having \emph{some unspecified} disease, the fraction who have \emph{that unspecified} disease is also 0.6." However, P is concerned only about disease D, and not about other diseases. That is, P wants to know the probability of having D \emph{among patients who were predicted to have disease D with confidence 0.6}, not among patients who were predicted to have \emph{some} disease with confidence 0.6. In other words, P cares about the occurrence of D among patients who were told the same thing that P has been told. It is tempting to wish that the confidence calibrated probability 0.6 has any bearing on what P cares about. However, this faith is misguided, as the above reasoning suggests, and further illustrated through the following example. %

\begin{example}%
\label{ex:conf-ECE-counterexample}
Suppose the instance space is $(X, Y) \in \{a, b\} \times \{1,2,\ldots\}$. ($X$ can be seen as the random patient, and $Y$ as the disease they are suffering from.) Consider a predictor $(c, h)$ and let the values taken by $(X, Y, c, h)$ be as follows: 
\begin{table}[H]
    \centering	
	    \begin{tabular}{c|c|c|c|c}
           Feature $x$ & $P(X= x)$ & Class prediction $c(x)$ & Confidence $h(x)$ & $P(Y = c(X) \mid X = x)$  \\
       	   \hline
    	   $a$ & $0.5$ & $1$  & $0.6$ & $0.2$ \\ \hline 
    	   $b$ & $0.5$ & $2$ & $0.6$ & $1.0$  \\ 
	    \end{tabular}
	\end{table}
\noindent The table specifies only the  probabilities ${P(Y = c(X) \mid X = x)}$; the probabilities $P(Y = l \mid X = x)$, $l \neq c(x)$, can be set arbitrarily. We verify that $(c, h)$ is  confidence calibrated: \[
{\Pr(Y = c(X) \mid h(X) = 0.6) = 0.5(\Pr(Y = 1 \mid X = a) + \Pr(Y = 2 \mid X = b)) = 0.5(0.2 + 1) = 0.6.}\] %
However, whether the actual instance is ${X = a}$ or $X = b$, \emph{the probabilistic claim of $0.6$ bears no correspondence with reality}. If ${X = a}$, ${h(X) = 0.6}$ is extremely overconfident since ${P(Y = 1 \mid X = a) = 0.2}$. Contrarily, if ${X = b}$, ${h(X) = 0.6}$ is extremely underconfident.%
\qed
\end{example}
\noindent The reason for the strange behavior above is that the probability $\Prob(Y = c(X) \mid h(X))$ is not interpretable from a decision-making perspective. In practice, we never report just the confidence $h(X)$, but also the class prediction $c(X)$ (obviously!). Thus it is more reasonable to talk about the %
conditional probability of $Y = c(X)$, given what is reported, that is \emph{both} $c(X)$ and $h(X)$. We make a small but critical change to \eqref{eq:conf-calibration}; we say that $(c, h)$ is \emph{top-label calibrated} if 
\begin{equation}
    \Prob(Y = c(X) \mid h(X), c(X)) = h(X). \label{eq:top-label-calibration}
\end{equation}
(See the disambiguating Remark~\ref{rem:terminology} on terminology.) Going back to the patient-disease example, top-label calibration would tell patient P the following: ``among all patients, who \emph{(just like you)} are predicted to have disease D with probability 0.6, the fraction who actually have disease D is also 0.6." Philosophically, it makes sense to condition on what is reported---both the top label and its confidence---because that is what is known to the recipient of the information; and there is no apparent justification for \emph{not} conditioning on both.

A commonly used metric for quantifying the miscalibration of a model is the expected-calibration-error (ECE) metric. The ECE associated with confidence calibration is defined as
 \begin{equation}
    \label{eq:conf-ECE}
    \text{conf-ECE}(c, h) := \E_X\abs{\Prob(Y = c(X) \mid h(X) ) -h(X)}.
\end{equation}
We define top-label-ECE (TL-ECE) in an analogous fashion, but also condition on $c(X)$:
\begin{equation}
    \label{eq:TL-ECE}
    \text{TL-ECE}(c, h) := \E_X\abs{\Prob(Y = c(X) \mid c(X), h(X)) -h(X)}.
\end{equation}
\noindent Higher values of ECE indicate worse calibration performance. The predictor in Example~\ref{ex:conf-ECE-counterexample} has $\text{conf-ECE}(c,h) = 0$. However, it has $\text{TL-ECE}(c,h) = 0.4$, revealing its miscalibration. More generally, it can be deduced as a straightforward consequence of Jensen's inequality that $\text{conf-ECE}(c, h)$ is always smaller than the $\text{TL-ECE}(c, h)$ (see Proposition~\ref{prop:top-label-conf-ece} in Appendix~\ref{appsec:proofs}). As illustrated by Example~\ref{ex:conf-ECE-counterexample}, the difference can be significant. In the following subsection we illustrate that the difference can be significant on a real dataset as well. First, we make a couple of remarks. %

\begin{remark}[ECE estimation using binning] 
\label{rem:ece-estimation}
Estimating the ECE requires estimating probabilities conditional on some prediction such as $h(x)$. A common strategy to do this is to \emph{bin} together nearby values of $h(x)$ using \emph{binning} schemes \citep[Section 2.1]{nixon2019measuring}, and compute a single estimate for the predicted and true probabilities using all the points in a bin. %
The calibration method we espouse in this work, histogram binning (HB), produces discrete predictions whose ECE can be estimated without further binning. Based on this, we use the following experimental protocol: we report unbinned ECE estimates while assessing HB, and binned ECE estimates for all other compared methods, which are continuous output methods (deep-nets, temperature scaling, etc). It is commonly understood that binning leads to underestimation of the effective ECE \citep{vaicenavicius2019bayescalibration, kumar2019calibration}. Thus, using unbinned ECE estimates for HB gives HB a disadvantage compared to the binned ECE estimates we use for other methods. (This further strengthens our positive results for HB.) The binning scheme we use is equal-width binning, where the interval $[0,1]$ is divided into $B$ equal-width intervals. Equal-width binning typically leads to lower ECE estimates compared to adaptive-width binning \citep{nixon2019measuring}. %
\end{remark}

\begin{remark}[Terminology] 
\label{rem:terminology}
The term conf-ECE was introduced by \citet{kull2019beyond}. Most works refer to conf-ECE as just ECE \citep{guo2017nn_calibration, nixon2019measuring, mukhoti2020calibrating, kumar2018trainable}. However, some papers refer to conf-ECE as top-label-ECE \citep{kumar2019calibration, zhang2020mix}, resulting in two different terms for the same concept. We call the older notion as conf-ECE, and \emph{our definition of top-label calibration/ECE \eqref{eq:TL-ECE} is different from previous ones.} %
\end{remark}

\begin{figure}[t]
\centering
\begin{subfigure}[b]{0.44\linewidth}
\includegraphics[width=\linewidth]{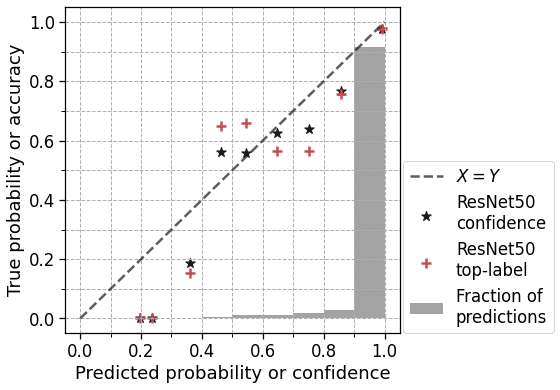}
\caption{Confidence reliability diagram (points marked $\bigstar$) and top-label reliability diagram (points marked $\textcolor{red}{+}$) for a ResNet-50 model on the CIFAR-10 dataset; see further details in points (a) and (b) below. The \textbf{gray bars} denote the fraction of predictions in each bin. The confidence reliability diagram (mistakenly) suggests better calibration than the top-label reliability diagram.}
\label{fig:conf-v-top-rel}
\end{subfigure}
\hspace{0.1cm}
\begin{subfigure}[b]{0.54\linewidth}
\centering
\includegraphics[width=0.9\linewidth,trim=0 0 0 5,clip]{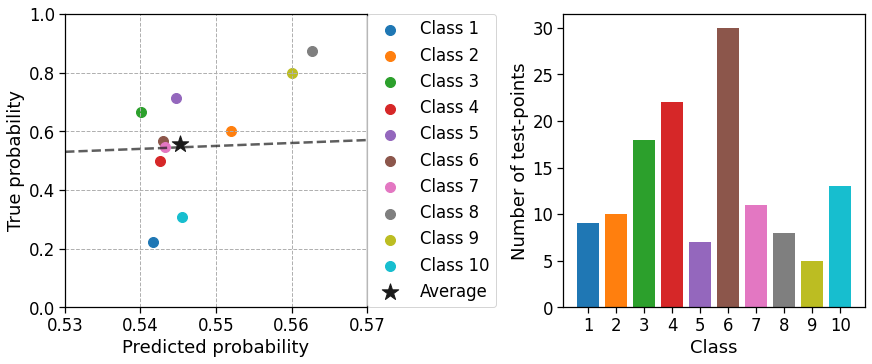}
\includegraphics[width=0.9\linewidth]{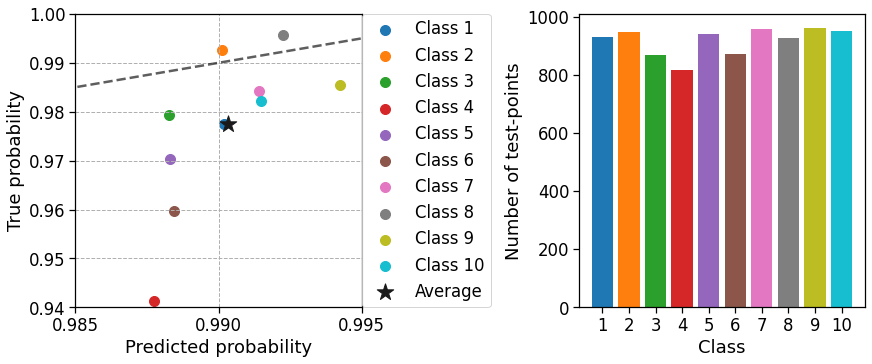}
\caption{Class-wise and zoomed-in version of Figure~\ref{fig:conf-v-top-rel} for bin 6 (top) and bin 10 (bottom); see further details in point (c) below. %
The $\bigstar$ markers are in the same position as Figure~\ref{fig:conf-v-top-rel}, and denote the average predicted and true probabilities. The colored points denote the predicted and true probabilities when seen class-wise. The histograms on the right show the number of test points per class within bins 6 and 10.}
\label{fig:conf-v-top-binwise}
\end{subfigure}
\caption{Confidence reliability diagrams misrepresent the effective miscalibration. }
\label{fig:confidence-v-calibration}
\end{figure}
\subsection{An illustrative experiment with ResNet-50 on CIFAR-10}
\label{subsec:illustrative-experiment}
We now compare confidence and top-label calibration using ECE estimates and reliability diagrams \citep{Niculescu2005predicting}. 
This experiment can be seen as a less malignant version of Example~\ref{ex:conf-ECE-counterexample}. Here, confidence calibration is not completely meaningless, but can nevertheless be misleading. 
Figure~\ref{fig:confidence-v-calibration}  illustrates the (test-time) calibration performance of a ResNet-50 model \citep{he2016deep} on the CIFAR-10 dataset \citep{krizhevsky2009learning}. In the following summarizing points, the $(c, h)$ correspond to the ResNet-50 model. %
\begin{enumerate}[label=(\alph*),leftmargin=.5cm,topsep=-1pt]
    \item The $\bigstar$ markers in Figure~\ref{fig:conf-v-top-rel} form the \textbf{confidence reliability diagram} \citep{guo2017nn_calibration}, constructed as follows.  First, the $h(x)$ values on the test set are binned into one of $\smash{B=10}$ bins, $\smash{[0, 0.1), [0.1, 0.2), \ldots, [0.9, 1]}$, depending on the interval to which $h(x)$ belongs. The gray bars in Figure~\ref{fig:conf-v-top-rel} indicate the fraction of $h(x)$ values in each bin---nearly $92\%$  points belong to bin $[0.9, 1]$ and no points belong to bin $[0, 0.1)$. %
Next, for every bin $b$, we plot $\bigstar = (\text{conf}_b, \text{acc}_b)$, which are the plugin estimates of $\Exp{}{h(X) \mid h(X) \in \text{Bin } b}$ and  %
$\Prob(Y = c(X) \mid h(X) \in \text{Bin } b)$ respectively. The dashed $X=Y$ line indicates perfect confidence calibration.  %
\item The $\textcolor{red}{+}$ markers in Figure~\ref{fig:conf-v-top-rel} form the \textbf{top-label reliability diagram}. %
Unlike the confidence reliability diagram, the top-label reliability diagram shows the average \emph{miscalibration} across classes in a given bin. For a given class $l$ and bin $b$, define 
\begin{equation*}
    \Delta_{b, l} :=   |\widehat{\Prob}(Y = c(X) \mid c(X) = l, h(X) \in \text{Bin } b) - \widehat{\E}\squarebrack{h(X) \mid c(X) = l, h(X) \in \text{Bin }b}|,
\end{equation*}
where $\widehat{\Prob}, \widehat{\E}$ denote empirical estimates based on the test data. The overall miscalibration is then
\[
 \Delta_b := \text{Weighted-average}(\Delta_{b,l}) = \textstyle\sum_{l \in [L]}  \widehat{\Prob}(c(X) = l \mid h(X) \in \text{Bin } b) \ \Delta_{b, l}.
\]
Note that $\Delta_b$ is always non-negative and does not indicate whether the overall miscalibration occurs due to under- or over-confidence; also, if the absolute-values were dropped from $\Delta_{b,l}$, then $\Delta_b$ would simply equal $\text{acc}_b - \text{conf}_b$. In order to plot $\Delta_b$ in a reliability diagram, we obtain the direction for the corresponding point from the confidence reliability diagram. Thus for every $\bigstar =  (\text{conf}_b, \text{acc}_b)$, we plot $\textcolor{red}{+} = (\text{conf}_b, \text{conf}_b + \Delta_b)$ if $\text{acc}_b > \text{conf}_b$ and $\textcolor{red}{+} = (\text{conf}_b, \text{conf}_b - \Delta_b)$ otherwise, for every $b$. This scatter plot of the $\textcolor{red}{+}$'s gives us the top-label reliability diagram. %

Figure~\ref{fig:conf-v-top-rel} shows that there is a \textbf{visible increase in miscalibration when going from confidence calibration to top-label calibration}. To understand why this change occurs, Figure~\ref{fig:conf-v-top-binwise} zooms into the sixth bin  ($h(X) \in [0.5, 0.6)$) and bin 10 ($h(X) \in [0.9, 1.0]$), as described next. 

\item Figure~\ref{fig:conf-v-top-binwise} displays the \textbf{class-wise top-label reliability diagrams} for bins 6 and 10. Note that for bin 6, the $\bigstar$ marker is nearly on the $X=Y$ line, indicating that the overall accuracy matches the overall confidence of $0.545$. However, the true accuracy when class 1 was predicted is $\approx 0.2$ and the true accuracy when class 8 was predicted is $\approx 0.9$ (a very similar scenario to Example~\ref{ex:conf-ECE-counterexample}). For bin 10, the $\bigstar$ marker indicates a miscalibration of $\approx 0.01$; however, when class 4 was predicted (roughly 8\% of all test-points) the miscalibration is $\approx 0.05$. 
\end{enumerate}

\begin{figure}[t]
\includegraphics[width=\linewidth, trim=70 430 0 0, clip]{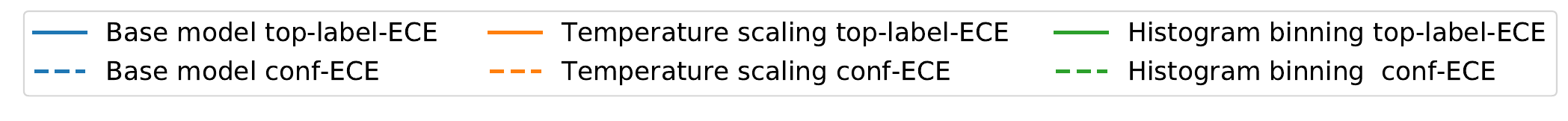}
\includegraphics[width=0.24\textwidth]{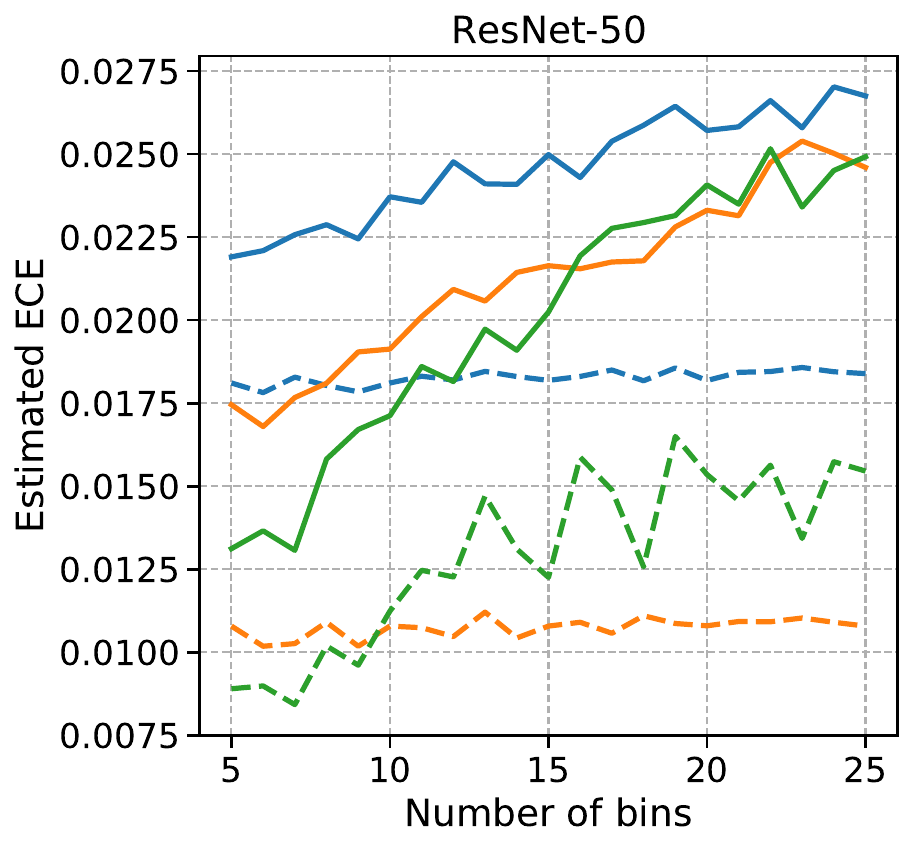}
\includegraphics[width=0.24\textwidth]{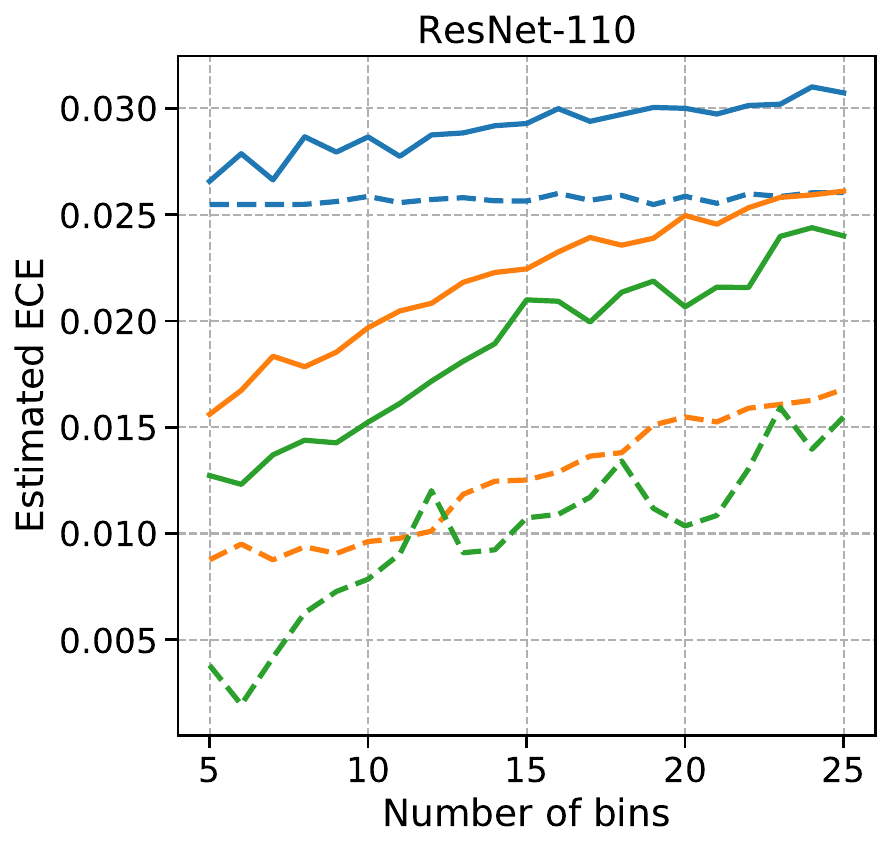}
\includegraphics[width=0.24\textwidth]{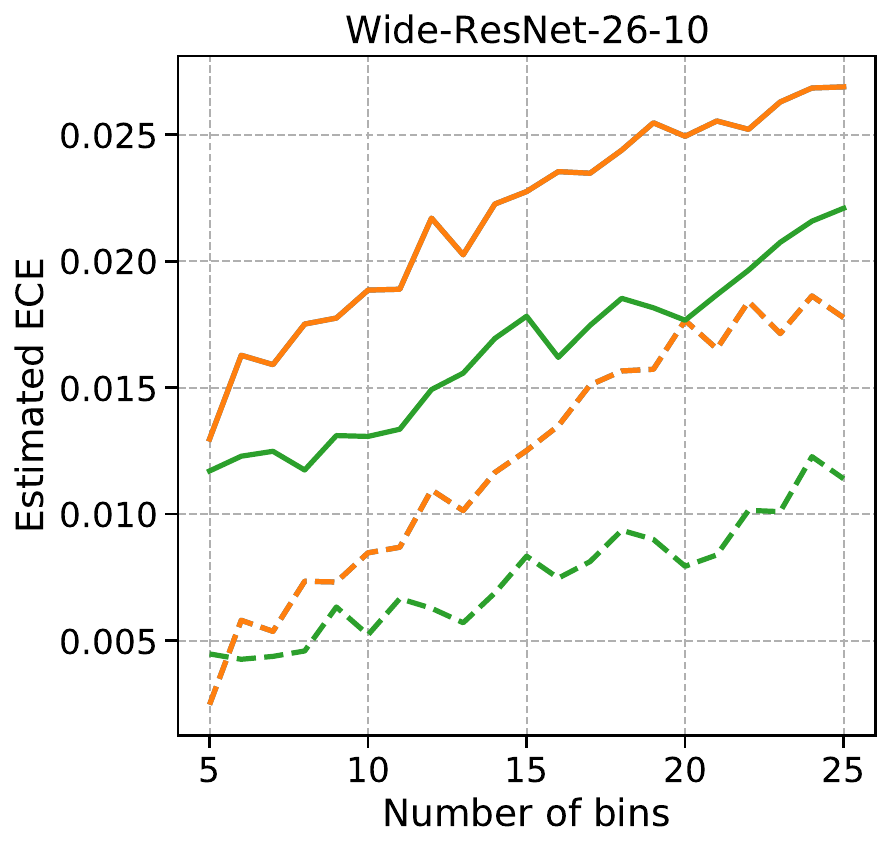}
\includegraphics[width=0.24\textwidth]{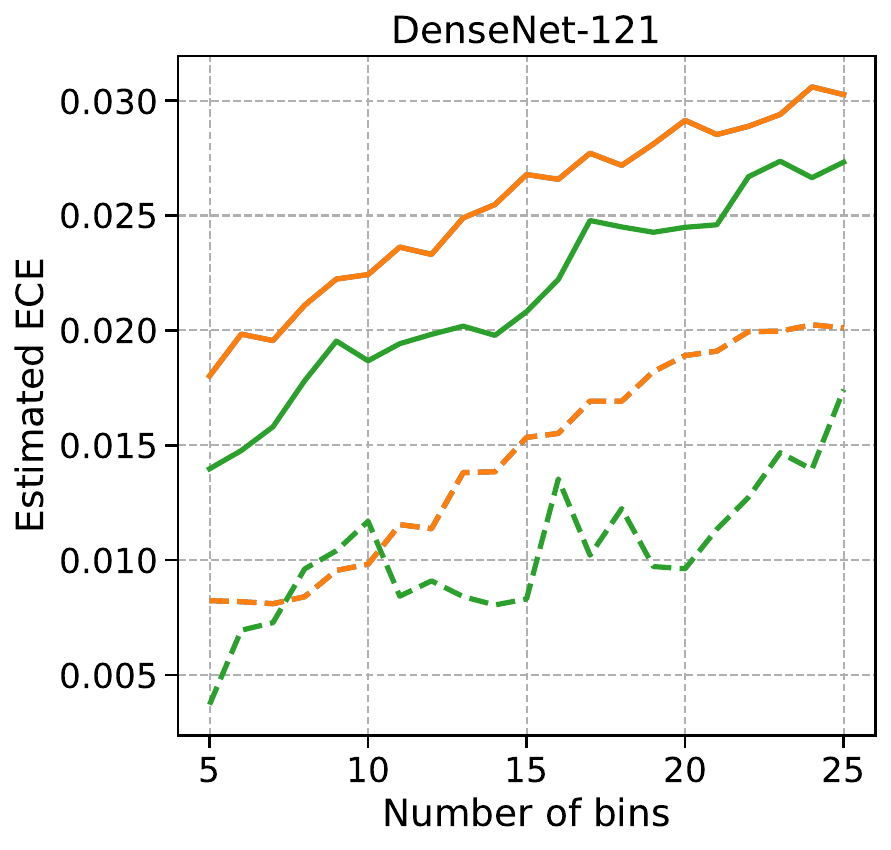}
\caption{Conf-ECE (dashed lines) and TL-ECE (solid lines) of four deep-net architectures on CIFAR-10, as well as with recalibration using histogram binning and temperature scaling. The TL-ECE is often 2-3 times the conf-ECE, depending on the number of bins used to estimate ECE, and the architecture. Top-label histogram binning typically performs better than temperature scaling. }
\label{fig:cifar10-deepnets-main-paper}
\end{figure}

Figure~\ref{fig:cifar10-deepnets-main-paper} displays the aggregate effect of the above phenomenon (across bins and classes) through estimates of the conf-ECE and TL-ECE. The precise experimental setup is described in Section~\ref{sec:experiments}. These plots display the ECE estimates of the base model, as well as the base model when recalibrated using temperature scaling \citep{guo2017nn_calibration} and our upcoming formulation of top-label histogram binning (Section~\ref{sec:m2b-calibration}). Since ECE estimates depend on the number of bins $B$ used (see \citet{roelofs2020mitigating} for empirical work around this), we plot the ECE estimate for every value $B \in [5, 25]$ in order to obtain clear and unambiguous results. %
We find that the TL-ECE %
is significantly higher than the conf-ECE for most values of $B$, the architectures, and the pre- and post- recalibration models. This figure also previews the performance of our forthcoming top-label histogram binning algorithm. Top-label HB has smaller estimated TL-ECE than temperature scaling for most values of $B$ and the architectures. Except for ResNet-50, the conf-ECE estimates are also better. %

To summarize, top-label calibration captures the intuition of confidence calibration by focusing on the predicted class. However, top-label calibration also conditions on the predicted class, which is always part of the prediction in any practical setting. Further, TL-ECE estimates can be substantially different from conf-ECE estimates. Thus, while it is common to compare predictors based on the conf-ECE, the TL-ECE comparison is more meaningful, and can potentially be different.

\section{Calibration algorithms from calibration metrics}
\label{sec:m2b-calibration}
In this section, we unify a number of notions of multiclass calibration as multiclass-to-binary (or M2B) notions, and propose a general-purpose calibration algorithm that achieves the corresponding M2B notion of calibration. The M2B framework yields multiple novel post-hoc calibration algorithms, each of which is tuned to a specific M2B notion of calibration. %

\subsection{Multiclass-to-binary (M2B) notions of calibration}
In Section~\ref{sec:conf-to-top-label}, we defined confidence calibration \eqref{eq:conf-calibration} and top-label calibration \eqref{eq:top-label-calibration}. These notions verify calibration claims for the highest predicted probability. Other popular notions of calibration verify calibration claims for other entries in the full $L$-dimensional prediction vector. A predictor $\h = (h_1, h_2, \ldots, h_L)$ is said to be class-wise calibrated \citep{kull2017beyond} if
\begin{equation}
     \text{(class-wise calibration)} \quad \text{$\forall l \in [L]$, }\Prob(Y = l \mid h_l(X)) = h_l(X). \label{eqn:classwise-calibration}
\end{equation}
\noindent Another recently proposed notion is top-$K$ confidence calibration \citep{gupta2021calibration}. For some $l \in [L]$, let $c^{(l)}: \Xcal \to [L]$ denote the $l$-th highest class prediction, and let $h^{(l)}: \Xcal \to [L]$ denote the confidence associated with it ($c = c^{(1)}$ and $h = h^{(1)}$ are special cases). For a given $K \leq L$, 
\begin{equation}
     \text{(top-$K$-confidence calibration)}\quad\text{$\forall k \in [K]$, }\Prob(Y = c^{(k)}(X) \mid h^{(k)}(X)) = h^{(k)}(X). \label{eqn:top-k-confidence-calibration}
\end{equation}
As we did in Section~\ref{sec:conf-to-top-label} for confidence$\rightarrow$top-label, top-$K$-confidence calibration can be modified to the more interpretable top-$K$-label calibration by further conditioning on the predicted labels: 
\begin{equation}
     \text{(top-$K$-label calibration)} \quad \text{$\forall k \in [K]$, }\Prob(Y = c^{(k)}(X) \mid h^{(k)}(X), c^{(k)}(X)) = h^{(k)}(X). \label{eqn:top-k-label-calibration}
\end{equation}

\begin{table}[t]
    \centering
	    \begin{tabular}{l|l|l|l}
       	    Calibration notion & Quantifier & Prediction ($\text{pred}(X)$) & Binary calibration statement   \\  \hhline{|=|=|=|=|}
            Confidence & - & $h(X)$ & $P(Y = c(X) \mid \text{pred}(X)) = h(X)$ \\ \hline 
            Top-label & - & $c(X), h(X)$ & $P(Y = c(X) \mid \text{pred}(X)) = h(X)$ \\ \hline 
            Class-wise & $\forall l \in [L]$ & $h_l(X)$ & $P(Y = l \mid \text{pred}(X)) = h_l(X)$ \\ \hline 
            Top-$K$-confidence & $\forall k \in [K]$ & $h^{(k)}(X)$ & $P(Y = c^{(k)}(X) \mid \text{pred}(X)) = h^{(k)}(X)$ \\ \hline 
            Top-$K$-label & $\forall k \in [K]$ & $c^{(k)}(X), h^{(k)}(X)$ & $P(Y = c^{(k)}(X) \mid \text{pred}(X)) = h^{(k)}(X)$ %
            \end{tabular}
    \caption{Multiclass-to-binary (M2B) notions internally verify one or more binary calibration statements/claims. The statements in the rightmost column are required to hold almost surely.}
    \label{table:m2b-notions}
	\end{table}

Each of these notions reduce multiclass calibration to one or more binary calibration requirements, where each binary calibration requirement corresponds to \textbf{verifying if the distribution of $Y$, 
conditioned on some prediction $\text{pred}(X)$,
 satisfies a single binary calibration claim associated with $\text{pred}(X)$}.
 Table~\ref{table:m2b-notions} illustrates how the calibration notions discussed so far internally verify a number of binary calibration claims, making them M2B notions. %
For example, for class-wise calibration, for every $l \in [L]$, the conditioning is on $\text{pred}(X) = h_l(X)$, and a single binary calibration statement is verified: $P(Y = l \mid \text{pred}(X)) = h_l(X)$. Based on this property, we call each of these notions multiclass-to-binary or M2B notions. %

The notion of canonical calibration mentioned in the introduction is \emph{not} an M2B notion. Canonical calibration is discussed in detail in Appendix~\ref{appsec:umd-joint}. Due to the conditioning on a multi-dimensional prediction, non-M2B notions of calibration are harder to achieve or verify. For the same reason, it is possibly easier for humans to interpret binary calibration claims when taking decisions/actions. %

\subsection{Achieving M2B notions of calibration using M2B calibrators}
\label{subsec:m2b-calibrator}

\rev{The M2B framework illustrates how multiclass calibration can typically be viewed via a reduction to binary calibration. The immediate consequence of this reduction is that one can now solve multiclass calibration problems by leveraging the well-developed methodology for binary calibration.}

The upcoming M2B calibrators belong to the standard recalibration or post-hoc calibration setting. In this setting, one starts with a fixed pre-learnt base model $\g : \Xcal \to \Delta^{L-1}$. %
The base model $\g$ can correspond to a deep-net, a random forest, or any 1-v-all (one-versus-all) binary classification model such as logistic regression. The base model is typically optimized for classification accuracy and may not be calibrated. The goal of post-hoc calibration is to use some given \emph{calibration data} $\Dcal = (X_1, Y_1), (X_2, Y_2), \ldots ,  (X_n, Y_n) \in (\Xcal \times [L])^n$, typically data on which $\g$ was not learnt, to recalibrate $\g$. In practice, the calibration data is usually the same as the validation data. %

To motivate M2B calibrators, suppose we want to verify if $\g$ is calibrated on a certain test set, based on a given M2B notion of calibration. Then, the verifying process will split the test data into a number of sub-datasets, each of which will verify one of the binary calibration claims. In Appendix~\ref{appsec:illustrative-example}, we argue that the calibration data can also be viewed as a test set, and every step in the verification process can be used to provide a signal for improving calibration.

\floatstyle{plaintop}
\newfloat{customized}{t}{lop}
\captionsetup[customized]{labelformat=empty}

\begin{customized}
\centering
\caption{\underline{M2B calibrators: Post-hoc multiclass calibration using binary calibrators}}
\end{customized}
\begin{customized}
\centering
\caption{\textbf{Input in each case: }Binary calibrator $\Acal_{\{0,1\}} : [0,1]^{\Xcal} \times (\Xcal \times \{0, 1\})^{\star} \to [0,1]^{\Xcal}$, base multiclass predictor $\g : \Xcal \to \Delta^{L-1}$, calibration data $\Dcal = (X_1, Y_1), \ldots, (X_n, Y_n)$.}%
  \begin{minipage}{0.48\textwidth}
   \begin{algorithm}[H]
	\SetAlgoLined
	$c \gets \text{classifier or top-class based on } \g$\;
	$g \gets \text{top-class-probability based on } \g$\;
    $\Dcal' \gets \{(X_i, \indicator{Y_i = c(X_i)}) : i \in [n]\}$\;
	$h \gets \Acal_{\{0,1\}}(g, \Dcal')$\;
	\Return $(c, h)$\;
	\caption{Confidence calibrator}
	\label{alg:general-confidence}
\end{algorithm}
\vspace{0.5cm}
  \begin{algorithm}[H]
	\SetAlgoLined
	$c \gets \text{classifier or top-class based on } \g$\;
	$g \gets \text{top-class-probability based on } \g$\;
	\For{$l \gets 1$ \textbf{to} $L$}{
	    $\Dcal_l \gets \{(X_i, \indicator{Y_i = l}) : c(X_i) = l)\}   \label{line:a}$\; $h_l \gets \Acal_{\{0,1\}}(g, \Dcal_l)$\;
	}
	$h(\cdot) \gets h_{c(\cdot)}(\cdot)$ \text{(predict $h_l(x)$ if $c(x) =l$)}\;
	\Return $(c, h)$\;
	\caption{Top-label calibrator}
	\label{alg:general-top-label}
\end{algorithm}
\end{minipage}
\hspace{0.5cm}
\begin{minipage}{0.47\textwidth}
\begin{algorithm}[H]
	\SetAlgoLined
    Write $\g = (g_1, g_2, \ldots, g_L)$\;
	\For{$l \gets 1$ \textbf{to} $L$}{
        $\Dcal_l \gets \{(X_i, \indicator{Y_i = l}) : i \in [n]\}$\;
	    $h_l \gets \Acal_{\{0,1\}}(g_l, \Dcal_l)$\;
	}
	\Return $(h_1, h_2, \ldots, h_L)$\;
	\caption{Class-wise calibrator}
	\label{alg:general-class-wise}
\end{algorithm}
\begin{algorithm}[H]
	\SetAlgoLined
    Write $\g = (g_1, g_2, \ldots, g_L)$\;
	\For{$l \gets 1$ \textbf{to} $L$}{
        $\Dcal_l \gets \{(X_i, \indicator{Y_i = l}) : i \in [n]\}$\;
	    $\widetilde{h}_l \gets \Acal_{\{0,1\}}(g_l, \Dcal_l)$\;
	}
	Normalize: for every $l \in [L]$, $h_l(\cdot) := \widetilde{h}_l(\cdot)/\sum_{k=1}^L \widetilde{h}_k(\cdot)$\label{line:normalization}\;
	\Return $(h_1, h_2, \ldots, h_L)$\; 
	\caption{Normalized calibrator}
	\label{alg:general-classical}
\end{algorithm}
\end{minipage}
\end{customized}

M2B calibrators take the form of \emph{wrapper} methods that work on top of a given binary calibrator. 
Denote an arbitrary black-box binary calibrator as $\Acal_{\{0,1\}} : [0,1]^{\Xcal} \times (\Xcal \times \{0,1\})^{\star}  \to  [0,1]^{\Xcal}$, where the first argument is a mapping $\Xcal \to [0,1]$ that denotes a (miscalibrated) binary predicor, and the second argument is a calibration data sequence of arbitrary length. The output is a (better calibrated) binary predictor. Examples of $\Acal_{\{0,1\}}$ are histogram binning \citep{zadrozny2001obtaining}, isotonic regression \citep{zadrozny2002transforming}, and Platt scaling \citep{Platt99probabilisticoutputs}. In the upcoming descriptions, we use the indicator function $\indicator{a = b} \in \{0,1\}$ which takes the value $1$ if $a=b$, and $0$ if $a\neq b$. 

\rev{The general formulation of our M2B calibrator is delayed to Appendix~\ref{appsec:m2b-calibration} since the description is a bit involved. To ease readability and adhere to the space restrictions, in the main paper we describe the calibrators corresponding to top-label, class-wise, and confidence calibration (Algorithms~\ref{alg:general-confidence}--\ref{alg:general-class-wise}). Each of these calibrators are different from the \emph{classical} M2B calibrator (Algorithm~\ref{alg:general-classical}) that has been used by \citet{zadrozny2002transforming}, \citet{guo2017nn_calibration}, \citet{kull2019beyond}, and most other papers we are aware of, with the most similar one being Algorithm~\ref{alg:general-class-wise}. Top-$K$-label and top-$K$-confidence calibrators are also explicitly described in Appendix~\ref{appsec:m2b-calibration} (Algorithms~\ref{alg:general-top-k-label} and ~\ref{alg:general-top-k-confidence}).}

Top-label calibration requires that for every class $l \in [L]$, $\Prob(Y=l \mid c(X)=l, h(X)) = h(X)$. Thus, to achieve top-label calibration, we must solve $L$ calibration problems. Algorithm~\ref{alg:general-top-label} constructs $L$ datasets $\{\Dcal_l: l \in [L]\}$ (line~\ref{line:a}). The features in $\Dcal_l$ are the $X_i$'s for which $c(X_i) = l$, and the labels are $\indicator{Y_i = l}$. Now for every $l \in [L]$, we calibrate $g$ to  $h_l : \Xcal \to [0,1]$ using $\Dcal_l$ and any binary calibrator. The final probabilistic predictor is $h(\cdot) = h_{c(\cdot)}(\cdot)$ (that is, it predicts $h_l(x)$ if $c(x) = l$). The top-label predictor $c$ does not change in this process. Thus the accuracy of $(c,h)$ is the same as the accuracy of $\g$ irrespective of which $\Acal_{\{0,1\}}$ is used. Unlike the top-label calibrator, the confidence calibrator merges all classes together into a single dataset $\Dcal' = \bigcup_{l \in [L]} \Dcal_l$. %

To achieve class-wise calibration, Algorithm~\ref{alg:general-class-wise} also solves $L$ calibration problems, but these correspond to satisfying $\Prob(Y=l \mid h_l(X)) = h_l(X)$. Unlike top-label calibration, the dataset $D_l$ for class-wise calibration contains all the $X_i$'s (even if $c(X_i) \neq l$), and $h_l$ is passed to $\Acal_{\{0,1\}}$ instead of $h$. Also, unlike confidence calibration, $Y_i$ is replaced with $\indicator{Y_i = l}$ instead of $\indicator{Y_i = c(X_i)}$. The overall process is similar to reducing multiclass classification to $L$ 1-v-all binary classification problem, but our motivation is intricately tied to the notion of class-wise calibration.

Most popular empirical works that have discussed binary calibrators for multiclass calibration have done so using the normalized calibrator, Algorithm~\ref{alg:general-classical}. This is almost identical to Algorithm~\ref{alg:general-class-wise}, except that there is an additional normalization step (line~\ref{line:normalization} of Algorithm~\ref{alg:general-classical}). This normalization was first proposed by \citet[Section~5.2]{zadrozny2002transforming}, and has been used unaltered by most other works\footnote{the only exception we are aware of is the recent work of \citet{patel2021multi} who also suggest skipping normalization (see their Appendix A1); however they use a common I-Max binning scheme across classes, whereas in Algorithm~\ref{alg:general-class-wise} the predictor $h_l$ for each class is learnt completely independently of other classes} where the goal has been to simply compare direct multiclass calibrators such as temperature scaling, Dirichlet scaling, etc., to a calibrator based on binary methods (for instance, see Section 4.2 of \citet{guo2017nn_calibration}). In contrast to these papers, we investigate multiple M2B reductions in an effort to identify the right reduction of multiclass calibration to binary calibration. 

\rev{To summarize, the M2B characterization immediately yields a novel and different calibrator for every M2B notion. %
In the following section, we instantiate M2B calibrators on the binary calibrator of histogram binning (HB), leading to two new algorithms: top-label-HB and class-wise-HB, that achieve strong empirical results and satisfy distribution-free calibration guarantees.}%

\section{Experiments: M2B calibration with histogram binning}%
\begin{table}[t]
\centering
\begin{tabular}{c|c|c|c|c|c|c|c|c}
\textbf{Metric}                         & \textbf{Dataset}                    & \textbf{Architecture}      & \textbf{Base} & \textbf{TS}& \textbf{VS} & \textbf{DS} & \textbf{N-HB}  & \textbf{TL-HB}  \\ \hline %
\multirow{8}{1cm}{Top-label-ECE} & \multirow{4}{*}{CIFAR-10}  & ResNet-50         &  0.025 & 0.022  & 0.020 & 0.019 & \textbf{0.018} &  0.020 \\ \cline{3-9} 
                               &                            & ResNet-110        &      0.029 & 0.022 & 0.021 & 0.021 & \textbf{0.020} & 0.021 
    \\ \cline{3-9} 
                               &                            & WRN-26-10 & 0.023 & 0.023 & 0.019 & 0.021 & \textbf{0.012} & 0.018    \\ \cline{3-9} 
                               &                            & DenseNet-121      &   0.027 & 0.027 & 0.020 & 0.020 & \textbf{0.019} & 0.021       \\ \cline{2-9} 
                               & \multirow{4}{*}{CIFAR-100} & ResNet-50         &    0.118 & 0.114  & 0.113 & 0.322 & \textbf{0.081}    & 0.143  \\ \cline{3-9} 
                               &                            & ResNet-110        &  0.127 & 0.121  & 0.115 & 0.353 & \textbf{0.093}  & 0.145     \\ \cline{3-9} 
                               &                            & WRN-26-10 &     0.103 & 0.103 & 0.100 & 0.304 & \textbf{0.070  } & 0.129   \\ \cline{3-9} 
                               &                            & DenseNet-121      &    0.110 & 0.110 & 0.109 & 0.322 & \textbf{0.086} & 0.139      \\ \hline %

\multirow{8}{1cm}{Top-label-MCE} & \multirow{4}{*}{CIFAR-10}  & ResNet-50         & 0.315 & 0.305 & 0.773 & 0.282 & 0.411 & \textbf{0.107}  \\ \cline{3-9} 
                               &                            & ResNet-110        &  0.275 & 0.227  & 0.264 & 0.392 & 0.195 & \textbf{0.077} \\ \cline{3-9} 
                               &                            & WRN-26-10 &   0.771 & 0.771 & 0.498 & 0.325 & 0.140  & \textbf{0.071}  \\ \cline{3-9} 
                               &                            & DenseNet-121      & 0.289 & 0.289  & 0.734 & 0.294 & 0.345 & \textbf{0.087} \\ \cline{2-9} 
                               & \multirow{4}{*}{CIFAR-100} & ResNet-50         &    0.436 & 0.300 & \textbf{0.251} & 0.619 & 0.397 & 0.291  \\ \cline{3-9} 
                               &                            & ResNet-110        &0.313 & \textbf{0.255}  & 0.277 & 0.557 & 0.266 & 0.257 \\ \cline{3-9} 
                               &                            & WRN-26-10 &   0.273 & \textbf{0.255} & 0.256 & 0.625 & 0.287  & 0.280 \\ \cline{3-9} 
                               &                            & DenseNet-121      &  0.279 & \textbf{0.231} & 0.235 & 0.600 & 0.320 & 0.289  \\ %
\end{tabular}
\caption{Top-label-ECE and top-label-MCE for deep-net models (above: `Base') and various post-hoc calibrators: temperature-scaling (TS), vector-scaling (VS), Dirichlet-scaling (DS), top-label-HB (TL-HB), and normalized-HB (N-HB). %
Best performing method in each row is in \textbf{bold}.}
\label{tab:top-label-brier}
\end{table}
\begin{table}[t]
\centering
\begin{tabular}{c|c|c|c|c|c|c|c|c}
\textbf{Metric}                         & \textbf{Dataset}                    & \textbf{Architecture}      & \textbf{Base} & \textbf{TS}& \textbf{VS} & \textbf{DS} & \textbf{N-HB}  & \textbf{CW-HB}  \\ \hline %
\multirow{8}{1cm}{Class-wise-ECE $\times 10^2$} & \multirow{4}{*}{CIFAR-10}  & ResNet-50         & 0.46 & 0.42  & 0.35 & 0.35  & 0.50 & \textbf{0.28} \\ \cline{3-9} 
                               &                            & ResNet-110        &  0.59 & 0.50 & 0.42 & 0.38  & 0.53  & \textbf{0.27}  \\ \cline{3-9} 
                               &                            & WRN-26-10 &  0.44 & 0.44  & 0.35 & 0.39 & 0.39 &\textbf{ 0.28}  \\ \cline{3-9} 
                               &                            & DenseNet-121      &   0.46 & 0.46  & \textbf{0.36} & \textbf{0.36} & 0.48 & \textbf{0.36} \\ \cline{2-9} 
                               & \multirow{4}{*}{CIFAR-100} & ResNet-50         &    0.22 & 0.20  & 0.20 & 0.66 & 0.23 & \textbf{0.16}  \\ \cline{3-9} 
                               &                            & ResNet-110        &  0.24 & 0.23 & 0.21 & 0.72  & 0.24 & \textbf{0.16} \\ \cline{3-9} 
                               &                            & WRN-26-10 &    0.19 & 0.19 & 0.18 & 0.61 & 0.20 & \textbf{0.14} \\ \cline{3-9} 
                               &                            & DenseNet-121      &   0.20 & 0.21  & 0.19 & 0.66 & 0.24  & \textbf{0.16}   \\ %
                               
\end{tabular}
\caption{Class-wise-ECE for deep-net models and various post-hoc calibrators. All methods are same as Table~\ref{tab:top-label-brier}, except TL-HB is replaced with class-wise-HB (CW-HB). %
}
\label{tab:class-wise-brier}
\end{table}
\label{sec:experiments}

Histogram binning or HB was proposed by \citet{zadrozny2001obtaining} with strong empirical results for binary calibration. In HB, a base binary calibration model $g : \Xcal \to [0,1]$ is used to partition the calibration data into a number of bins so that each bin has roughly the same number of points. Then, for each bin, the probability of $Y=1$ is estimated using the empirical distribution on the calibration data. This estimate forms the new calibrated prediction for that bin. Recently, \citet{gupta2021distribution} showed that HB satisfies strong distribution-free calibration guarantees, which are otherwise impossible for scaling methods \citep{gupta2020distribution}. %

Despite these results for binary calibration, studies for multiclass calibration have reported that HB typically performs worse than scaling methods such as  temperature scaling (TS), vector scaling (VS), and Dirichlet scaling (DS) \citep{kull2019beyond, roelofs2020mitigating, guo2017nn_calibration}. %
In our experiments, we find that the issue is not HB but the M2B wrapper used to produce the HB baseline. %
With the right M2B wrapper, HB beats TS, VS, and DS. A number of calibrators have been proposed recently \citep{zhang2020mix, rahimi2020intra, patel2021multi, gupta2021calibration}, but VS and DS continue to remain strong baselines which are often close to the best in these papers. We do not compare to each of these calibrators; our focus is on the M2B reduction and the message that the baselines dramatically improve with the right M2B wrapper. %

We use three metrics for comparison: the first is top-label-ECE or TL-ECE (defined in \eqref{eq:TL-ECE}), which we argued leads to a more meaningful comparison compared to conf-ECE. Second, we consider the more stringent maximum-calibration-error (MCE) metric that assesses the worst calibration across predictions (see more details in Appendix~\ref{appsec:mce-comments}). For top-label calibration MCE is given by
    \[\text{TL-MCE}(c, h) := \max_{l \in [L]}\sup_{r \in \text{Range}(h)}\abs{\Prob(Y = l \mid c(X) = l, h(X) = r) - r}.\]
To assess class-wise calibration, we use class-wise-ECE defined as the average calibration error across classes: %
   \[ \text{CW-ECE}(c, \h) := L^{-1}\textstyle \sum_{l = 1}^L\E_X\abs{\Prob(Y = l \mid h_l(X)) -h_l(X)}.\]
All ECE/MCE estimation is performed as described in Remark~\ref{rem:ece-estimation}. For further details, see Appendix~\ref{appsec:ece-binning}.

\textbf{Formal algorithm and theoretical guarantees.} Top-label-HB (TL-HB) and class-wise-HB (CW-HB) are explicitly stated in Appendices~\ref{sec:umd-top-label} and~\ref{appsec:umd-class-wise} respectively; these are instantiations of the top-label calibrator and class-wise calibrator with HB.  \rev{N-HB is the the normalized calibrator (Algorithm~\ref{alg:general-classical}) with HB, which is the same as CW-HB, but with an added normalization step.} %
In the Appendix, we extend the binary calibration guarantees of \citet{gupta2021distribution} to TL-HB and CW-HB (Theorems~\ref{thm:umd-top-label} and \ref{thm:umd-class-wise}). %
We informally summarize one of the results here: if there are at least $k$ calibration points-per-bin, then the expected-ECE is bounded as: $\Exp{}{\text{(TL-) or (CW-) ECE}} \leq \sqrt{1/2k}$, for TL-HB and CW-HB respectively. \rev{The outer $\E$ above is an expectation over the calibration data, and corresponds to the randomness in the predictor learnt on the calibration data. Note that the ECE itself is an expected error over an unseen i.i.d. test-point $(X, Y) \sim P$.} %

\textbf{Experimental details.} We experimented on the CIFAR-10 and CIFAR-100 datasets, which have 10 and 100 classes each. The base models are deep-nets with the following architectures: ResNet-50, Resnet-110, Wide-ResNet-26-10 (WRN) \citep{zagoruyko2016wide}, and DenseNet-121 \citep{huang2017densely}. \rev{Both CIFAR datasets consist of 60K (60,000) points, which are split as 45K/5K/10K to form the train/validation/test sets. The validation set was used for post-hoc calibration and the test set was used for evaluation through ECE/MCE estimates.} Instead of training new models, we used the pre-trained models of \citet{mukhoti2020calibrating}. We then ask: \emph{``which post-hoc calibrator improves the calibration the most?"} %
We used their Brier score and focal loss models in our experiments (\citet{mukhoti2020calibrating} report that these are the empirically best performing loss functions). \emph{All results in the main paper are with Brier score, and results with focal loss are in Appendix~\ref{appsec:focal-table}}.  %
Implementation details for TS, VS, and DS are in Appendix~\ref{appsec:additional-exps}.  %

\textbf{Findings.} In Table~\ref{tab:top-label-brier}, we report the binned ECE and MCE estimates when $B = 15$ bins are used by HB, and for ECE estimation. We make the following observations: 
\begin{enumerate}[label=(\alph*),leftmargin=.5cm,topsep=-1pt,itemsep=-1pt]
    \item For TL-ECE, N-HB is the best performing method for both CIFAR-10 and CIFAR-100. While most methods perform similarly across architectures for CIFAR-10, there is high variation in CIFAR-100. DS is the worst performing method on CIFAR-100, but TL-HB also performs poorly. We believe that this could be because the data splitting scheme of the TL-calibrator (line \ref{line:a} of Algorithm~\ref{alg:general-top-label}) splits datasets across the predicted classes, and some classes in CIFAR-100  occur very rarely. This is further discussed in Appendix~\ref{subsec:poor-performance-cifar-100}.
    
    \item For TL-MCE, TL-HB is the best performing method on CIFAR-10, by a huge margin. For CIFAR-100, TS or VS perform slightly better than TL-HB. Since HB ensures that each bin gets roughly the same number of points, the predictions are well calibrated across bins, leading to smaller TL-MCE. A similar observation was also made by \citet{gupta2021distribution}. 
    
    \item For CW-ECE, CW-HB is the best performing method across the two datasets and all four architectures. The N-HB method which has been used in many CW-ECE baseline experiments performs terribly. \rev{In other words, skipping the normalization step leads to a large improvement in CW-ECE. \textbf{This observation is one of our most striking findings.} To shed further light on this, we note that the distribution-free calibration guarantees for CW-HB shown in Appendix~\ref{appsec:umd-class-wise} no longer hold post-normalization. Thus, both our theory and experiments indicate that skipping normalization improves CW-ECE performance.}
\end{enumerate}
\textbf{Additional experiments in the Appendix. }In Appendix~\ref{appsec:figs-ablations}, we report each of the results in Tables~\ref{tab:top-label-brier} and~\ref{tab:class-wise-brier} with the number of bins taking every value in the range $[5, 25]$. Most observations remain the same under this expanded study. \rev{In Appendix~\ref{subsec:covertype-top-label}, we consider top-label calibration for the class imbalanced COVTYPE-7 dataset, and show that TL-HB adapts to tail/infrequent classes.}
\section{Conclusion}
We make two contributions to the study of multiclass calibration: (i) defining the new notion of top-label calibration which enforces a natural minimal requirement on a multiclass predictor---the probability score for the top-label should be calibrated conditioned on the reported top-label; (ii) developing a multiclass-to-binary (M2B) framework which posits that various notions of multiclass calibration can be achieved via reduction to binary calibration, balancing practical utility with statistically tractability. Since it is important to identify appropriate notions of calibration in any structured output space \citep{kuleshov2018accurate, gneiting2007probabilistic}, we anticipate that the philosophy behind the M2B framework could find applications in other structured spaces.

\section{Reproducibility statement}
Some reproducibility desiderata, such as external code and libraries that were used are summarized in Appendix~\ref{appsec:external-libraries}. Most of our experiments can be reproduced using the code at \url{https://github.com/aigen/df-posthoc-calibration}. Our base models were pre-trained deep-net models generated by \citet{mukhoti2020calibrating}, obtained from \url{www.robots.ox.ac.uk/~viveka/focal\_calibration/} (corresponding to `brier\_score' and `focal\_loss\_adaptive\_53' at the above link). By avoiding training of new deep-net models with multiple hyperparameters, we also consequently avoided selection biases that inevitably creep in due to  test-data-peeking. %
The predictions of the pre-trained models were obtained using the code at  \url{https://github.com/torrvision/focal\_calibration}. %

\section{Ethics statement}
Post-hoc calibration is a post-processing step that can be applied on top of miscalibrated machine learning models to increase their reliability. As such, we believe our work should improve the transparency and explainability of machine learning models. However, we outline a few limitations. Post-hoc calibration requires keeping aside a fresh, representative dataset, that was not
used for training. If this dataset is too small, the resulting calibration guarantee can be too weak to be meaningful in practice. Further, if the test data distribution shifts in significant ways, additional corrections may be needed to recalibrate \citep{gupta2020distribution, podkopaev2021distribution}. A well calibrated classifier is not necessarily an accurate or a fair one, and vice versa \citep{kleinberg2017inherent}. Deploying calibrated models in critical applications like medicine, criminal law, banking, etc. does not preclude the possibility of
the model being frequently wrong or unfair.

\section*{Acknowledgements}
This work used the Extreme Science and Engineering Discovery Environment (XSEDE), which is supported by National Science Foundation grant number ACI-1548562 \citep{towns2014xsede}. Specifically, it used the Bridges-2 system, which is supported by NSF award number ACI-1928147, at the Pittsburgh Supercomputing Center (PSC). CG's research was supported by the Bloomberg Data Science Ph.D. Fellowship. CG would like to thank Saurabh Garg and Youngseog Chung for interesting discussions, and Viveka Kulharia for help with the focal calibration repository. Finally, we thank Zack Lipton, the ICLR reviewers, and the ICLR area chair, for feedback that helped improve the writing of the paper.

\bibliographystyle{plainnat}
\bibliography{references}

\begin{thebibliography}{38}
\providecommand{\natexlab}[1]{#1}
\providecommand{\url}[1]{\texttt{#1}}
\expandafter\ifx\csname urlstyle\endcsname\relax
  \providecommand{\doi}[1]{doi: #1}\else
  \providecommand{\doi}{doi: \begingroup \urlstyle{rm}\Url}\fi

\bibitem[Blackard and Dean(1999)]{blackard1999comparative}
Jock~A Blackard and Denis~J Dean.
\newblock Comparative accuracies of artificial neural networks and discriminant
  analysis in predicting forest cover types from cartographic variables.
\newblock \emph{Computers and electronics in agriculture}, 24\penalty0
  (3):\penalty0 131--151, 1999.

\bibitem[Breiman et~al.(2017)Breiman, Friedman, Olshen, and
  Stone]{breiman2017classification}
Leo Breiman, Jerome~H Friedman, Richard~A Olshen, and Charles~J Stone.
\newblock \emph{Classification and regression trees}.
\newblock Routledge, 2017.

\bibitem[Devroye(1983)]{devroye1983equivalence}
Luc Devroye.
\newblock The equivalence of weak, strong and complete convergence in ${L}_1$
  for kernel density estimates.
\newblock \emph{The Annals of Statistics}, 11\penalty0 (3):\penalty0 896--904,
  1983.

\bibitem[Devroye(1988)]{devroye1988automatic}
Luc Devroye.
\newblock Automatic pattern recognition: A study of the probability of error.
\newblock \emph{IEEE Transactions on pattern analysis and machine
  intelligence}, 10\penalty0 (4):\penalty0 530--543, 1988.

\bibitem[Gneiting et~al.(2007)Gneiting, Balabdaoui, and
  Raftery]{gneiting2007probabilistic}
Tilmann Gneiting, Fadoua Balabdaoui, and Adrian~E Raftery.
\newblock Probabilistic forecasts, calibration and sharpness.
\newblock \emph{Journal of the Royal Statistical Society: Series B (Statistical
  Methodology)}, 69\penalty0 (2):\penalty0 243--268, 2007.

\bibitem[Gordon and Olshen(1984)]{gordon1984almost}
Louis Gordon and Richard~A Olshen.
\newblock Almost surely consistent nonparametric regression from recursive
  partitioning schemes.
\newblock \emph{Journal of Multivariate Analysis}, 15\penalty0 (2):\penalty0
  147--163, 1984.

\bibitem[Guo et~al.(2017)Guo, Pleiss, Sun, and
  Weinberger]{guo2017nn_calibration}
Chuan Guo, Geoff Pleiss, Yu~Sun, and Kilian~Q. Weinberger.
\newblock On calibration of modern neural networks.
\newblock In \emph{International Conference on Machine Learning}, 2017.

\bibitem[Gupta and Ramdas(2021)]{gupta2021distribution}
Chirag Gupta and Aaditya Ramdas.
\newblock Distribution-free calibration guarantees for histogram binning
  without sample splitting.
\newblock In \emph{International Conference on Machine Learning}, 2021.

\bibitem[Gupta et~al.(2020)Gupta, Podkopaev, and Ramdas]{gupta2020distribution}
Chirag Gupta, Aleksandr Podkopaev, and Aaditya Ramdas.
\newblock Distribution-free binary classification: prediction sets, confidence
  intervals and calibration.
\newblock In \emph{Advances in Neural Information Processing Systems}, 2020.

\bibitem[Gupta et~al.(2021)Gupta, Rahimi, Ajanthan, Mensink, Sminchisescu, and
  Hartley]{gupta2021calibration}
Kartik Gupta, Amir Rahimi, Thalaiyasingam Ajanthan, Thomas Mensink, Cristian
  Sminchisescu, and Richard Hartley.
\newblock Calibration of neural networks using splines.
\newblock In \emph{International Conference on Learning Representations}, 2021.

\bibitem[He et~al.(2016)He, Zhang, Ren, and Sun]{he2016deep}
Kaiming He, Xiangyu Zhang, Shaoqing Ren, and Jian Sun.
\newblock Deep residual learning for image recognition.
\newblock In \emph{Proceedings of the IEEE Conference on Computer Vision and
  Pattern Recognition}, 2016.

\bibitem[Huang et~al.(2017)Huang, Liu, Van Der~Maaten, and
  Weinberger]{huang2017densely}
Gao Huang, Zhuang Liu, Laurens Van Der~Maaten, and Kilian~Q Weinberger.
\newblock Densely connected convolutional networks.
\newblock In \emph{Proceedings of the IEEE Conference on Computer Vision and
  Pattern Recognition}, 2017.

\bibitem[Kleinberg et~al.(2017)Kleinberg, Mullainathan, and
  Raghavan]{kleinberg2017inherent}
Jon Kleinberg, Sendhil Mullainathan, and Manish Raghavan.
\newblock Inherent trade-offs in the fair determination of risk scores.
\newblock In \emph{Innovations in Theoretical Computer Science}, 2017.

\bibitem[Krizhevsky(2009)]{krizhevsky2009learning}
Alex Krizhevsky.
\newblock Learning multiple layers of features from tiny images.
\newblock \emph{Technical Report, University of Toronto}, 2009.

\bibitem[Kuleshov et~al.(2018)Kuleshov, Fenner, and
  Ermon]{kuleshov2018accurate}
Volodymyr Kuleshov, Nathan Fenner, and Stefano Ermon.
\newblock Accurate uncertainties for deep learning using calibrated regression.
\newblock In \emph{International Conference on Machine Learning}, 2018.

\bibitem[Kull et~al.(2017)Kull, Silva~Filho, and Flach]{kull2017beyond}
Meelis Kull, Telmo~M. Silva~Filho, and Peter Flach.
\newblock Beyond sigmoids: How to obtain well-calibrated probabilities from
  binary classifiers with beta calibration.
\newblock \emph{Electronic Journal of Statistics}, 11\penalty0 (2):\penalty0
  5052--5080, 2017.

\bibitem[Kull et~al.(2019)Kull, Perello-Nieto, K{\"a}ngsepp, Song, and
  Flach]{kull2019beyond}
Meelis Kull, Miquel Perello-Nieto, Markus K{\"a}ngsepp, Hao Song, and Peter
  Flach.
\newblock Beyond temperature scaling: Obtaining well-calibrated multiclass
  probabilities with {D}irichlet calibration.
\newblock In \emph{Advances in Neural Information Processing Systems}, 2019.

\bibitem[Kumar et~al.(2019)Kumar, Liang, and Ma]{kumar2019calibration}
Ananya Kumar, Percy~S Liang, and Tengyu Ma.
\newblock Verified uncertainty calibration.
\newblock In \emph{Advances in Neural Information Processing Systems}, 2019.

\bibitem[Kumar et~al.(2018)Kumar, Sarawagi, and Jain]{kumar2018trainable}
Aviral Kumar, Sunita Sarawagi, and Ujjwal Jain.
\newblock Trainable calibration measures for neural networks from kernel mean
  embeddings.
\newblock In \emph{International Conference on Machine Learning}, 2018.

\bibitem[Lugosi and Nobel(1996)]{lugosi1996consistency}
G{\'a}bor Lugosi and Andrew Nobel.
\newblock Consistency of data-driven histogram methods for density estimation
  and classification.
\newblock \emph{Annals of Statistics}, 24\penalty0 (2):\penalty0 687--706,
  1996.

\bibitem[Mukhoti et~al.(2020)Mukhoti, Kulharia, Sanyal, Golodetz, Torr, and
  Dokania]{mukhoti2020calibrating}
Jishnu Mukhoti, Viveka Kulharia, Amartya Sanyal, Stuart Golodetz, Philip~HS
  Torr, and Puneet~K Dokania.
\newblock Calibrating deep neural networks using focal loss.
\newblock In \emph{Advances in Neural Information Processing Systems}, 2020.

\bibitem[Niculescu-Mizil and Caruana(2005)]{Niculescu2005predicting}
Alexandru Niculescu-Mizil and Rich Caruana.
\newblock Predicting good probabilities with supervised learning.
\newblock In \emph{International Conference on Machine Learning}, 2005.

\bibitem[Nixon et~al.(2020)Nixon, Dusenberry, Zhang, Jerfel, and
  Tran]{nixon2019measuring}
Jeremy Nixon, Michael~W Dusenberry, Linchuan Zhang, Ghassen Jerfel, and Dustin
  Tran.
\newblock Measuring calibration in deep learning.
\newblock \emph{arXiv preprint arXiv:1904.01685}, 2020.

\bibitem[Nobel(1996)]{nobel1996histogram}
Andrew Nobel.
\newblock Histogram regression estimation using data-dependent partitions.
\newblock \emph{The Annals of Statistics}, 24\penalty0 (3):\penalty0
  1084--1105, 1996.

\bibitem[Patel et~al.(2021)Patel, Beluch, Yang, Pfeiffer, and
  Zhang]{patel2021multi}
Kanil Patel, William Beluch, Bin Yang, Michael Pfeiffer, and Dan Zhang.
\newblock Multi-class uncertainty calibration via mutual information
  maximization-based binning.
\newblock In \emph{International Conference on Learning Representations}, 2021.

\bibitem[Platt(1999)]{Platt99probabilisticoutputs}
John~C. Platt.
\newblock Probabilistic outputs for support vector machines and comparisons to
  regularized likelihood methods.
\newblock In \emph{Advances in Large Margin Classifiers}, pages 61--74. MIT
  Press, 1999.

\bibitem[Podkopaev and Ramdas(2021)]{podkopaev2021distribution}
Aleksandr Podkopaev and Aaditya Ramdas.
\newblock Distribution-free uncertainty quantification for classification under
  label shift.
\newblock In \emph{Uncertainty in Artificial Intelligence}, 2021.

\bibitem[Qian et~al.(2020)Qian, Fruit, Pirotta, and
  Lazaric]{qian2020concentration}
Jian Qian, Ronan Fruit, Matteo Pirotta, and Alessandro Lazaric.
\newblock Concentration inequalities for multinoulli random variables.
\newblock \emph{arXiv preprint arXiv:2001.11595}, 2020.

\bibitem[Rahimi et~al.(2020)Rahimi, Shaban, Cheng, Hartley, and
  Boots]{rahimi2020intra}
Amir Rahimi, Amirreza Shaban, Ching-An Cheng, Richard Hartley, and Byron Boots.
\newblock Intra order-preserving functions for calibration of multi-class
  neural networks.
\newblock In \emph{Advances in Neural Information Processing Systems}, 2020.

\bibitem[Roelofs et~al.(2020)Roelofs, Cain, Shlens, and
  Mozer]{roelofs2020mitigating}
Rebecca Roelofs, Nicholas Cain, Jonathon Shlens, and Michael~C Mozer.
\newblock Mitigating bias in calibration error estimation.
\newblock \emph{arXiv preprint arXiv:2012.08668}, 2020.

\bibitem[Towns et~al.(2014)Towns, Cockerill, Dahan, Foster, Gaither, Grimshaw,
  Hazlewood, Lathrop, Lifka, Peterson, Roskies, Scott, and
  Wilkins-Diehr]{towns2014xsede}
J.~Towns, T.~Cockerill, M.~Dahan, I.~Foster, K.~Gaither, A.~Grimshaw,
  V.~Hazlewood, S.~Lathrop, D.~Lifka, G.~D. Peterson, R.~Roskies, J.~Scott, and
  N.~Wilkins-Diehr.
\newblock {XSEDE}: {A}ccelerating {S}cientific {D}iscovery.
\newblock \emph{Computing in Science \& Engineering}, 16\penalty0 (5):\penalty0
  62--74, 2014.

\bibitem[Vaicenavicius et~al.(2019)Vaicenavicius, Widmann, Andersson, Lindsten,
  Roll, and Sch{\"o}n]{vaicenavicius2019bayescalibration}
Juozas Vaicenavicius, David Widmann, Carl Andersson, Fredrik Lindsten, Jacob
  Roll, and Thomas~B Sch{\"o}n.
\newblock Evaluating model calibration in classification.
\newblock In \emph{International Conference on Artificial Intelligence and
  Statistics}, 2019.

\bibitem[Weissman et~al.(2003)Weissman, Ordentlich, Seroussi, Verdu, and
  Weinberger]{weissman2003inequalities}
Tsachy Weissman, Erik Ordentlich, Gadiel Seroussi, Sergio Verdu, and Marcelo~J
  Weinberger.
\newblock Inequalities for the {L}1 deviation of the empirical distribution.
\newblock \emph{Hewlett-Packard Labs, Tech. Rep}, 2003.

\bibitem[Widmann et~al.(2019)Widmann, Lindsten, and
  Zachariah]{widmann2019calibration}
David Widmann, Fredrik Lindsten, and Dave Zachariah.
\newblock Calibration tests in multi-class classification: a unifying
  framework.
\newblock In \emph{Advances in Neural Information Processing Systems}, 2019.

\bibitem[Zadrozny and Elkan(2001)]{zadrozny2001obtaining}
Bianca Zadrozny and Charles Elkan.
\newblock Obtaining calibrated probability estimates from decision trees and
  naive {B}ayesian classifiers.
\newblock In \emph{International Conference on Machine Learning}, 2001.

\bibitem[Zadrozny and Elkan(2002)]{zadrozny2002transforming}
Bianca Zadrozny and Charles Elkan.
\newblock Transforming classifier scores into accurate multiclass probability
  estimates.
\newblock In \emph{International Conference on Knowledge Discovery and Data
  Mining}, 2002.

\bibitem[Zagoruyko and Komodakis(2016)]{zagoruyko2016wide}
Sergey Zagoruyko and Nikos Komodakis.
\newblock Wide residual networks.
\newblock In \emph{British Machine Vision Conference}, 2016.

\bibitem[Zhang et~al.(2020)Zhang, Kailkhura, and Han]{zhang2020mix}
Jize Zhang, Bhavya Kailkhura, and T~Han.
\newblock Mix-n-match: Ensemble and compositional methods for uncertainty
  calibration in deep learning.
\newblock In \emph{International Conference on Machine Learning}, 2020.

\end{thebibliography}

\newpage
\part*{Appendix}
\appendix
\etocdepthtag.toc{mtappendix}
\etocsettagdepth{mtchapter}{none}
\etocsettagdepth{mtappendix}{subsection}
\renewcommand{\contentsname}{\vspace{-1cm}}
\tableofcontents

\begin{algorithm}[!t]
	\SetAlgoLined
	\KwInput{Base (uncalibrated) multiclass predictor $\g$, calibration data $\Dcal = (X_1, Y_1), \ldots, (X_n, Y_n)$, binary calibrator $\Acal_{\{0,1\}} : [0,1]^{\Xcal} \times (\Xcal \times \{0,1\})^{\star}  \to  [0,1]^{\Xcal}$}
	$K \gets $ number of distinct calibration claims that $\Ccal$ verifies\;
    \For{each claim $k \in [K]$}{
        From $\g$, infer $(\widetilde{c}, \widetilde{g}) \gets (\text{label-predictor}, \text{probability-predictor})$ corresponding to claim $k$\;
        $\Dcal_k \gets \{(X_i, Z_i)\}$, where $Z_i \gets \indicator{Y_i = \widetilde{c}(X_i)}$\;
        \If{conditioning does not include class prediction $\widetilde{c}$}{
            --- (confidence, top-$K$-confidence, and class-wise calibration) ---\\
	        $h_k \gets \Acal_{\{0,1\}}(\widetilde{g}, \Dcal_k)$\;
        }
        \Else{
            --- (top-label and top-$K$-label calibration) ---\\
            \For{$l \in [L]$}{
                $\Dcal_{k,l} \gets \{(X_i, Z_i) \in \Dcal_k : \widetilde{c}(X_i) = l)\}$\;
    	        $h_{k, l} \gets \Acal_{\{0,1\}}(\widetilde{g}, \Dcal_{k,l})$\;
            }
	        $h_k(\cdot) \gets h_{k,\widetilde{c}(\cdot)}(\cdot)$  \text{   ($h_k$ predicts $h_{k,l}(x)$ if $\widetilde{c}(x) =l$)}\;
        }
	}
	
	--- (the new predictor replaces each $\widetilde{g}$ with the corresponding $h_k$) ---\\
	\Return $(\text{label-predictor}, h_k) \text{  corresponding to each claim } k \in [K]$\;
	\caption{Post-hoc calibrator for a given M2B calibration notion $\Ccal$}
	\label{alg:m2b-calibration}
\end{algorithm}

\begin{customized}
\centering
\caption{\textbf{Input for Algorithms~\ref{alg:general-top-k-label} and \ref{alg:general-top-k-confidence}: }base multiclass predictor $\g : \Xcal \to \Delta^{L-1}$, calibration data $\Dcal = (X_1, Y_1), \ldots, (X_n, Y_n)$, binary calibrator $\Acal_{\{0,1\}} : [0,1]^{\Xcal} \times (\Xcal \times \{0,1\})^{\star}  \to  [0,1]^{\Xcal}$.}
\begin{minipage}{0.48\textwidth}
\begin{algorithm}[H]
	\SetAlgoLined
    For every $k \in [K]$, infer from $\g$ the $k$-th largest class predictor $c^{(k)}$ and the associated probability $g^{(k)}$\;
	\For{$k \gets 1$ \textbf{to} $K$}{
    	\For{$l \gets 1$ \textbf{to} $L$}{
            $\Dcal_{k,l} \gets \{(X_i, \indicator{Y_i = l}) : c^{(k)}(X_i) = l\}$\;
	        $h^{(k,l)} \gets \Acal_{\{0,1\}}(g^{(k)}, \Dcal_{k,l})$\;
    	}
		$h^{(k)} \gets h^{(k,c^{(k)}(\cdot))}(\cdot)$\;
	} 
	\Return $(h^{(1)}, h^{(2)}, \ldots, h^{(K)})$\;
	\caption{Top-$K$-label calibrator}
	\label{alg:general-top-k-label}
\end{algorithm}
\end{minipage}
\hspace{0.5cm}
\begin{minipage}{0.47\textwidth}
\begin{algorithm}[H]
	\SetAlgoLined
    For every $k \in [K]$, infer from $\g$ the $k$-th largest class predictor $c^{(k)}$ and the associated probability $g^{(k)}$\;
	\For{$k \gets 1$ \textbf{to} $K$}{
        $\Dcal_{k} \gets \{(X_i, \indicator{Y_i = l}) : i \in [n]\}$\;
        $h^{(k)} \gets \Acal_{\{0,1\}}(g^{(k)}, \Dcal_{k})$\;
	} 
	\Return $(h^{(1)}, h^{(2)}, \ldots, h^{(K)})$\;
	\caption{Top-$K$-confidence calibrator}
	\label{alg:general-top-k-confidence}
\end{algorithm}
\end{minipage}
\end{customized}

\section{Addendum to Section~\ref{sec:m2b-calibration} ``Calibration algorithms from calibration metrics"}
\label{appsec:m2b-calibration}

In Section~\ref{sec:m2b-calibration}, we introduced the concept of M2B calibration, and showed that popular calibration notions are in fact M2B notions (Table~\ref{table:m2b-notions}). We showed how the calibration notions of top-label, class-wise, and confidence calibration can be achieved using a corresponding M2B calibrator. In the following subsection, we present the general-purpose \emph{wrapper} Algorithm~\ref{alg:m2b-calibration} that can be used to derive an M2B calibrator from any given M2B calibration notion that follows the rubric specified by Table~\ref{table:m2b-notions}. In Appendix~\ref{appsec:illustrative-example}, we illustrate the philosophy of M2B calibration using a simple example with a dataset that contains 6 points. This example also illustrates the top-label-calibrator, the class-wise-calibrator, and the confidence-calibrator. 

\subsection{General-purpose M2B calibrator}
Denote some M2B notion of calibration as $\Ccal$. Suppose $\Ccal$ corresponds to $K$ binary calibration claims. The outer for-loop in Algorithm~\ref{alg:m2b-calibration}, runs over each such claim in $\Ccal$. For example, for class-wise calibration, $K = L$ and for confidence and top-label calibration, $K = 1$. Corresponding to each claim, there is a probability-predictor that the conditioning is to be done on, such as $g$ or $g_l$ or $g_{(k)}$. Additionally, there may be conditioning on the label predictor such as $c$ or $c_{(k)}$. These are denoted as $(\widetilde{c}, \widetilde{g})$ in Algorithm~\ref{alg:m2b-calibration}. For confidence and top-label calibration, $\widetilde{c} = c$, the top-label-confidence. For class-wise calibration, when $\widetilde{g} = g_l$, we have $\widetilde{c}(\cdot) = l$. 

If there is no label conditioning in the calibration notion, such as in confidence, top-$K$-confidence, and class-wise calibration, then we enter the if-condition inside the for-loop. Here $h_k$ is learnt using a single calibration dataset and a single call to $\Acal_{\{0,1}\}$. Otherwise, if there is label conditioning, such as in top-label and top-$K$-label calibration, we enter the else-condition, where we learn a separate $h_{k,l}$ for every $l \in [L]$, using a different part of the dataset $\Dcal_l$ in each case. Then $h_k(x)$ equals $h_{k,l}(x)$ if $\widetilde{c}(x) = l$. 

Finally, since $\Ccal$ is verifying a sequence of claims, the output of Algorithm~\ref{alg:m2b-calibration} is a sequence of predictors. Each original prediction $(\widetilde{c}, \widetilde{g})$ corresponding to the $\Ccal$ is replaced with $(\widetilde{c},h_k)$. This is the output of the M2B calibrator. Note that the $\widetilde{c}$ values are not changed. This output appears abstract, but normally, it can be represented in an interpretable way. For example, for class-wise calibration, the output is just a sequence of predictors, one for each class: $(h_1, h_2, \ldots, h_L)$. 

This general-purpose M2B calibrators can be used to achieve any M2B calibration notion: top-label calibration (Algorithm~\ref{alg:general-top-label}), class-wise calibration (Algorithm~\ref{alg:general-class-wise}), confidence calibration (Algorithm~\ref{alg:general-confidence}), top-$K$-label calibration (Algorithm~\ref{alg:general-top-k-label}), 
and top-$K$-confidence calibration (Algorithm~\ref{alg:general-top-k-confidence}).

\begin{figure}[t]
\begin{subfigure}[b]{0.33\linewidth}
\includegraphics[width=\linewidth]{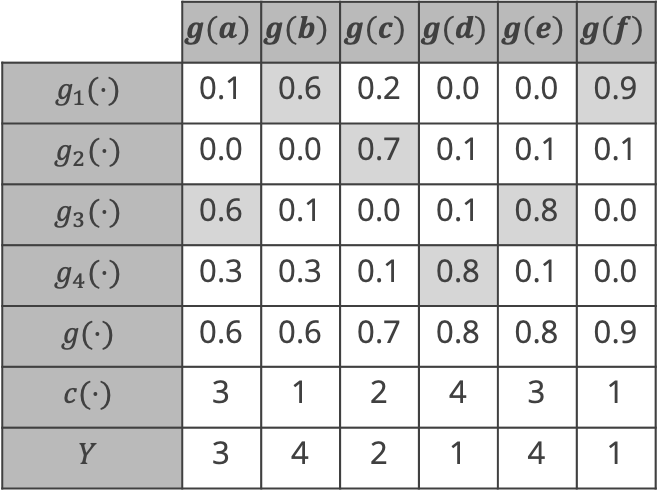}
\caption{Predictions of a fixed base model $\smash{\g: \mathcal{X} \to \Delta^3}$ on calibration/test data %
$\smash{\Dcal = \{(a, 3), (b, 4), \ldots, (f, 1)\}}$. %
}
\label{fig:paradigm-predictions}
\includegraphics[trim=0 0 0 -30, clip=true,width=\linewidth]{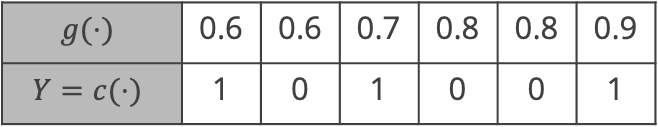}
\caption{Confidence calibration}
\label{fig:paradigm-confidence}
\end{subfigure}
\begin{subfigure}[b]{0.3\linewidth}
\includegraphics[trim=-30 0 0 0, clip, width=\linewidth]{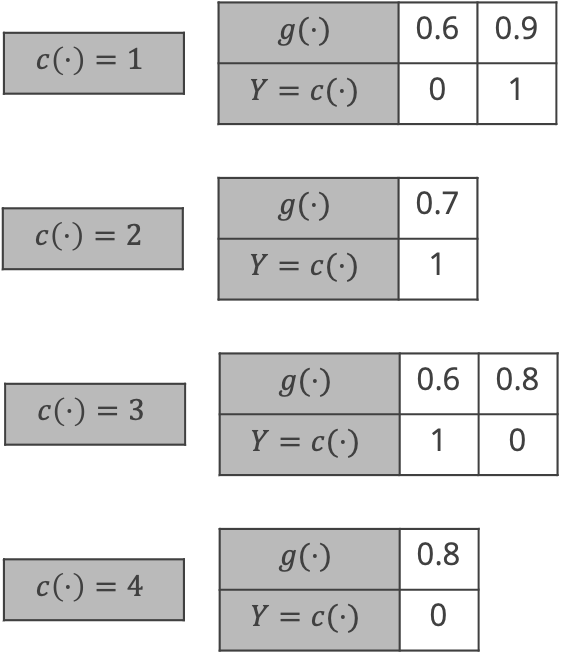}
\caption{Top-label calibration}
\vspace{1cm}
\label{fig:paradigm-top-label}
\end{subfigure}
\begin{subfigure}[b]{0.36\linewidth}
\centering
\includegraphics[trim=100 0 0 0 ,clip,width=0.85\linewidth]{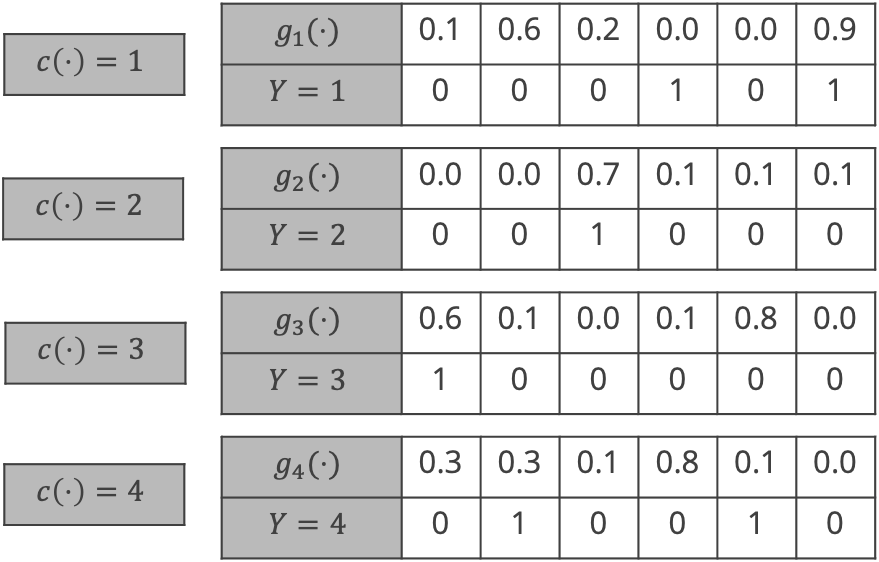}
\caption{Class-wise calibration}
\label{fig:paradigm-class-wise}
\begin{center}
\includegraphics[trim=0 2 0 0, clip=true,width=0.8\linewidth]{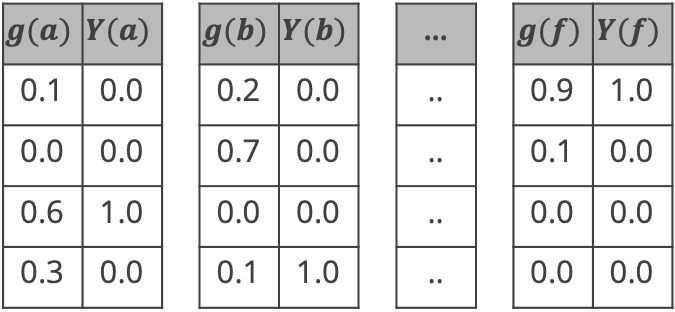}
\end{center}
\caption{Canonical calibration}
\label{fig:paradigm-canonical}
\end{subfigure}
\caption{Illustrative example for Section~\ref{appsec:illustrative-example}. The numbers in plot (a) correspond to the predictions made by $\g$ on a dataset $\Dcal$. If $\Dcal$ were a test set, plots (b--e) show how it should be used to verify if $\g$ satisfies the corresponding notion of calibration. Consequently, we argue that if $\Dcal$ were a calibration set, and we want to achieve one of the notions (b--e), then the data shown in the corresponding plots should be the data used to calibrate $\g$ as well. }%
\label{fig:paradigms}
\end{figure}

\subsection{An example to illustrate the philosophy of M2B calibration}
\label{appsec:illustrative-example}
Figure~\ref{fig:paradigm-predictions} shows the predictions of a given base model $\g$ on a given dataset $\Dcal$. Suppose $\Dcal$ is a \emph{test set}, and we are testing confidence calibration. Then the only predictions that matter are the top-predictions corresponding to the shaded values. These are stripped out and shown in Figure~\ref{fig:paradigm-confidence}, in the $g(\cdot)$ row. Note that the indicator $\indicator{Y = c(\cdot)}$ is sufficient to test confidence calibration and given this, the $c(X)$ are not needed. Thus the second row in Figure~\ref{fig:paradigm-confidence} only shows these indicators. Verifying top-label calibration is similar (Figure~\ref{fig:paradigm-top-label}), but in addition to the predictions $g(\cdot)$, we also retain the values of $c(\cdot)$. Thus the $g(\cdot)$ and $\indicator{Y = c(\cdot)}$ are shown, but split across the 4 classes. Class-wise calibration requires access to all the predictions, however, each class is considered separately as indicated by Figure~\ref{fig:paradigm-class-wise}. Canonical calibration looks at the full prediction vector in each case. However, in doing so, it becomes unlikely that $\g(\x) = \g(\y)$ for any $\x, \y$ since the number of values that $\g$ can take is now exponential. 

Let us turn this around and suppose that $\Dcal$ were a calibration set instead of a test set. We argue that $\Dcal$ should be used in the \emph{same way, whether testing or calibrating}. Thus, if confidence calibration is to be achieved, we should focus on the $(g, \indicator{Y=c(\cdot)})$ corresponding to $\g$. If top-label calibration is to be achieved, we should use the $(c, g)$ values. If class-wise calibration is to be achieved, we should look at each $g_l$ separately and solve $L$ different problems. Finally, for canonical calibration, we must look at the entire $\g$ vector as a single unit. This is the core philosophy behind M2B calibrators: if binary claims are being verified, solve binary calibration problems.

\section{Distribution-free top-label calibration using histogram binning}
\label{sec:umd-top-label}

In this section, we formally describe histogram binning (HB) with the top-label-calibrator (Algorithm~\ref{alg:general-top-label}) and provide methodological insights through theory and experiments. 

\subsection{Formal algorithm and theoretical guarantees}

\begin{algorithm}[!t]
	\SetAlgoLined
	\KwInput{Base multiclass predictor $\g$, calibration data $\Dcal = (X_1, Y_1), \ldots, (X_n, Y_n)$}
	\KwHyper{\# points per bin $k \in \naturals$ (say 50), tie-breaking parameter $\delta > 0$ (say $10^{-10}$)}
	\KwOutput{Top-label calibrated predictor $(c, h)$ }
	$c \gets \text{classifier or top-class based on } \g$\;
	$g \gets \text{top-class-probability based on } \g$\;
    \For{$l \gets 1$ \textbf{to} $L$}{
	     $\Dcal_l \gets \{(X_i, \indicator{Y_i = l}) : c(X_i) = l)\}$ \text{ and }
	    $n_l \gets \abs{\Dcal_l}$\;
	    $h_l \gets$ Binary-histogram-binning$(g, \Dcal_l, \floor{n_l/k}, \delta)$\label{line:call-binary-umd}\;
	}
	$h(\cdot) \gets h_{c(\cdot)}(\cdot)$\;
	\Return $(c, h)$\;
	\caption{Top-label histogram binning}
	\label{alg:umd-top-label}
\end{algorithm}

Algorithm~\ref{alg:umd-top-label} describes the top-label calibrator formally using HB as the binary calibration algorithm. The function called in line~\ref{line:call-binary-umd} is Algorithm 2 of \citet{gupta2021distribution}. The first argument in the call is the top-label confidence predictor, the second argument is the dataset to be used, the third argument is the number of bins to be used, and the fourth argument is a tie-breaking parameter (described shortly). While previous empirical works on HB fixed the \emph{number of bins per class}, the analysis of \citet{gupta2021distribution} suggests that a more principled way of choosing the number of bins is to fix the \emph{number of points per bin}. This is parameter $k$ of Algorithm~\ref{alg:umd-top-label}. %
Given $k$, the number of bins is decided separately for every class as $\lfloor n_l/k \rfloor$ where $n_l$ is the number of points predicted as class $l$. This choice is particularly relevant for top-label calibration since $n_l$ can be highly non-uniform (we illustrate this empirically in Section~\ref{subsec:covertype-top-label}). %
The tie-breaking parameter $\delta$ can be arbitrarily small (like $10^{-10}$), and its significance is mostly theoretical---it is used to ensure that outputs of different bins are not exactly identical by chance, so that conditioning on a calibrated probability output is equivalent to conditioning on a bin; this leads to a cleaner theoretical guarantee.

HB recalibrates $g$ to a piecewise constant function $h$ that takes one value per bin. %
Consider a specific bin $b$; the $h$ value for this bin is computed as the average of the indicators $\{\indicator{Y_i = c(X_i)} : X_i \in \text{Bin } b\}$. This is an estimate of the \emph{bias} of the bin $\Prob(Y = c(X) \mid X \in \text{Bin } b)$. A concentration inequality can then be used to bound the deviation between the estimate and the true bias to prove distribution-free calibration guarantees. 
In the forthcoming Theorem~\ref{thm:umd-top-label}, we show high-probability and in-expectation bounds on the the TL-ECE of HB. Additionally, we show marginal and conditional top-label calibration bounds, defined next. These notions were proposed in the binary calibration setting by \citet{gupta2020distribution} and \citet{gupta2021distribution}. In the definition below, $\Acal$ refers to any algorithm that takes as input calibration data $\Dcal$ and an initial classifier $\g$ to produce a top-label predictor $c$ and an associated probability map $h$. Algorithm~\ref{alg:umd-top-label} is an example of $\Acal$.  %

\begin{definition}[Marginal and conditional top-label calibration]%
    \label{def:top-label-calibration}
    Let $\varepsilon, \alpha \in (0, 1)$ be some given levels of approximation and failure respectively. An algorithm $\Acal: (\g,\Dcal)\mapsto (c,h)$ is %
    \begin{enumerate}[label=(\alph*)]%
        \item $(\varepsilon, \alpha)$-marginally top-label calibrated if for every distribution $P$ over $\Xcal \times [L]$,
        \begin{equation}
            \Pr\Big(\abs{\Prob(Y = c(X) \mid c(X), h(X)) - h(X)} \leq \varepsilon\Big) \geq 1-\alpha.
            \label{eq:top-label-marginal}
        \end{equation}
        \item $(\varepsilon, \alpha)$-conditionally top-label calibrated if for every distribution $P$ over $\Xcal \times [L]$,

        \begin{equation}
            \Pr\Big(\forall~ l \in [L], r\in \text{Range}(h), \abs{\Prob(Y = c(X) \mid c(X) = l, h(X) = r) -r} \leq \varepsilon\Big) \geq 1-\alpha.
            \label{eq:top-label-conditional}   
        \end{equation}
        
    \end{enumerate}
\end{definition}

\noindent To clarify, all probabilities are taken over the test point $(X,Y) \sim P$, the calibration data $\Dcal \sim P^n$, and any other inherent algorithmic randomness in $\Acal$; these are all implicit in $(c,h) = \mathcal{A}(\Dcal,\g)$. Marginal calibration asserts that with high probability, on average over the distribution of  $\Dcal, X$, $\Prob(Y = c(X) \mid c(X), h(X)) $ is at most $\varepsilon$ away from $h(X)$. In comparison, TL-ECE is the average of these deviations over $X$. 
Marginal calibration may be a more appropriate metric for calibration than TL-ECE if we are somewhat agnostic to probabilistic errors less than some fixed threshold $\varepsilon$ (like 0.05). %
Conditional calibration is a strictly stronger definition that requires the deviation to be at most $\varepsilon$ for every possible prediction $(l,r)$, including rare ones, not just on average over predictions. This may be relevant in medical settings where we want the prediction on every patient to be reasonably calibrated. %
Algorithm~\ref{alg:umd-top-label} satisfies the following calibration guarantees.

\begin{theorem}
\label{thm:umd-top-label}
Fix hyperparameters $\delta > 0$ (arbitrarily small) and points per bin $k \geq 2$, and assume $n_l \geq k$ for every $l \in [L]$. %
Then, for any $\alpha \in (0, 1)$, Algorithm~\ref{alg:umd-top-label} is $(\varepsilon_1, \alpha)$-marginally %
and $(\varepsilon_2, \alpha)$-conditionally top-label calibrated for 
\begin{equation}
    \varepsilon_1 = \sqrt{\frac{\log(2/\alpha)}{2(k-1)}} + \delta, \qquad \text{and}\qquad  \varepsilon_2 = \sqrt{\frac{\log(2n/k\alpha)}{2(k -1)}} + \delta.   
    \label{eq:umd-top-label}
\end{equation}
Further, for any distribution $P$ over $\Xcal \times [L]$, we have 
$P(\text{TL-ECE}(c, h) \leq \varepsilon_2) \geq 1-\alpha$%
, and 
$\Exp{}{\text{TL-ECE}(c, h)} \leq \sqrt{1/2k} + \delta$.
\end{theorem}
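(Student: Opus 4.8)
The plan is to reduce the multiclass statement to the binary histogram-binning guarantees of \citet{gupta2021distribution}, applied separately to each of the $L$ datasets $\Dcal_l$. First I would recall the structure of Algorithm~\ref{alg:umd-top-label}: for each label $l\in[L]$, it calls binary HB on the dataset $\Dcal_l=\{(X_i,\indicator{Y_i=l}): c(X_i)=l\}$ with $\floor{n_l/k}$ bins, producing a piecewise-constant $h_l$, and then sets $h(\cdot)=h_{c(\cdot)}(\cdot)$. The key observation is that conditioning on $c(X)=l$ makes the samples in $\Dcal_l$ i.i.d.\ draws from the conditional distribution $P(\cdot\mid c(X)=l)$, and on that subpopulation the binary target $\indicator{Y=l}$ coincides with $\indicator{Y=c(X)}$. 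So the within-bin estimate $h_l$ is exactly the empirical frequency of a Bernoulli with mean $\Prob(Y=c(X)\mid c(X)=l, X\in\text{bin } b)$, computed from at least $k$ points per bin (since $\floor{n_l/k}$ bins partition $n_l\geq k$ points, each bin has $\geq k$ points — this uses the floor and the assumption $n_l\geq k$).

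Next I would invoke the binary HB theorem (Algorithm~2 / the analysis in \citet{gupta2021distribution}) for each $l$: a Hoeffding bound on each bin's empirical mean gives, for a single bin with $m\geq k$ points, deviation exceeding $\sqrt{\log(2/\alpha')/2(m-1)}$ with probability at most $\alpha'$; the $\delta$ tie-breaking perturbation contributes the additive $\delta$. For the marginal guarantee \eqref{eq:top-label-marginal}, one samples a fresh test point, which lands in some bin, and a union bound is \emph{not} needed over bins — one just needs the single realized bin to be accurate, giving $\varepsilon_1=\sqrt{\log(2/\alpha)/2(k-1)}+\delta$. For the conditional guarantee \eqref{eq:top-label-conditional}, I would union-bound over all bins across all classes; the total number of bins is $\sum_l \floor{n_l/k}\leq n/k$, so setting $\alpha'=k\alpha/n$ in the per-bin bound yields $\varepsilon_2=\sqrt{\log(2n/k\alpha)/2(k-1)}+\delta$. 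Since each individual bin has $\geq k$ points, the per-bin Hoeffding radius is at most $\sqrt{\log(2/\alpha')/2(k-1)}$, which is where the $k-1$ (not $m-1$) comes from.

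For the TL-ECE bounds: conditional top-label calibration at level $\varepsilon_2$ implies pointwise $|\Prob(Y=c(X)\mid c(X),h(X))-h(X)|\leq\varepsilon_2$ almost surely on the event of probability $\geq 1-\alpha$, and since TL-ECE is the expectation over $X$ of this quantity, $\text{TL-ECE}(c,h)\leq\varepsilon_2$ on that event, giving $P(\text{TL-ECE}(c,h)\leq\varepsilon_2)\geq 1-\alpha$. For the in-expectation bound, rather than integrating the high-probability bound I would bound $\Exp{}{\text{TL-ECE}}$ directly: write it as a weighted average over bins of $\Exp{}{|\widehat p_b - p_b|}$ where $\widehat p_b$ is the empirical bin frequency and $p_b$ the true bias, use $\Exp{}{|\widehat p_b-p_b|}\leq\sqrt{\Var(\widehat p_b)}\leq\sqrt{1/4m_b}\leq\sqrt{1/4k}=\sqrt{1/2}\cdot\sqrt{1/2k}$ — actually $\sqrt{p_b(1-p_b)/m_b}\leq\sqrt{1/4k}$, so the bound $\sqrt{1/2k}$ follows since $\sqrt{1/4k}\le\sqrt{1/2k}$ — plus the $\delta$ perturbation; the weighted average of a constant bound is the same constant.

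\textbf{Main obstacle.} The delicate point is handling the randomness of the binning itself: the bin edges are data-dependent (they are quantiles of $\{g(X_i): c(X_i)=l\}$), so conditioned on the bin structure the within-bin labels are \emph{not} obviously i.i.d.\ Bernoulli — one must argue, as in \citet{gupta2021distribution}, that conditioning on the $g$-values (the partition) but not the labels leaves the $\indicator{Y_i=l}$ exchangeable within each bin with the correct mean, so Hoeffding still applies; the $\delta$ tie-breaking is what lets us cleanly identify "conditioning on $h(X)=r$" with "conditioning on a fixed bin". I expect reusing the binary result as a black box (rather than redoing this argument) is the cleanest route, so the real work is just the bookkeeping of the union bound over $\le n/k$ bins and checking the points-per-bin lower bound survives the floor operation.
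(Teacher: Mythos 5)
Your proposal is correct and follows essentially the same route as the paper: reduce to the binary histogram-binning guarantees of \citet{gupta2021distribution} applied to each conditional distribution $P(\cdot \mid c(X)=l)$, combine the per-class marginal bounds without a union bound (via the law of total probability over the realized class/bin), union-bound over the $\sum_l \floor{n_l/k} \leq n/k$ total bins for the conditional guarantee, and deduce the high-probability TL-ECE bound from conditional calibration. The only cosmetic difference is that you re-derive the expected-ECE bound by a direct within-bin variance computation rather than citing the in-expectation ECE bound $\sqrt{B_l/2n_l}+\delta$ from the same binary theorem, which is what the paper does; both yield $\sqrt{1/2k}+\delta$.
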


The proof in Appendix~\ref{appsec:proofs} is a multiclass top-label adaption of the guarantee in the binary setting by \citet{gupta2021distribution}. The $\widetilde{O}(1/\sqrt{k})$ dependence of the bound relies on Algorithm~\ref{alg:umd-top-label} delegating at least $k$ points to every bin. Since $\delta$ can be chosen to be arbitrarily small, setting $k = 50$ gives roughly $\Exp{\Dcal}{\text{TL-ECE}(h)} \leq 0.1$. Base on this, we suggest setting $k \in [50, 150]$ in practice. %

\begin{figure}[t]
\begin{subfigure}{0.74\linewidth}
\includegraphics[width=0.8\linewidth, trim=-150 370 500 0, clip]{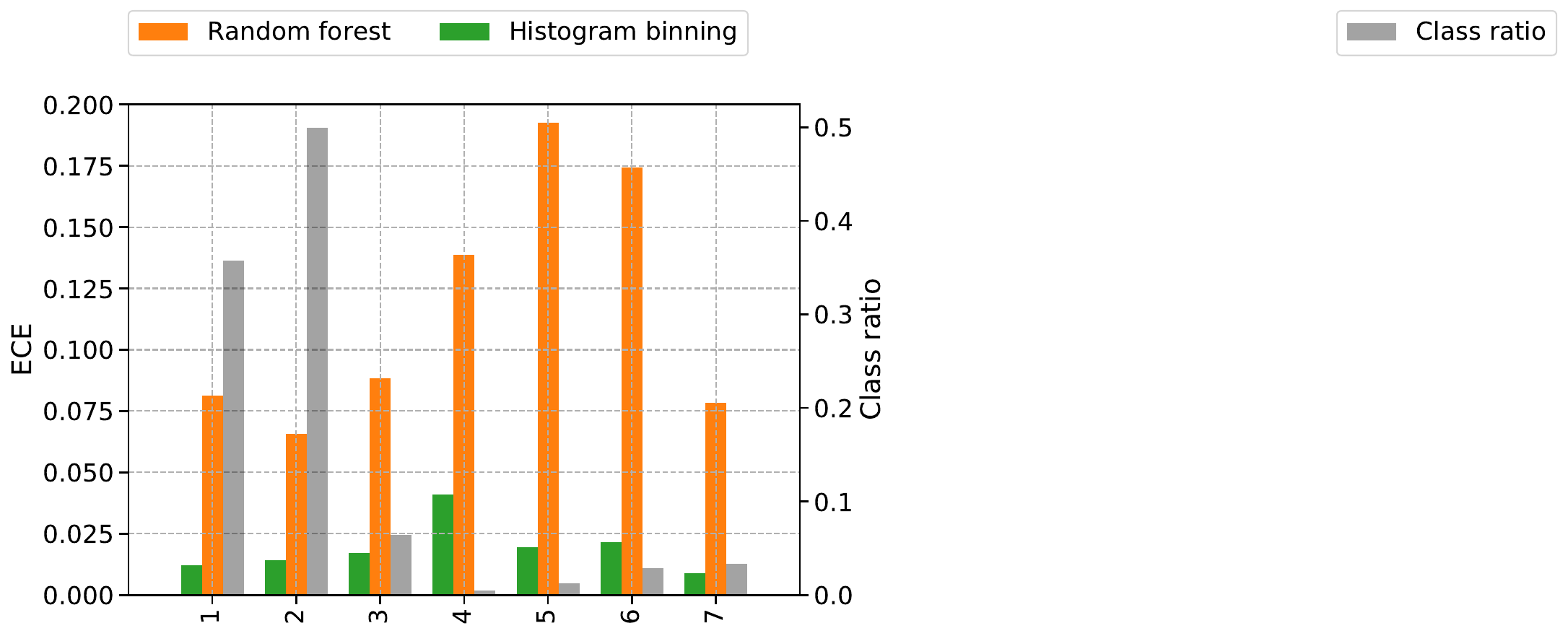}
\end{subfigure}
\begin{subfigure}{0.25\linewidth}
\includegraphics[width=\linewidth, trim=830 370 -90 0, clip]{covtype_toplabel_2_ece_legend.pdf}
\end{subfigure}
\begin{subfigure}{\linewidth}
\includegraphics[width=0.37\linewidth, trim=0 20 0 -20,clip]{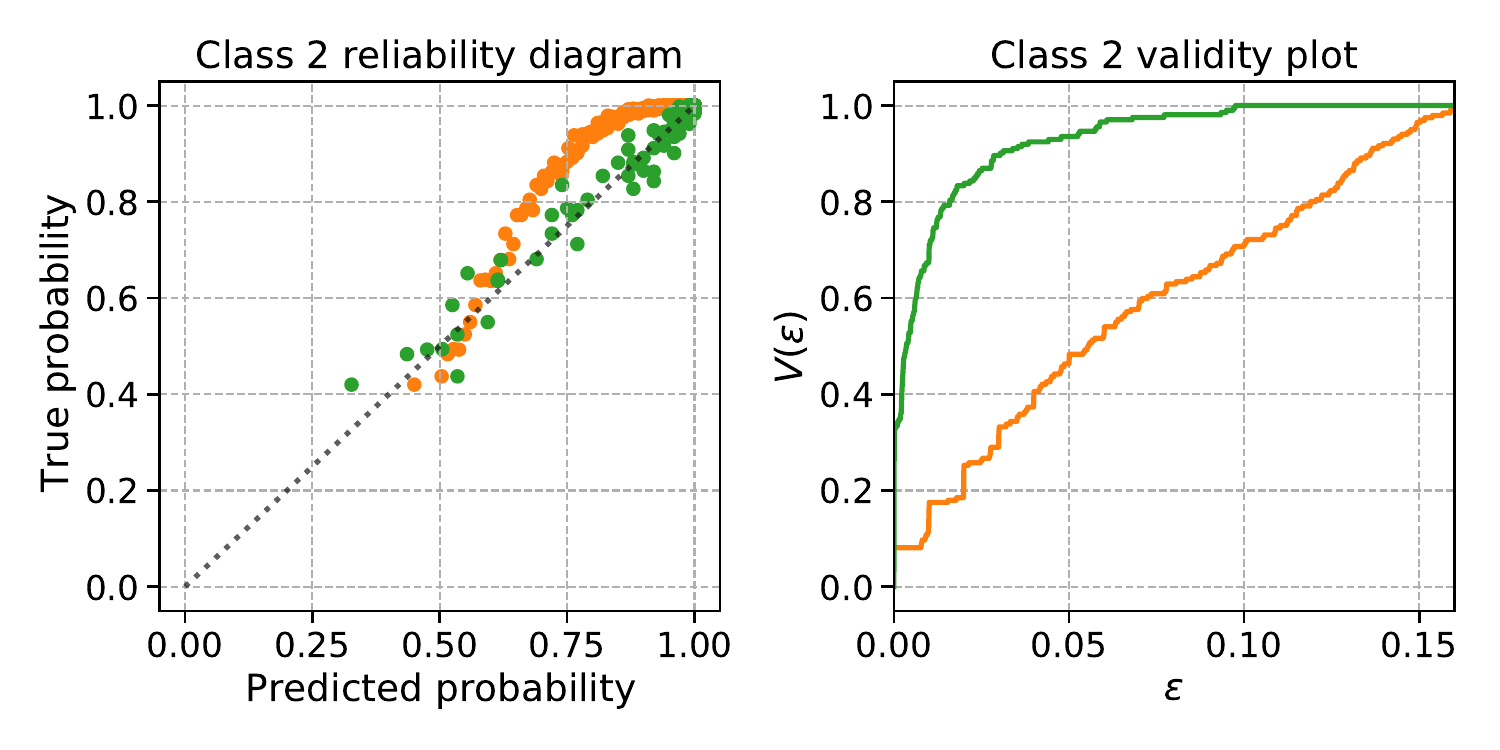}
\includegraphics[width=0.37\linewidth, trim=0 20 0 -20,clip]{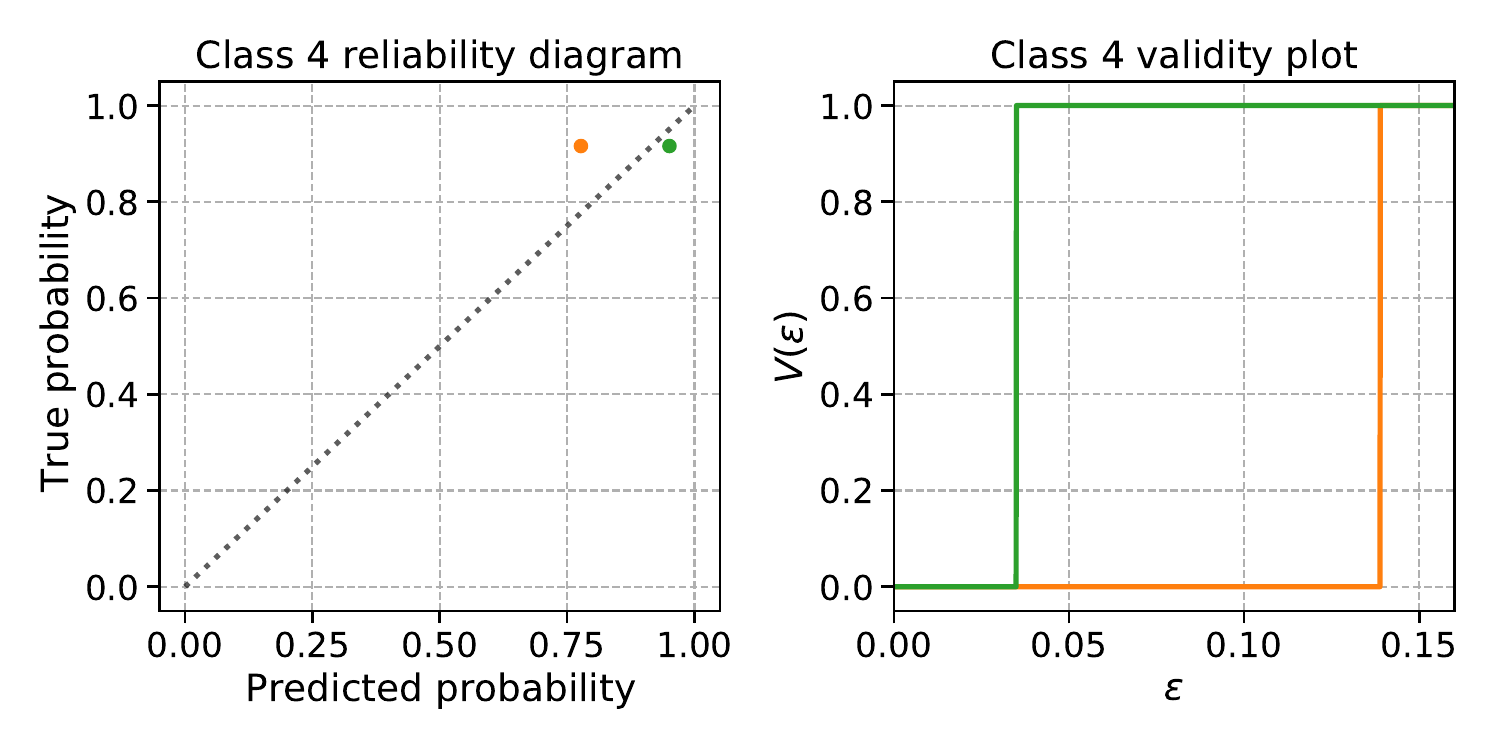}
\includegraphics[width=0.22\linewidth, trim=0 0 0 0,clip]{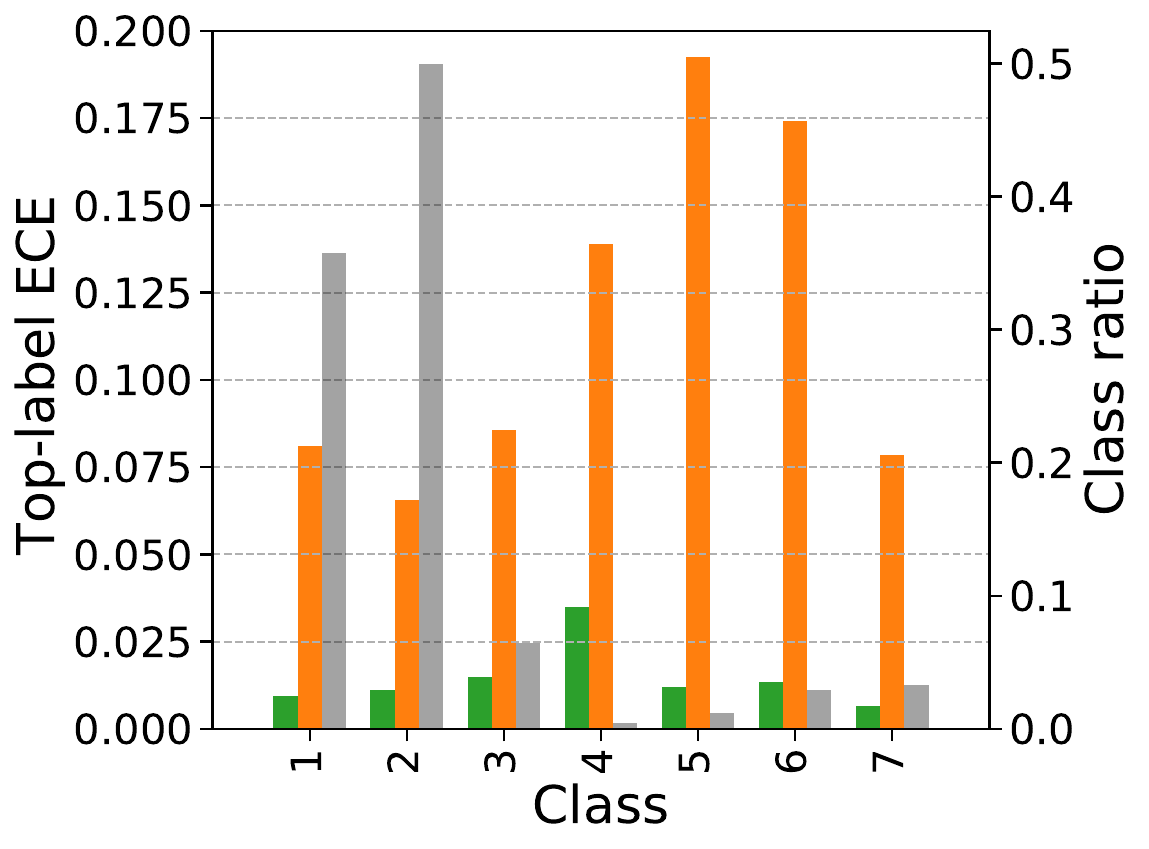}
\caption{Top-label histogram binning (Algorithm~\ref{alg:umd-top-label}) with $k=100$ points per bin. Class 4 has only 183 calibration points. Algorithm~\ref{alg:umd-top-label} adapts and uses only a single bin to ensure that the TL-ECE on class 4 is comparable to the TL-ECE on class 2. Overall, the random forest classifier has significantly higher TL-ECE for the least likely classes (4, 5, and 6), but the post-calibration TL-ECE using binning is quite uniform.}
\label{fig:covtype-adaptive}
\end{subfigure}
\begin{subfigure}{\linewidth}
\includegraphics[width=0.37\linewidth, trim=0 20 0 -20,clip]{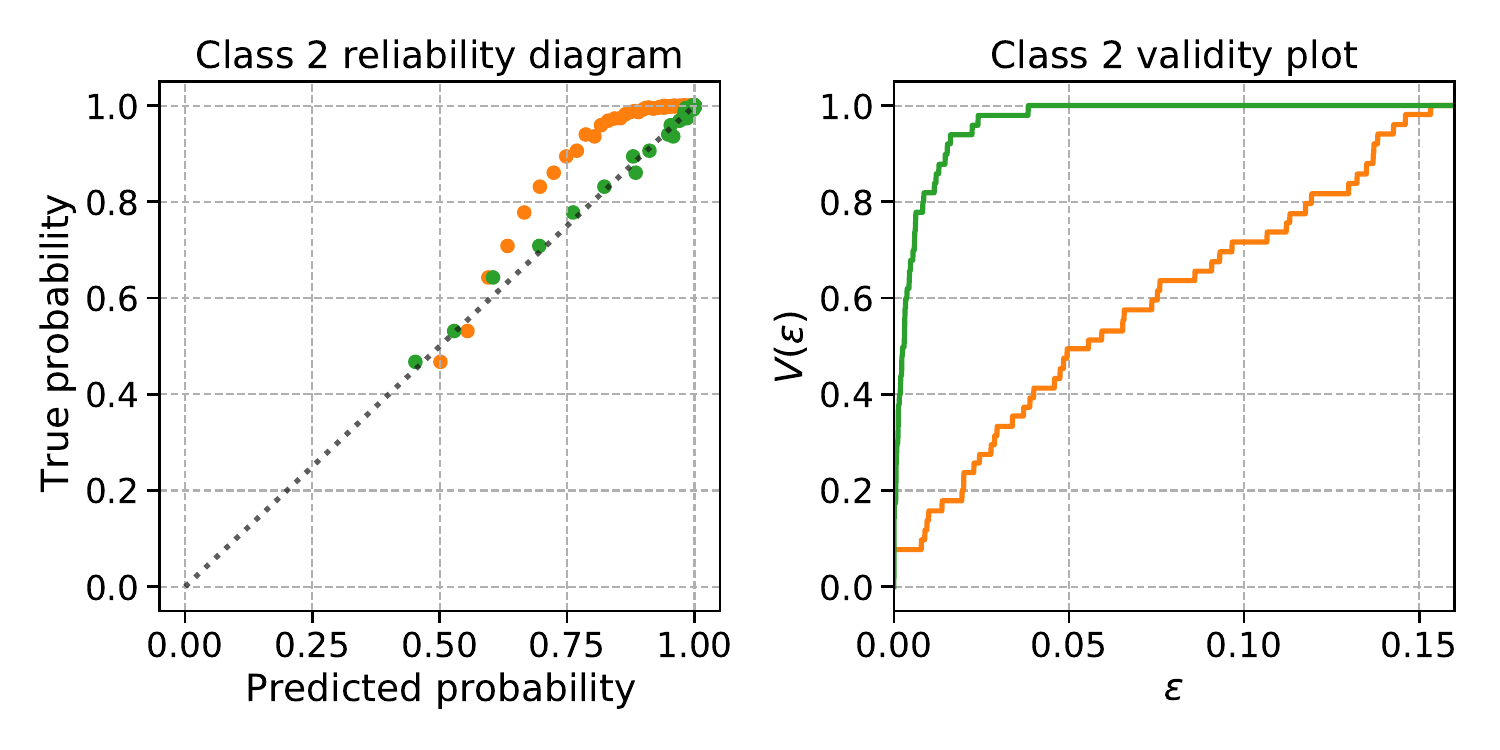}
\includegraphics[width=0.37\linewidth, trim=0 20 0 -20,clip]{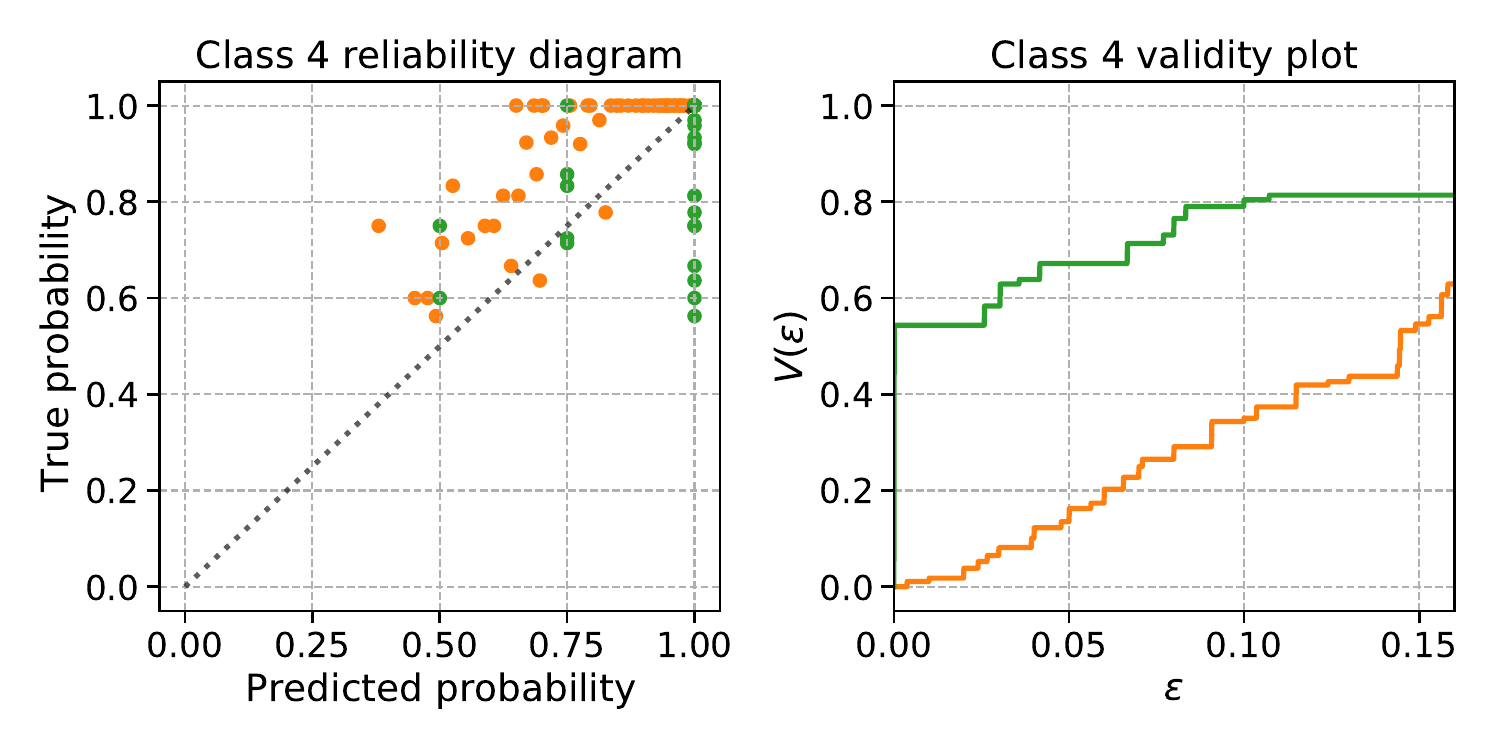}
\includegraphics[width=0.22\linewidth, trim=0 0 0 0,clip]{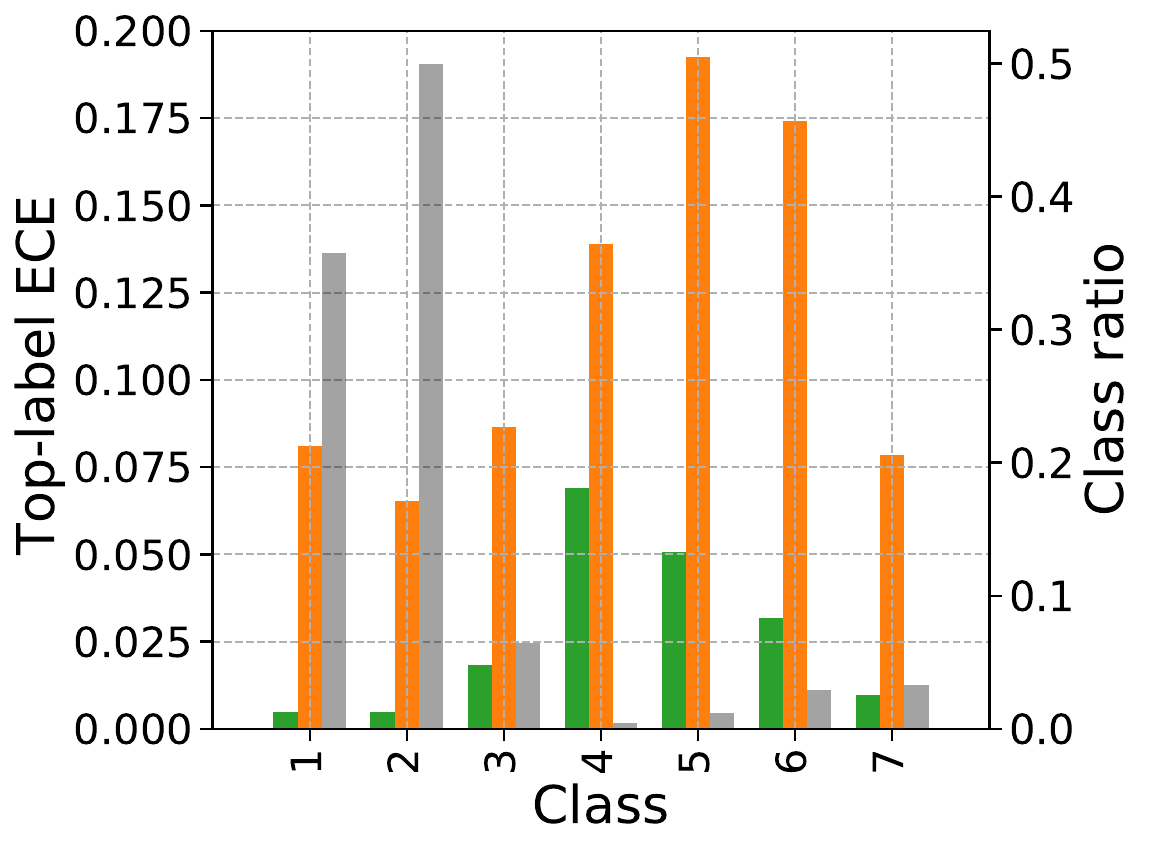}
\caption{Histogram binning with $B=50$ bins for every class. Compared to Figure~\ref{fig:covtype-adaptive}, the post-calibration TL-ECE for the most likely classes decreases while the TL-ECE for the least likely classes increases.}
\label{fig:covtype-fixed}
\end{subfigure}
\caption{Recalibration of a random forest using histogram binning on the class imbalanced COVTYPE-7 dataset (class 2 is roughly 100 times likelier than class 4). By ensuring a fixed number of calibration points per bin, Algorithm~\ref{alg:umd-top-label} obtains relatively uniform top-label calibration across classes (Figure~\ref{fig:covtype-adaptive}). In comparison, if a fixed number of bins are chosen for all classes, the performance deteriorates for the least likely classes (Figure~\ref{fig:covtype-fixed}). }%
\label{fig:covtype-RF}
\end{figure}

\subsection{Top-label histogram binning adapts to class imbalanced datasets}
\label{subsec:covertype-top-label}
The principled methodology of fixing the number of points per bin reaps practical benefits. Figure~\ref{fig:covtype-RF} illustrates this through the performance of HB for the class imbalanced COVTYPE-7 dataset \citep{blackard1999comparative} with class ratio approximately $36\%$ for class 1 and $49\%$ for class 2. The entire dataset has 581012 points which is divided into train-test in the ratio 70:30. Then, 10\% of the training points are held out for calibration ($n = \abs{\Dcal} =$ 40671). The base classifier is a random forest (RF) trained on the remaining training points %
(it achieves around 95\% test accuracy). The RF is then recalibrated using HB. The top-label reliability diagrams in Figure~\ref{fig:covtype-adaptive} illustrate that the original RF (in orange) is \emph{underconfident} on both the most likely and least likely classes. Additional figures in Appendix~\ref{appsec:covtype7} show that the RF is always underconfident no matter which class is predicted as the top-label. HB (in green) recalibrates the RF effectively across all classes. Validity plots \citep{gupta2021distribution} estimate how the LHS of condition~\eqref{eq:top-label-marginal}, denoted as $V(\varepsilon)$, varies with $\varepsilon$. We observe that for all $\varepsilon$, $V(\varepsilon)$  is higher for HB. The rightmost barplot compares the estimated TL-ECE for all classes, and also shows the class proportions. While the original RF is significantly miscalibrated for the less likely classes, HB has a more uniform miscalibration across classes. Figure~\ref{fig:covtype-fixed} considers a slightly different HB algorithm where the number of points per class is not adapted to the number of times the class is predicted, but is fixed beforehand (this corresponds to replacing $\floor{n_l/k}$  in line~\ref{line:call-binary-umd} of Algorithm~\ref{alg:umd-top-label} with a fixed $B \in \naturals$). While even in this setting there is a drop in the TL-ECE compared to the RF model, the final profile is less uniform compared to fixing the number of points per bin. 

The validity plots and top-label reliability diagrams for all the 7 classes are reported in Figure~\ref{fig:covtype-all} in Appendix~\ref{appsec:covtype7}, along with some additional observations. 

\section{Distribution-free class-wise calibration using histogram binning}
\label{appsec:umd-class-wise}

In this section, we formally describe histogram binning (HB) with the class-wise-calibrator (Algorithm~\ref{alg:general-class-wise}) and provide theoretical guarantees for it. The overall procedure is called class-wise-HB. Further details and background on HB are contained in Appendix~\ref{sec:umd-top-label}, where top-label-HB is described.%

\subsection{Formal algorithm}

To achieve class-wise calibration using binary routines, we learn each component function $h_l$ in a 1-v-all fashion as described in Algorithm~\ref{alg:general-class-wise}. Algorithm~\ref{alg:umd-class-wise} contains the pseudocode with the underlying routine as binary HB. To learn $h_l$, we use a dataset $\Dcal_l$, which unlike top-label HB (Algorithm~\ref{alg:umd-top-label}), contains $X_i$ even if $c(X_i) \neq l$. However the $Y_i$ is replaced with $\indicator{Y_i = l}$. The number of points per bin $k_l$ can be different for different classes, but generally one would set $k_1 = \ldots = k_L = k \in \naturals$. Larger values of $k_l$ will lead to smaller $\varepsilon_l$ and $\delta_l$ in the guarantees, at loss of sharpness since the number of bins $\floor{n/k_l}$ would be smaller. 

\begin{algorithm}[t]
	\SetAlgoLined
	\KwInput{Base multiclass predictor $\g : \Xcal \to \Delta^{L-1}$, calibration data $\Dcal = (X_1, Y_1), \ldots, (X_n, Y_n)$}
	\KwHyper{\# points per bin $k_1, k_2, \ldots, k_l\in \naturals^L$ (say each $k_l = 50$), tie-breaking parameter $\delta > 0$ (say $10^{-10}$)}
	\KwOutput{$L$ class-wise calibrated predictors $h_1, h_2, \ldots, h_L$}
    \For{$l \gets 1$ \textbf{to} $L$}{
	    $\Dcal_l \gets \{(X_i, \indicator{Y_i = l}) : i \in [n])\}$\;
	    $h_l \gets$ Binary-histogram-binning$(g_l, \Dcal_l, \floor{n/k_l}, \delta)$\;
	}
	\Return $(h_1, h_2, \ldots, h_L)$\;
	\caption{Class-wise histogram binning}
	\label{alg:umd-class-wise}
\end{algorithm}

\subsection{Calibration guarantees}

A general algorithm $\Acal$ for class-wise calibration takes as input calibration data $\Dcal$ and an initial classifier $\g$ to produce an approximately class-wise calibrated predictor $\h : \Xcal \to [0,1]^L$. Define the notation $\bfepsilon = (\varepsilon_1, \varepsilon_2, \ldots, \varepsilon_L) \in (0, 1)^L$ and $\bfalpha = (\alpha_1, \alpha_2, \ldots, \alpha_L) \in (0, 1)^L$. 
\begin{definition}[Marginal and conditional class-wise calibration]
    \label{def:class-wise-calibration}
    Let $\bfepsilon, \bfalpha \in (0, 1)^L$ be some given levels of approximation and failure respectively. An algorithm $\Acal: (\g,\Dcal)\mapsto \h$ is 
    \begin{enumerate}[label=(\alph*),leftmargin=0.5cm, itemsep=-0.1cm, topsep=0cm]
        \item $(\bfepsilon, \bfalpha)$-marginally class-wise calibrated if for every distribution $P$ over $\Xcal \times [L]$ and for every $l \in [L]$
        \begin{equation}
            \Pr\Big(\abs{\Prob(Y = l \mid h_l(X)) - h_l(X)} \leq \varepsilon_l\Big) \geq 1-\alpha_l.
            \label{eq:class-wise-marginal}
        \end{equation}
        \item $(\bfepsilon, \bfalpha)$-conditionally class-wise calibrated if for every distribution $P$ over $\Xcal \times [L]$ and for every $l \in [L]$, 

        \begin{equation}
            \Pr\Big(\forall r\in \text{Range}(h_l), \abs{\Prob(Y = l \mid h_l(X) = r) -r} \leq \varepsilon_l \Big) \geq 1-\alpha_l.
            \label{eq:class-wise-conditional}   
        \end{equation}
    \end{enumerate}
\end{definition}
Definition~\ref{def:class-wise-calibration} requires that each $h_l$ is $(\varepsilon_l, \alpha_l)$ calibrated in the binary senses defined by \citet[Definitions 1 and 2]{gupta2021calibration}. From Definition~\ref{def:class-wise-calibration}, we can also \emph{uniform} bounds that hold simultaneously over every $l \in [L]$. Let $\alpha = \sum_{l=1}^L \alpha_l$ and $\varepsilon = \max_{l \in [L]} \varepsilon_l$. Then \eqref{eq:class-wise-marginal} implies 
 \begin{equation}
    \Pr\Big(\forall l \in [L], \abs{\Prob(Y = l \mid h_l(X)) - h_l(X)} \leq \varepsilon\Big) \geq 1-\alpha,
    \label{eq:class-wise-marginal-alternative}
\end{equation}
and \eqref{eq:class-wise-conditional} implies 
 \begin{equation}
    \Pr\Big(\forall l \in [L], r\in \text{Range}(h_l), \abs{\Prob(Y = l \mid h_l(X) = r) -r} \leq \varepsilon\Big) \geq 1-\alpha.
        \label{eq:class-wise-conditional-alternative}
\end{equation}
The choice of not including the uniformity over $L$ in Definition~\ref{def:class-wise-calibration} reveals the nature of our class-wise HB algorithm and the upcoming theoretical guarantees: (a) we learn the $h_l$'s separately for each $l$ and do not combine the learnt functions in any way (such as normalization), (b) we do not combine the calibration inequalities for different $[L]$ in any other way other than a union bound. Thus the only way we can show~\eqref{eq:class-wise-marginal-alternative} (or \eqref{eq:class-wise-conditional-alternative}) is by using a union bound over~\eqref{eq:class-wise-marginal} (or \eqref{eq:class-wise-conditional}).

We now state the distribution-free calibration guarantees satisfied by Algorithm~\ref{alg:umd-class-wise}. %

\begin{theorem}
\label{thm:umd-class-wise}
Fix hyperparameters $\delta > 0$ (arbitrarily small) and points per bin $k_1, k_2, \ldots, k_l \geq 2$, and assume $n_l \geq k_l$ for every $l \in [L]$. %
Then, for every $l \in [L]$, for any $\alpha_l \in (0, 1)$, Algorithm~\ref{alg:umd-class-wise} is $(\bfepsilon^{\bf{(1)}}, \bfalpha)$-marginally
and $(\bfepsilon^{\bf{(2)}}, \bfalpha)$-conditionally class-wise calibrated with
\begin{equation}
    \varepsilon_{l}^{(1)} = \sqrt{\frac{\log(2/\alpha_l)}{2(k_l-1)}} + \delta, \qquad \text{and}\qquad  \varepsilon_{l}^{(2)} = \sqrt{\frac{\log(2n/k_l\alpha_l)}{2(k_l-1)}} + \delta.   
    \label{eq:umd-class-wise}
\end{equation}
Further, for any distribution $P$ over $\Xcal \times [L]$,
\begin{enumerate}[label=(\alph*)]
    \item $\smash{P(\text{CW-ECE}(c, h) \leq \max_{l \in [L]}\varepsilon_{l}^{(2)}) \geq 1-\sum_{l\in [L]}\alpha_l}$, and
    \item $\smash{\Exp{}{\text{CW-ECE}(c, h)} \leq \max_{l \in [L]}\sqrt{1/2k_l} + \delta}$.
\end{enumerate}
\end{theorem}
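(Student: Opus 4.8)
The plan is to reduce everything to the distribution-free binary histogram binning (HB) guarantee of \citet{gupta2021distribution}, applied separately to each of the $L$ one-versus-all binary problems, and then to combine the $L$ guarantees using nothing more than union bounds and the elementary fact that an average is at most a maximum. The starting point is the observation that Algorithm~\ref{alg:umd-class-wise} is literally binary HB run $L$ times independently: for each $l\in[L]$ it feeds the base score $g_l$ and the relabelled data $\Dcal_l=\{(X_i,\indicator{Y_i=l}):i\in[n]\}$ into Binary-histogram-binning with $\floor{n/k_l}$ bins. Fixing $l$ and writing $Z:=\indicator{Y=l}$, the pair $(X,Z)$ follows the distribution on $\Xcal\times\{0,1\}$ induced by $P$, and $h_l$ is exactly the output of binary HB on this binary calibration instance. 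Since the binary HB guarantee holds for \emph{every} distribution, it applies verbatim to each of these induced binary distributions and to every $P$; the size hypothesis on the calibration set used for class $l$ is exactly what is needed to guarantee that the $\floor{n/k_l}$ bins each receive at least $k_l$ points, which is in turn what the binary result requires to deliver its $\widetilde O(1/\sqrt{k_l})$ rates.

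First I would quote the binary HB theorem for each $l$: with probability at least $1-\alpha_l$ over $\Dcal$, the predictor $h_l$ is marginally binary-calibrated at level $\varepsilon_l^{(1)}$, and with probability at least $1-\alpha_l$ it is conditionally binary-calibrated at level $\varepsilon_l^{(2)}$, where $\varepsilon_l^{(1)},\varepsilon_l^{(2)}$ are exactly the quantities in \eqref{eq:umd-class-wise}; structurally, the additive $\delta$ tracks the tie-breaking perturbation, the $\log(2/\alpha_l)$ comes from a single-bin Hoeffding-type bound, and the extra $\log(n/k_l)$ in the conditional case comes from a union bound over the at most $n/k_l$ bins, all inherited wholesale from \citet{gupta2021distribution} together with the $k_l-1$ in the denominator. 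Because the marginal and conditional class-wise calibration notions in Definition~\ref{def:class-wise-calibration} are stated one class at a time, matching the displayed binary statement for each $l$ against \eqref{eq:class-wise-marginal} and \eqref{eq:class-wise-conditional} immediately yields the $(\bfepsilon^{(1)},\bfalpha)$-marginal and $(\bfepsilon^{(2)},\bfalpha)$-conditional class-wise calibration claims; no interaction between classes enters at this stage.

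For the two CW-ECE bounds I would use that, by \eqref{eq:CW-ECE}, $\text{CW-ECE}(c,\h)=L^{-1}\sum_{l=1}^L\mathrm{bECE}_l$, where $\mathrm{bECE}_l:=\E_X\abs{\Prob(Y=l\mid h_l(X))-h_l(X)}$ is the binary $\ell_1$-ECE of $h_l$ on the $l$-th relabelled problem, for which the binary HB theorem also supplies $P(\mathrm{bECE}_l\le\varepsilon_l^{(2)})\ge 1-\alpha_l$ and $\E[\mathrm{bECE}_l]\le\sqrt{1/2k_l}+\delta$. A union bound over $l\in[L]$ shows that on an event of probability at least $1-\sum_l\alpha_l$ all $L$ high-probability events hold simultaneously, on which $\mathrm{bECE}_l\le\varepsilon_l^{(2)}\le\max_{l'}\varepsilon_{l'}^{(2)}$ for every $l$, so averaging gives $\text{CW-ECE}(c,\h)\le\max_l\varepsilon_l^{(2)}$; and linearity of expectation together with the per-class in-expectation bound gives $\Exp{}{\text{CW-ECE}(c,\h)}=L^{-1}\sum_l\E[\mathrm{bECE}_l]\le L^{-1}\sum_l(\sqrt{1/2k_l}+\delta)\le\max_l\sqrt{1/2k_l}+\delta$.

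I do not expect a genuine obstacle: the analytic content — the concentration behind the $\sqrt{\log(\cdot)/(k_l-1)}$ terms and the in-expectation ECE bound — is imported intact from the binary analysis of \citet{gupta2021distribution}, and the rest is a one-versus-all reduction. The only points needing care are bookkeeping: confirming that the cited binary guarantee is genuinely distribution-free, so it can be invoked simultaneously for all $l$ and all $P$; checking that choosing $\floor{n/k_l}$ bins forces at least $k_l$ points into every bin so the binary bounds hold with the stated $k_l$; and keeping straight the asymmetry — the same one discussed around \eqref{eq:class-wise-marginal-alternative} and \eqref{eq:class-wise-conditional-alternative} — whereby the per-class failure probabilities add (yielding $\sum_l\alpha_l$) while the per-class error levels enter only through a maximum (yielding $\max_l\varepsilon_l^{(2)}$ and $\max_l\sqrt{1/2k_l}$).
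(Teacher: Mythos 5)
Your proposal is correct and follows essentially the same route as the paper: invoke the distribution-free binary histogram-binning guarantees of \citet{gupta2021distribution} separately on each induced one-versus-all distribution (with $\floor{n/k_l}$ bins, so that $\floor{n/B_l}\geq k_l$ justifies the stated $\varepsilon_l^{(1)},\varepsilon_l^{(2)}$), read off the per-class marginal and conditional statements directly against Definition~\ref{def:class-wise-calibration}, and obtain the two CW-ECE bounds via a union bound over classes plus average-at-most-maximum and linearity of expectation. The only cosmetic difference is that the paper derives the high-probability CW-ECE bound by first establishing the conditional calibration event for all $l$ and $r$ and then deducing the ECE bound, whereas you cite the per-class high-probability ECE bound directly; these are the same argument.
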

Theorem~\ref{thm:umd-class-wise} is proved in Appendix~\ref{appsec:proofs}. The proof follows by using the  result of \citet[Theorem 2]{gupta2021distribution}, derived in the binary calibration setting, for each $h_l$ separately. \citet{gupta2021distribution} proved a more general result for general $\ell_p$-ECE bounds. Similar results can also be derived for the suitably defined $\ell_p$-CW-ECE.

As discussed in Section~\ref{subsec:m2b-calibrator}, unlike previous works \citep{zadrozny2002transforming, guo2017nn_calibration, kull2019beyond}, Algorithm~\ref{alg:umd-class-wise} does not normalize the $h_l$'s. We do not know how to derive Theorem~\ref{thm:umd-class-wise} style results for a normalized version of Algorithm~\ref{alg:umd-class-wise}.  

\section{Figures for Appendix~\ref{appsec:additional-exps}}

Appendix~\ref{appsec:additional-exps} begins on page~\pageref{appsec:additional-exps}. The relevant figures for Appendix~\ref{appsec:additional-exps} are displayed on the following pages.

\newpage

\begin{figure}[H]
\centering 
\includegraphics[width=0.7\linewidth]{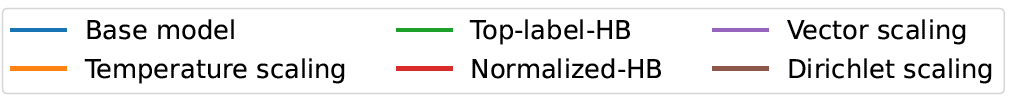}
\begin{subfigure}[t]{\linewidth}
\vspace{0.2cm}
\includegraphics[width=0.24\textwidth]{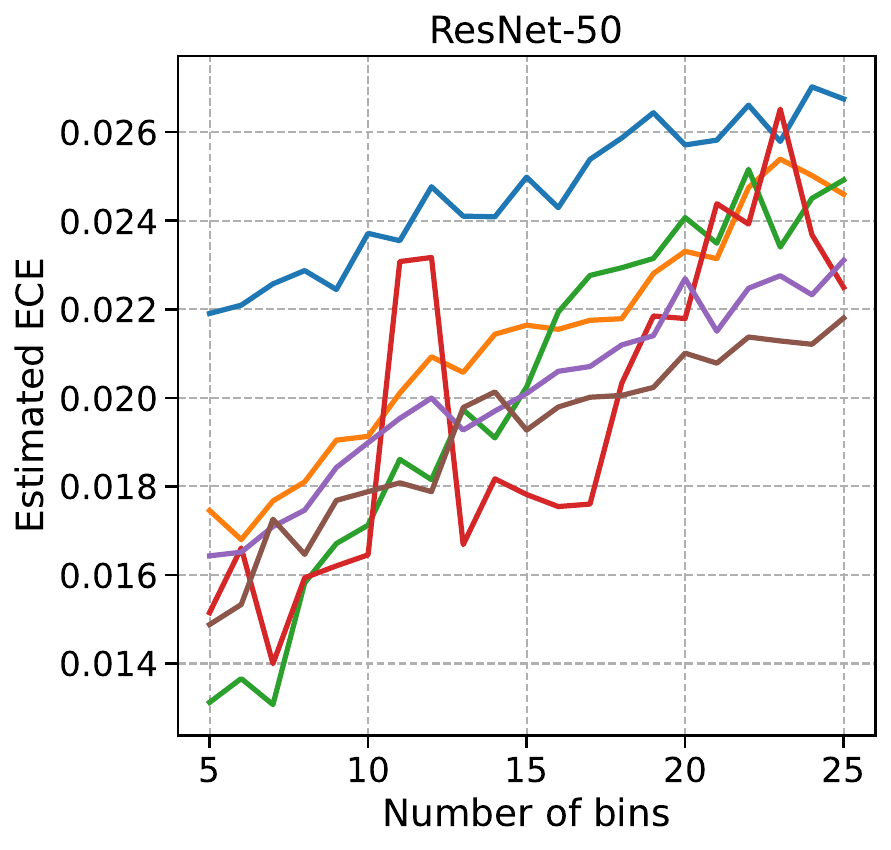}
\includegraphics[width=0.24\textwidth]{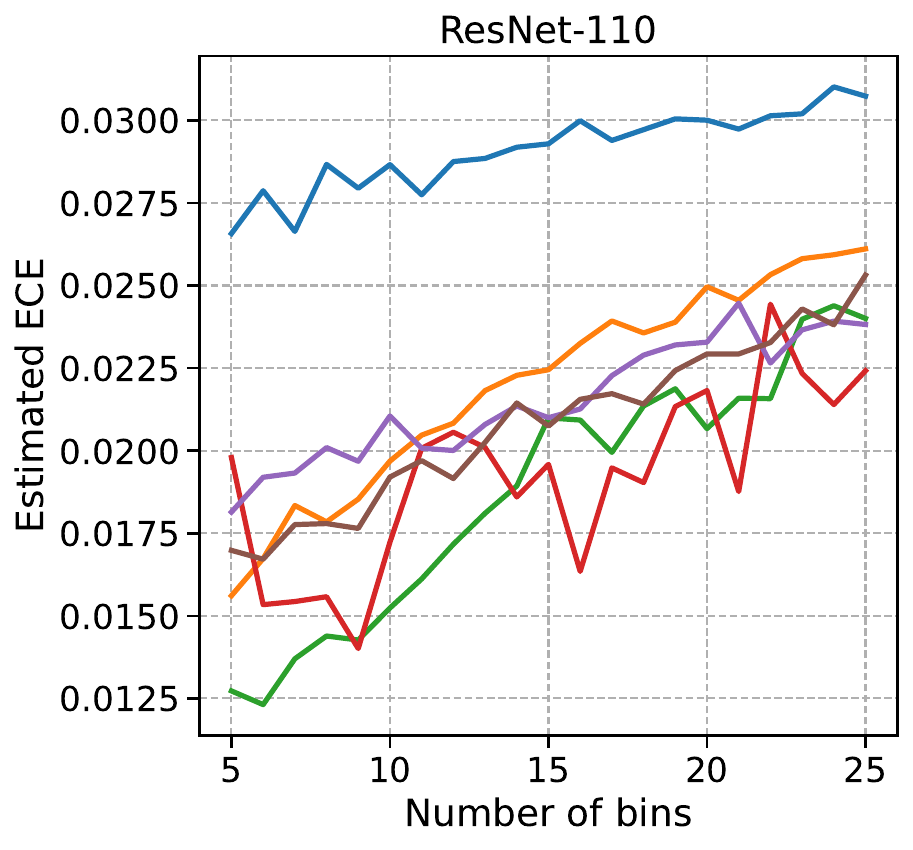}
\includegraphics[width=0.24\textwidth]{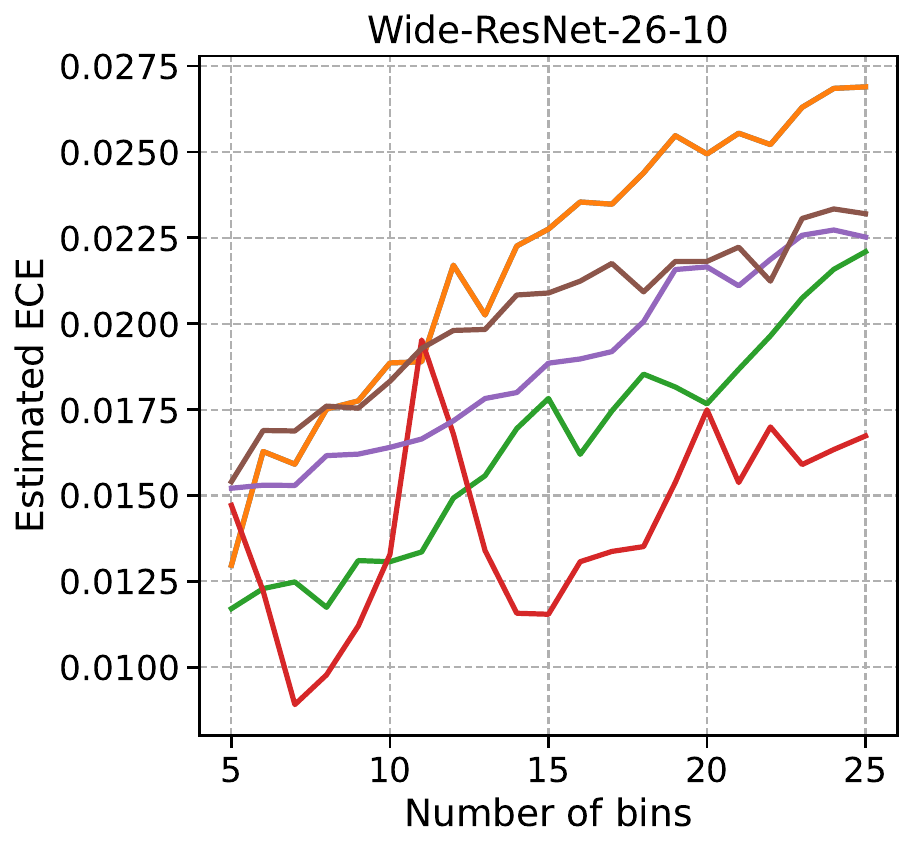}
\includegraphics[width=0.24\textwidth]{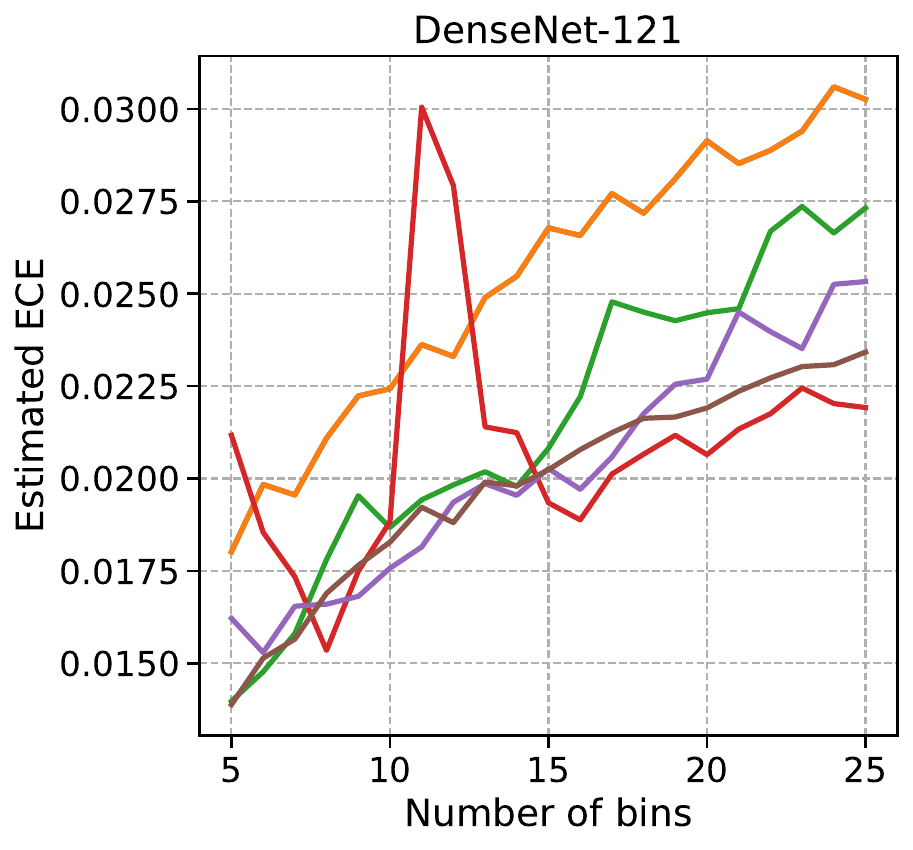}
\caption{TL-ECE estimates on CIFAR-10 with Brier score. TL-HB is close to the best in each case. While CW-HB performs the best at $B=15$, the ECE estimate may not be reliable since it is highly variable across bins.  }
\label{fig:cifar10-toplabel-brier}
\end{subfigure}
\begin{subfigure}[t]{\linewidth}
\includegraphics[width=0.24\textwidth]{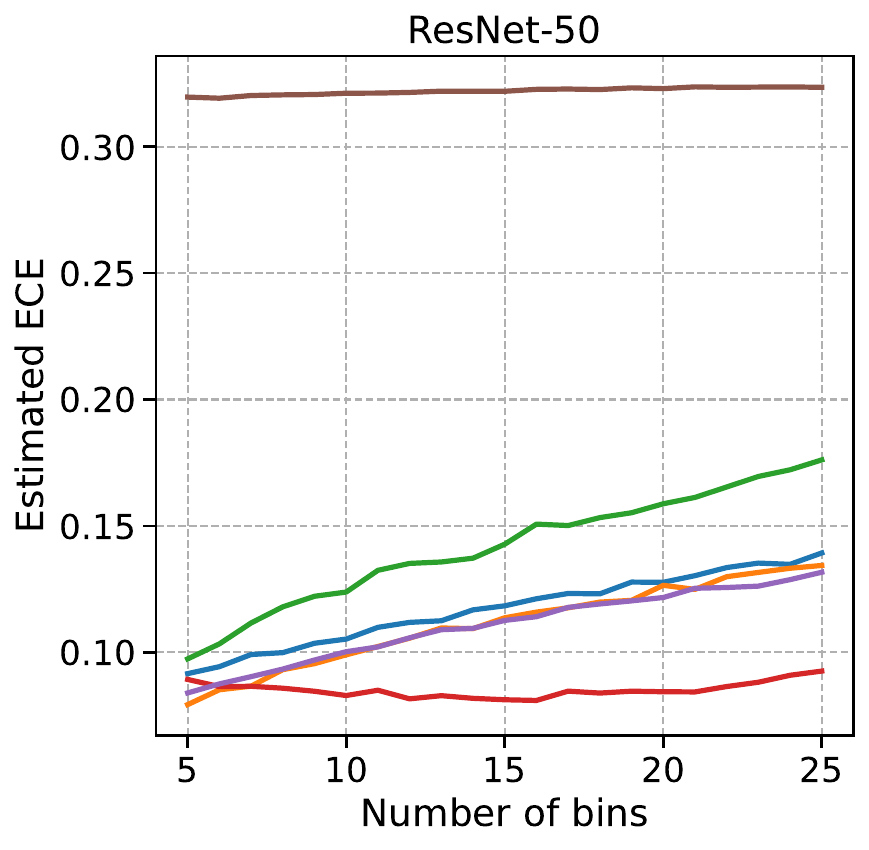}
\includegraphics[width=0.24\textwidth]{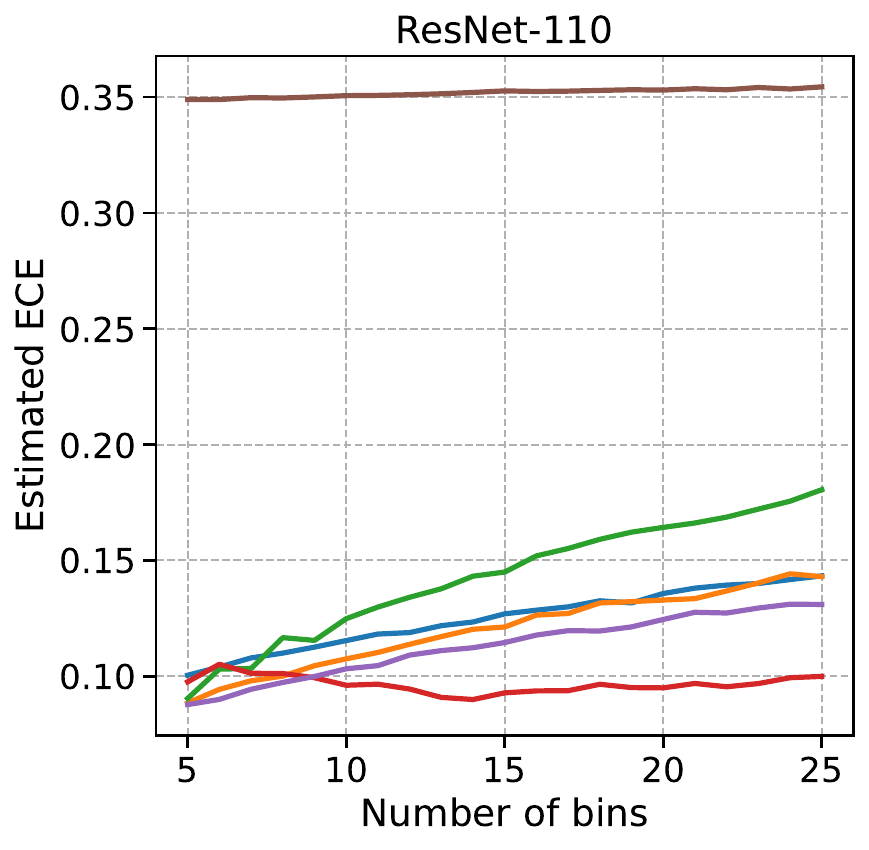}
\includegraphics[width=0.24\textwidth]{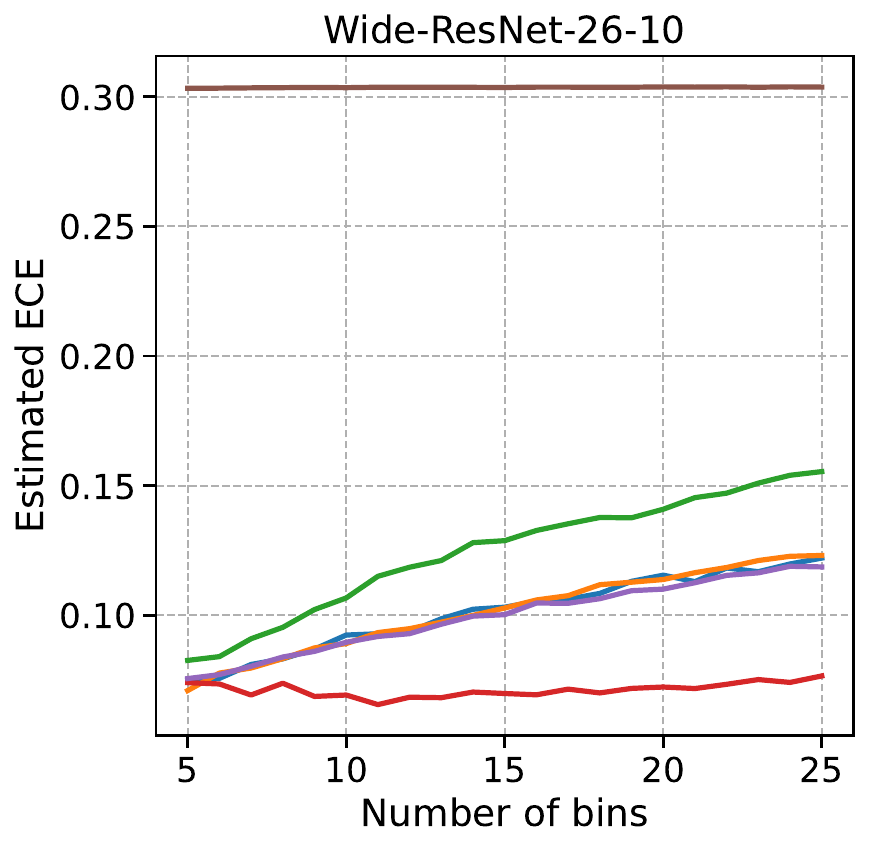}
\includegraphics[width=0.24\textwidth]{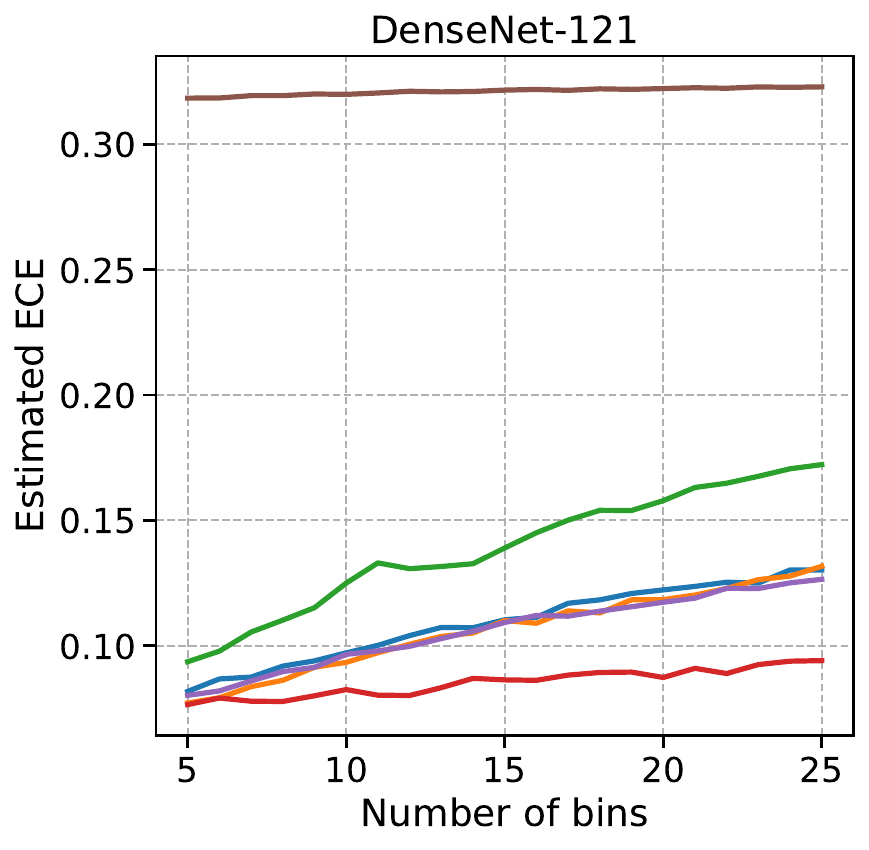}
\caption{TL-ECE estimates on CIFAR-100 with Brier score. N-HB is the best performing method, while DS is the worst performing method, across different numbers of bins. TL-HB  performs worse than TS and VS.}
\label{fig:cifar100-toplabel-brier}
\end{subfigure}
\begin{subfigure}[t]{\linewidth}
\includegraphics[width=0.24\textwidth]{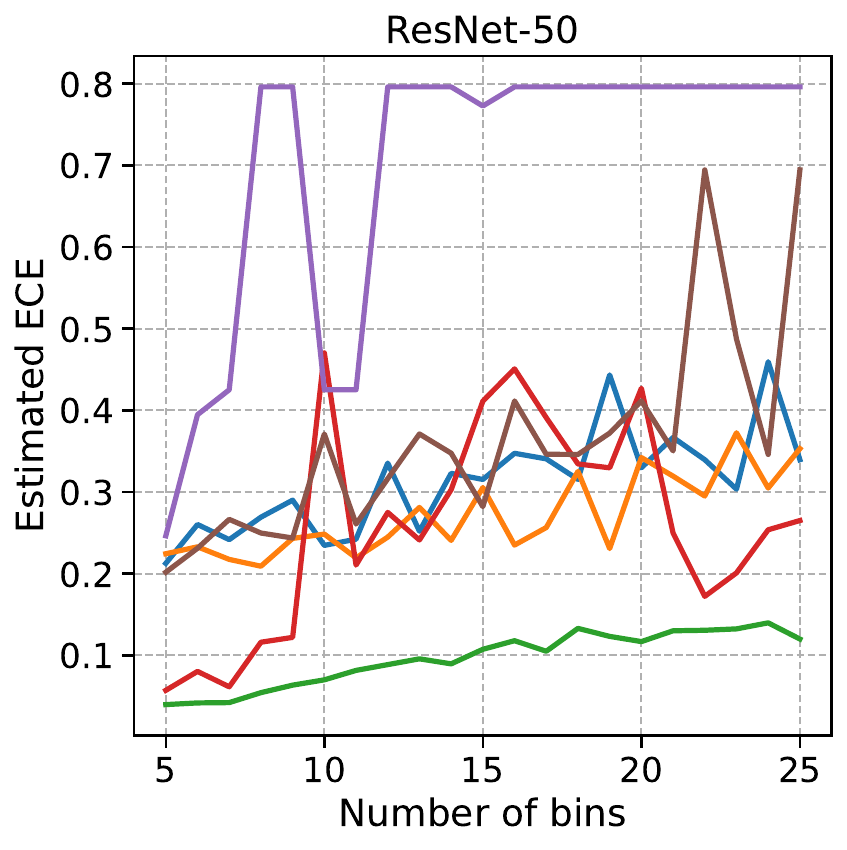}
\includegraphics[width=0.24\textwidth]{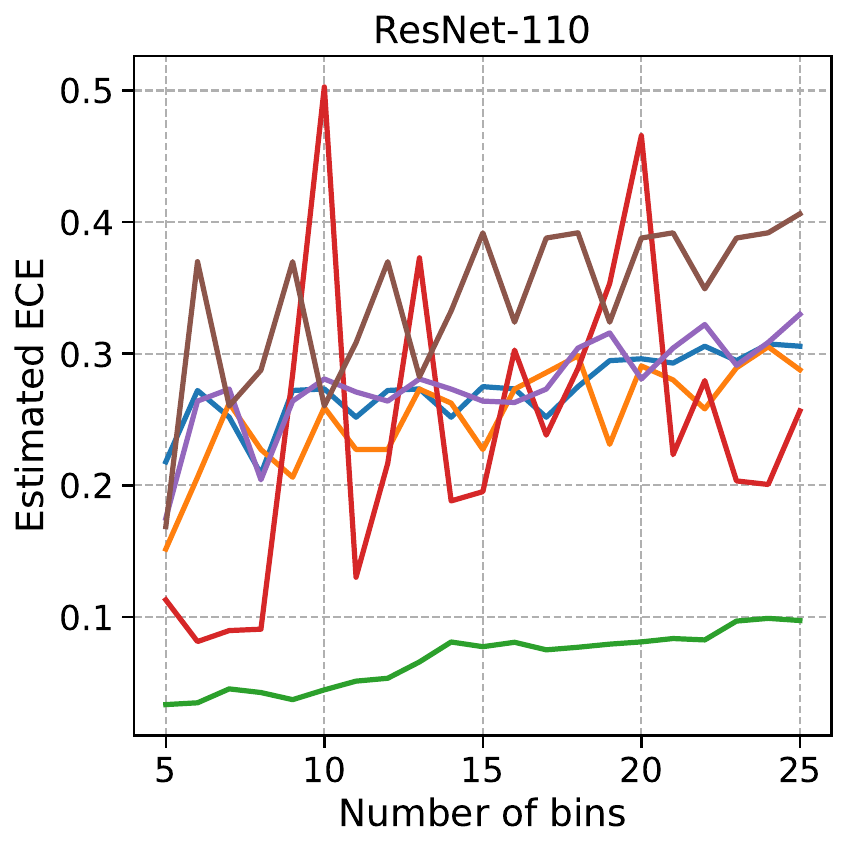}
\includegraphics[width=0.24\textwidth]{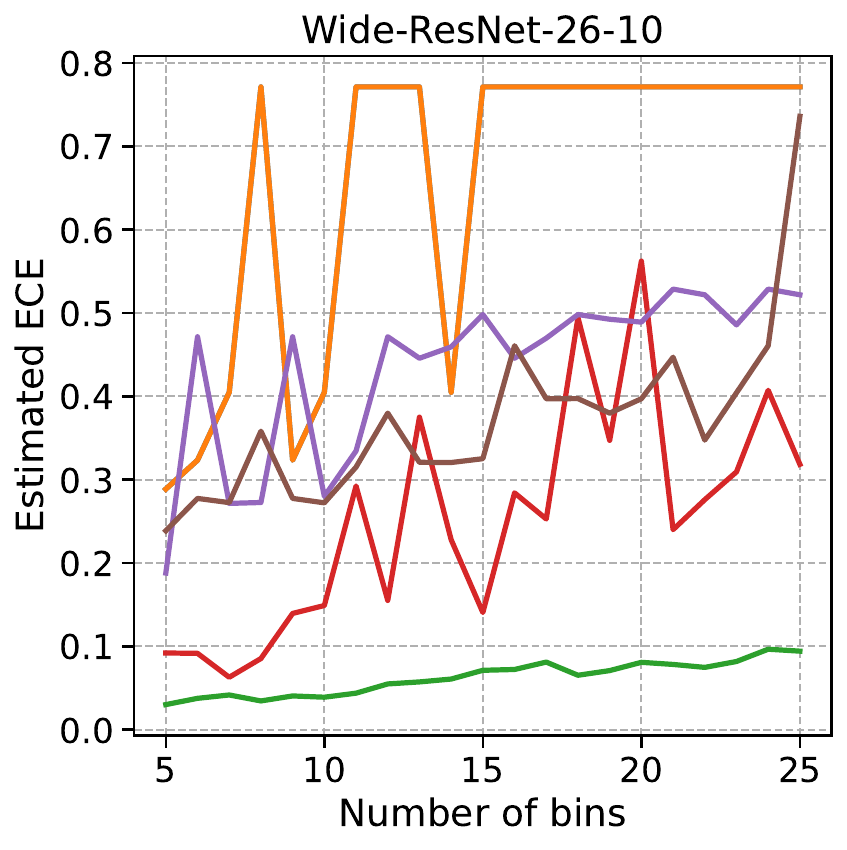}
\includegraphics[width=0.24\textwidth]{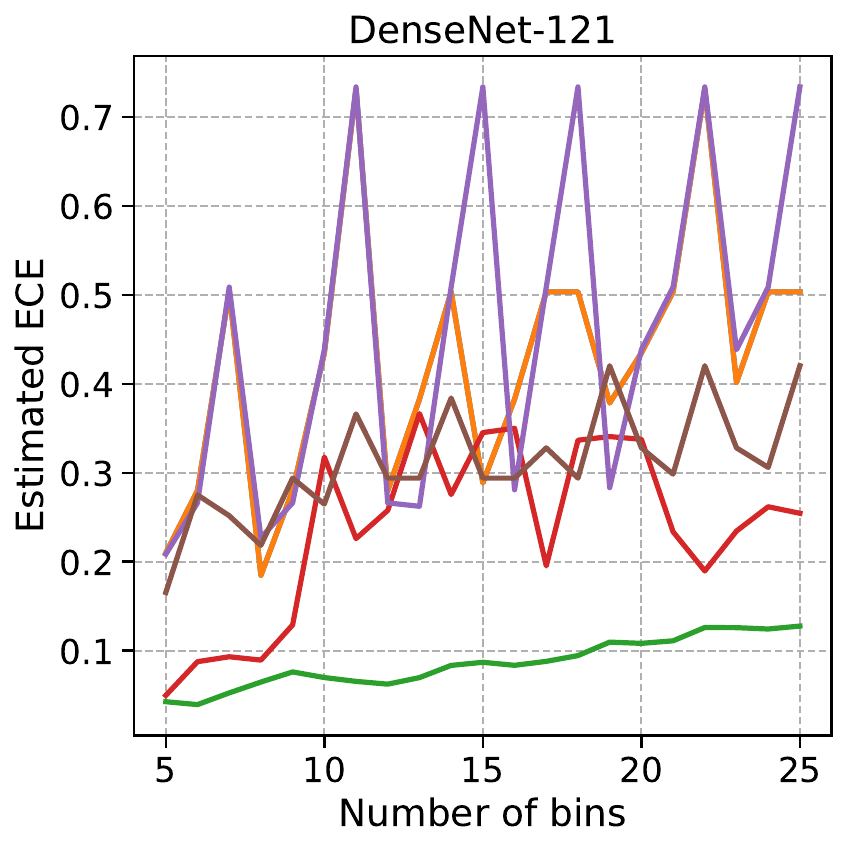}
\caption{TL-MCE estimates on CIFAR-10 with Brier score. The only reliably and consistently well-performing method is TL-HB.}
\label{fig:cifar10-conditional-brier}
\end{subfigure}
\begin{subfigure}[t]{\linewidth}
\includegraphics[width=0.24\textwidth]{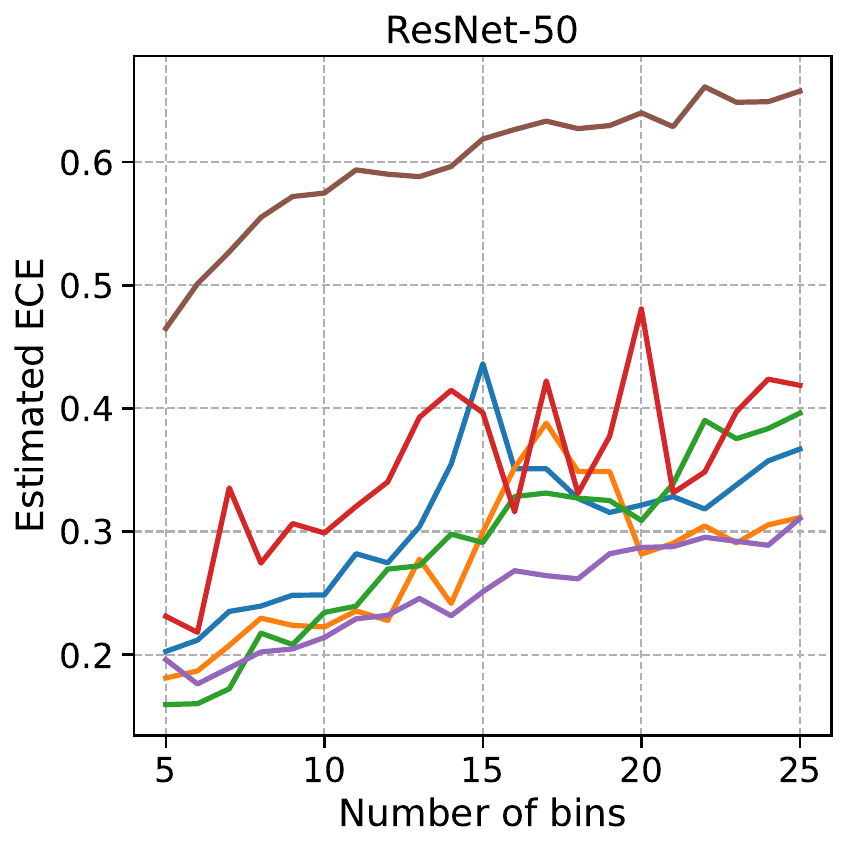}
\includegraphics[width=0.24\textwidth]{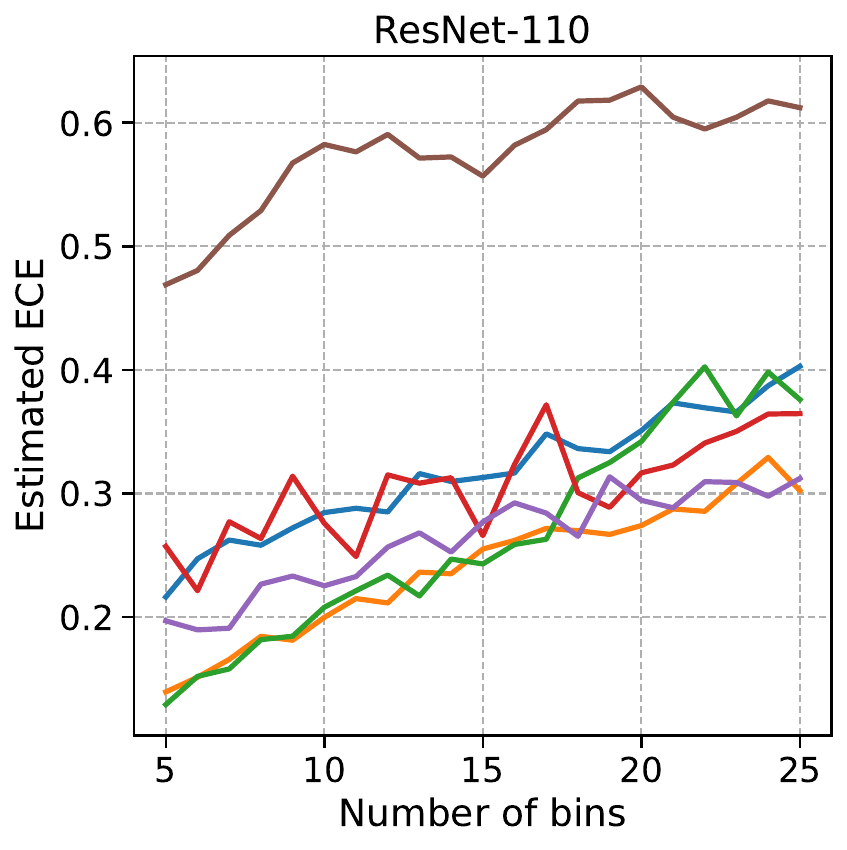}
\includegraphics[width=0.24\textwidth]{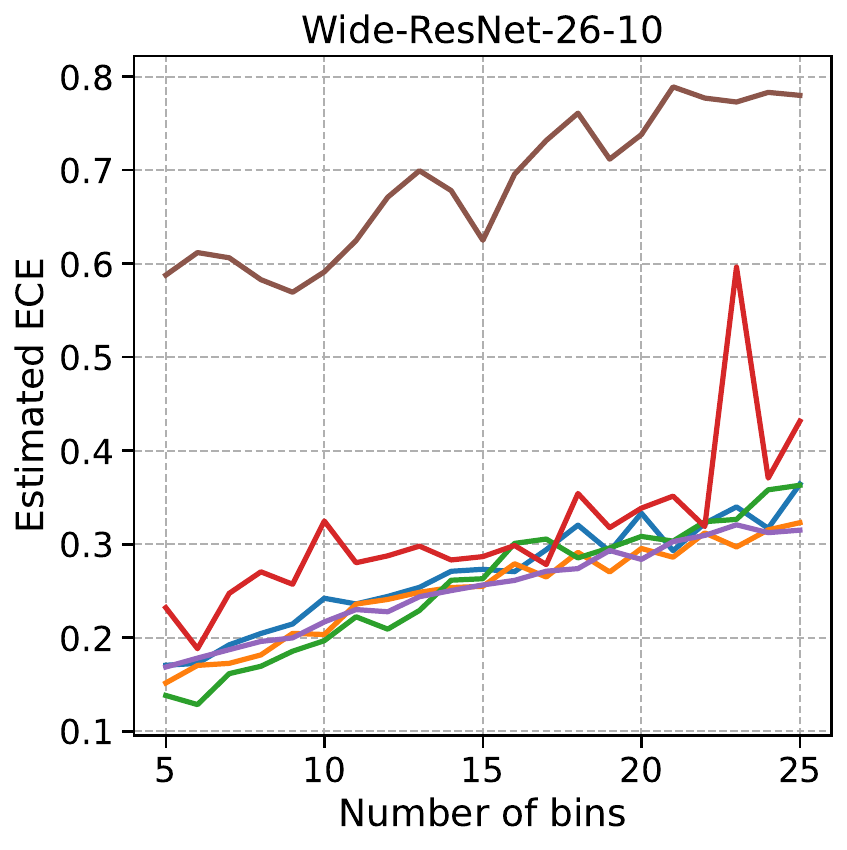}
\includegraphics[width=0.24\textwidth]{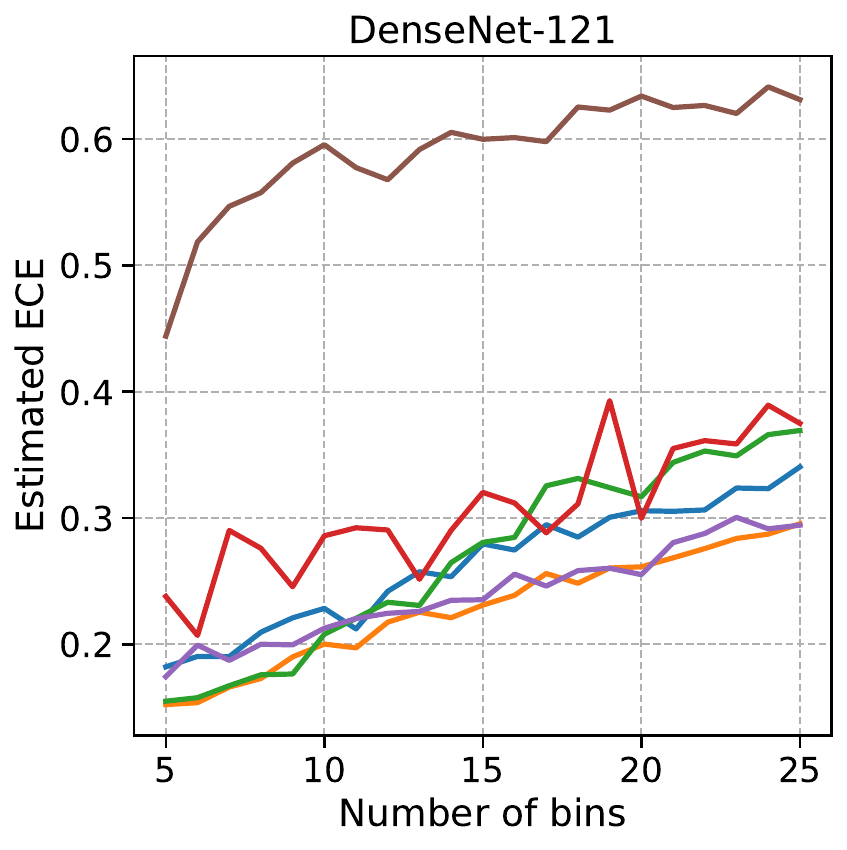}
\caption{TL-MCE estimates on CIFAR-100 with Brier score. DS is the worst performing method. Other methods perform across different values of $B$.}
\label{fig:cifar100-conditional-brier}
\end{subfigure}
\caption{Table~\ref{tab:top-label-brier} style results with the number of bins varied as $B \in [5, 25]$. See Appendix~\ref{appsec:figs-ablations} for further details. The captions summarize the findings in each case. In most cases, the findings are similar to those with $B=15$. The notable exception is that performance of N-HB on CIFAR-10 for TL-ECE while very good at $B=15$, is quite inconsistent when seen across different bins. In some cases, the blue base model line and the orange temperature scaling line coincide. This occurs since the optimal temperature on the calibration data was learnt to be $T=1$, which corresponds to not changing the base model at all.}
\label{fig:top-label-brier}
\end{figure}

\begin{figure}[H]
\centering 
\includegraphics[width=0.7\linewidth]{legend_tlhb.png}
\begin{subfigure}[t]{\linewidth}
\vspace{0.2cm}
\includegraphics[width=0.24\textwidth]{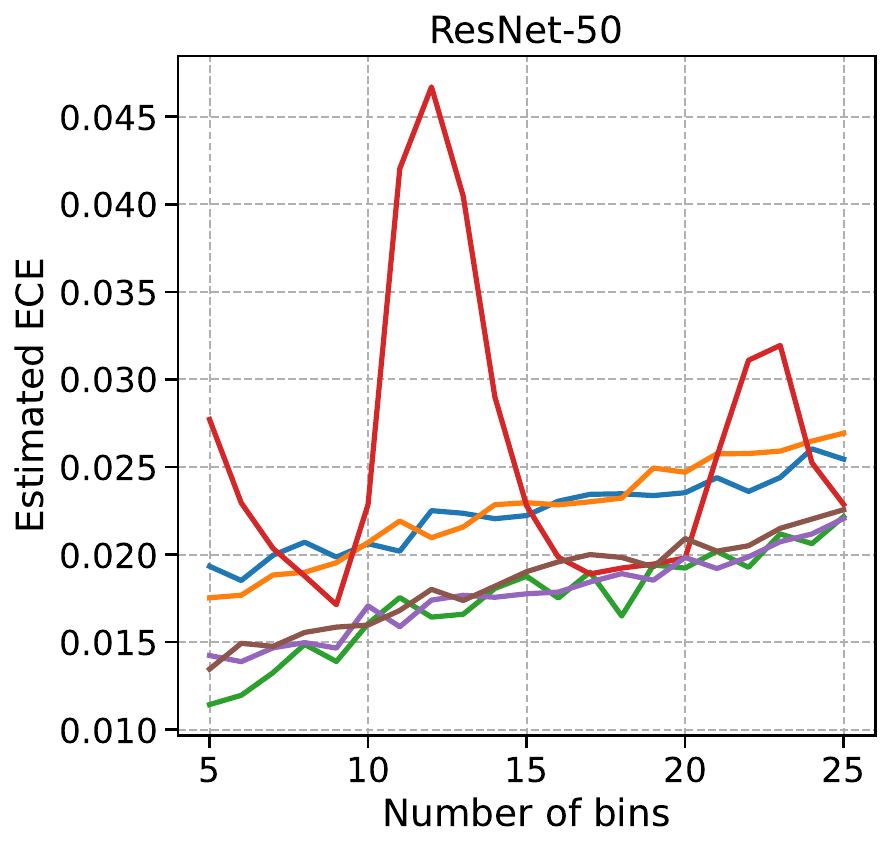}
\includegraphics[width=0.24\textwidth]{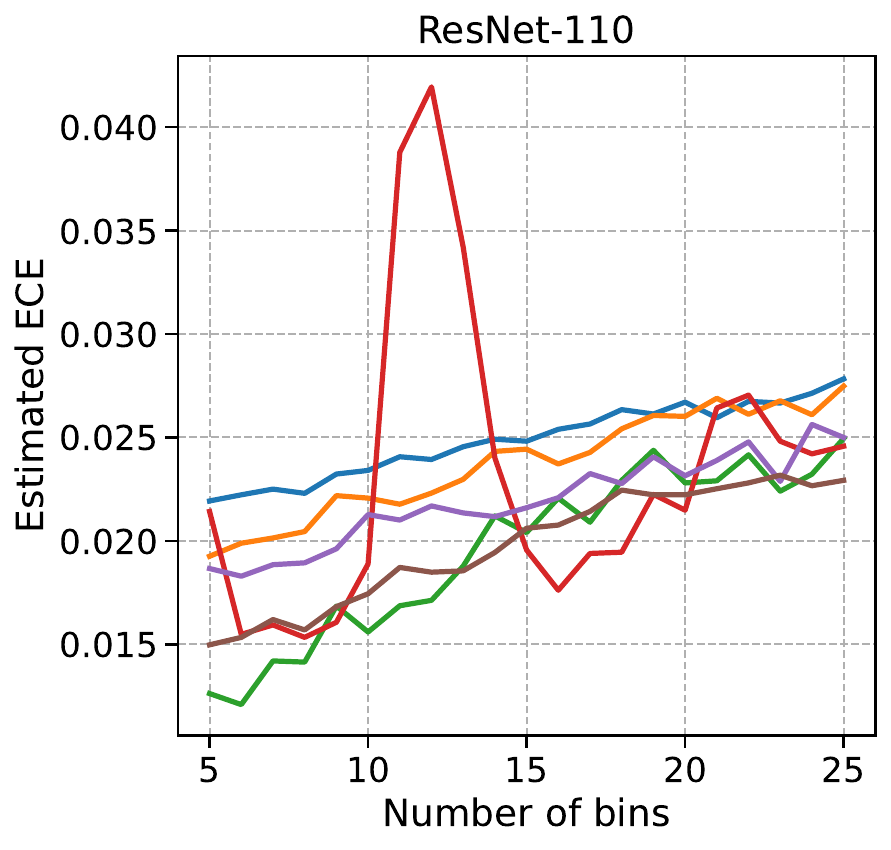}
\includegraphics[width=0.24\textwidth]{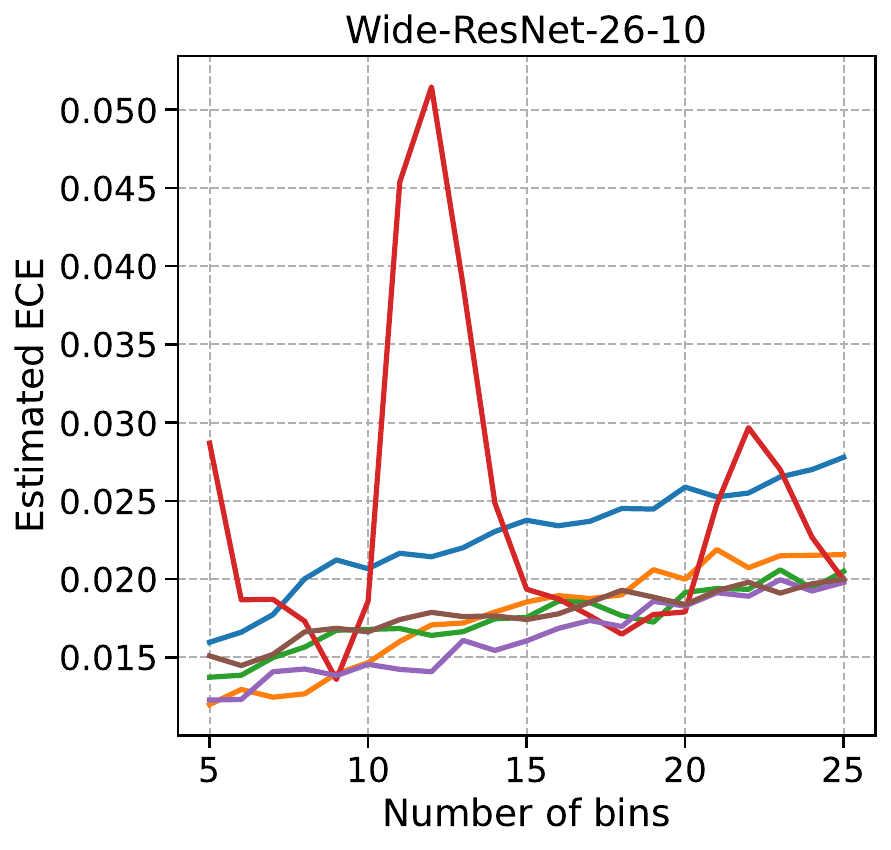}
\includegraphics[width=0.24\textwidth]{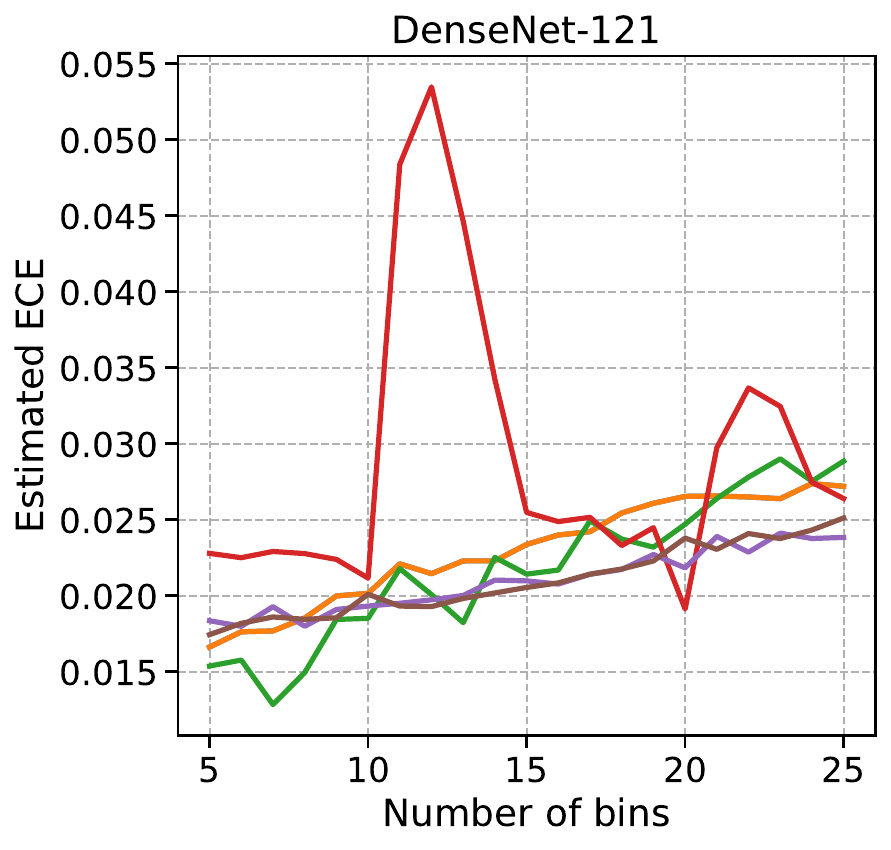}
\caption{TL-ECE estimates on CIFAR-10 with focal loss. TL-HB is close to the best in each case. While CW-HB performs the best at $B=15$, the ECE estimate may not be reliable since it is highly variable across bins.  }
\label{fig:cifar10-toplabel-focal}
\end{subfigure}
\begin{subfigure}[t]{\linewidth}
\includegraphics[width=0.24\textwidth]{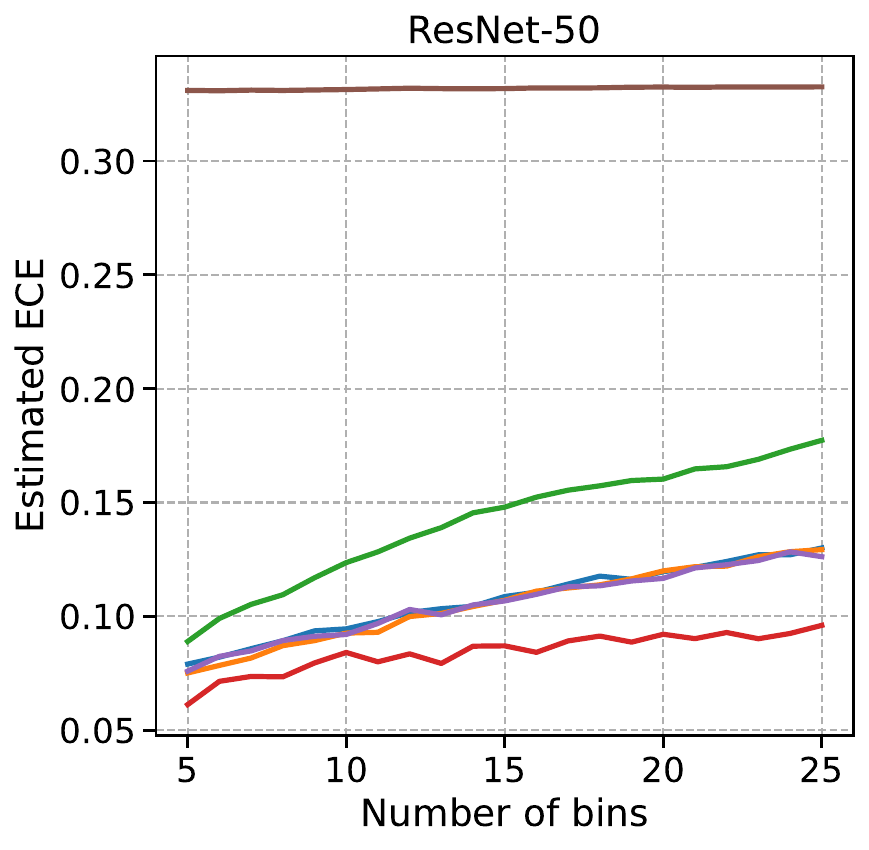}
\includegraphics[width=0.24\textwidth]{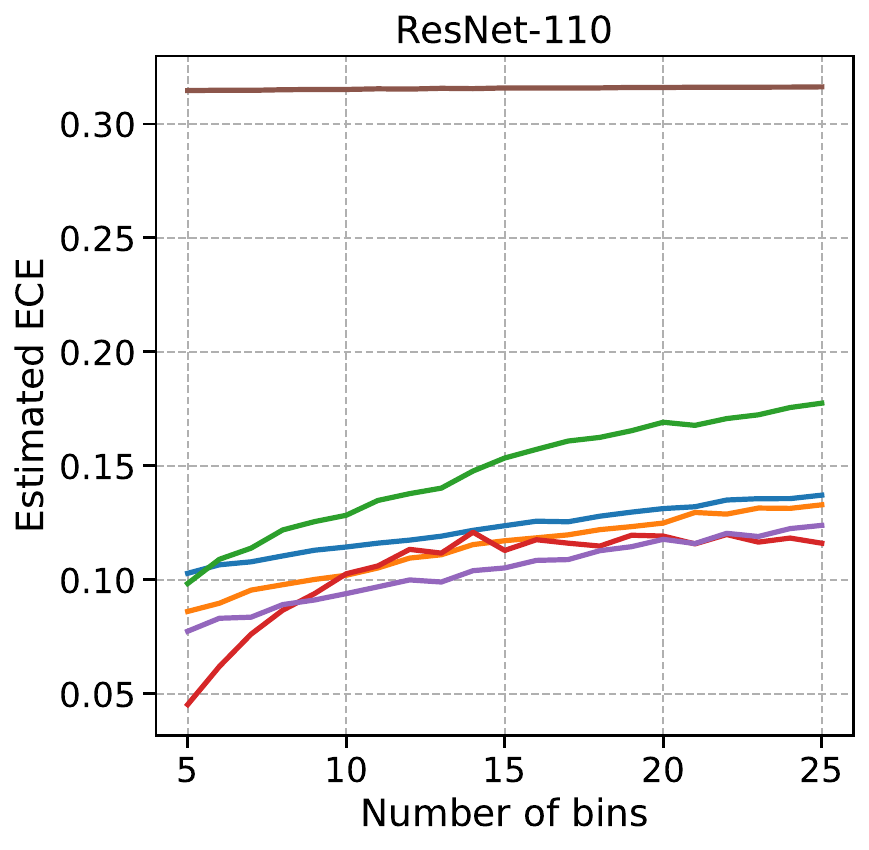}
\includegraphics[width=0.24\textwidth]{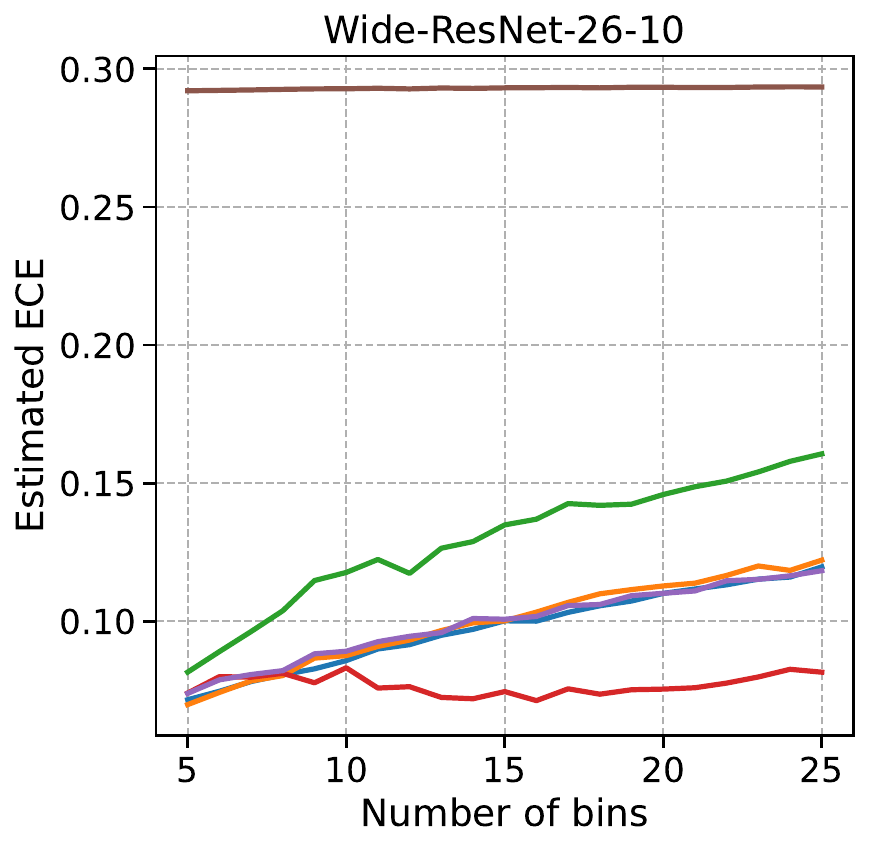}
\includegraphics[width=0.24\textwidth]{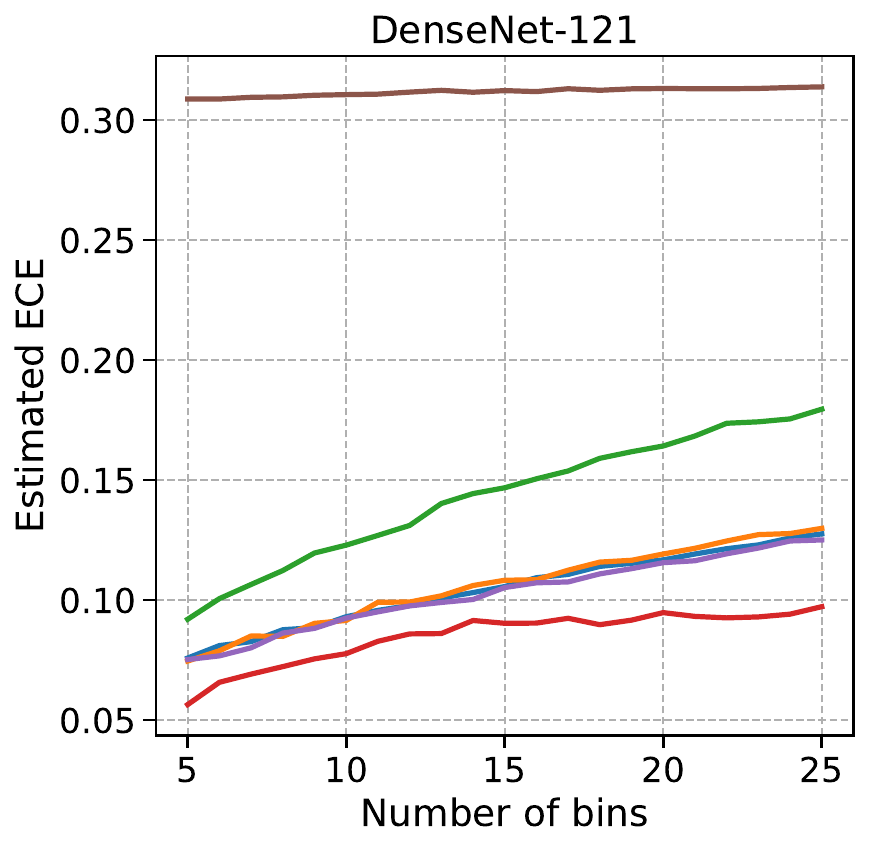}
\caption{TL-ECE estimates on CIFAR-100 with focal loss. N-HB is the best performing method, while DS is the worst performing method, across different numbers of bins. TL-HB  performs worse than TS and VS.}
\label{fig:cifar100-toplabel-focal}
\end{subfigure}
\begin{subfigure}[t]{\linewidth}
\includegraphics[width=0.24\textwidth]{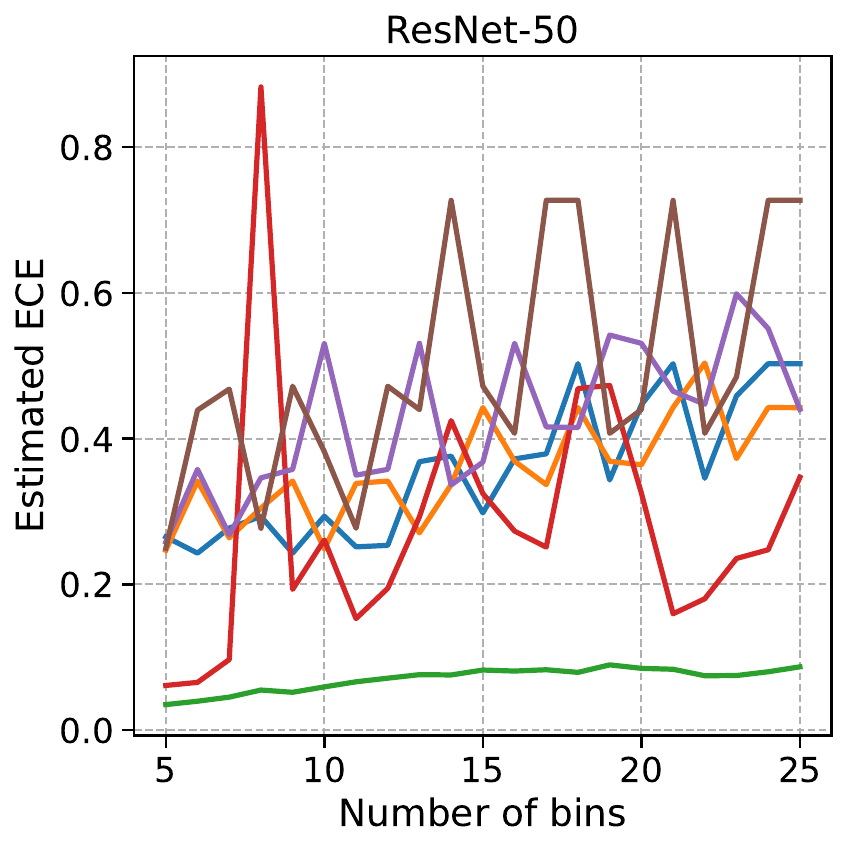}
\includegraphics[width=0.24\textwidth]{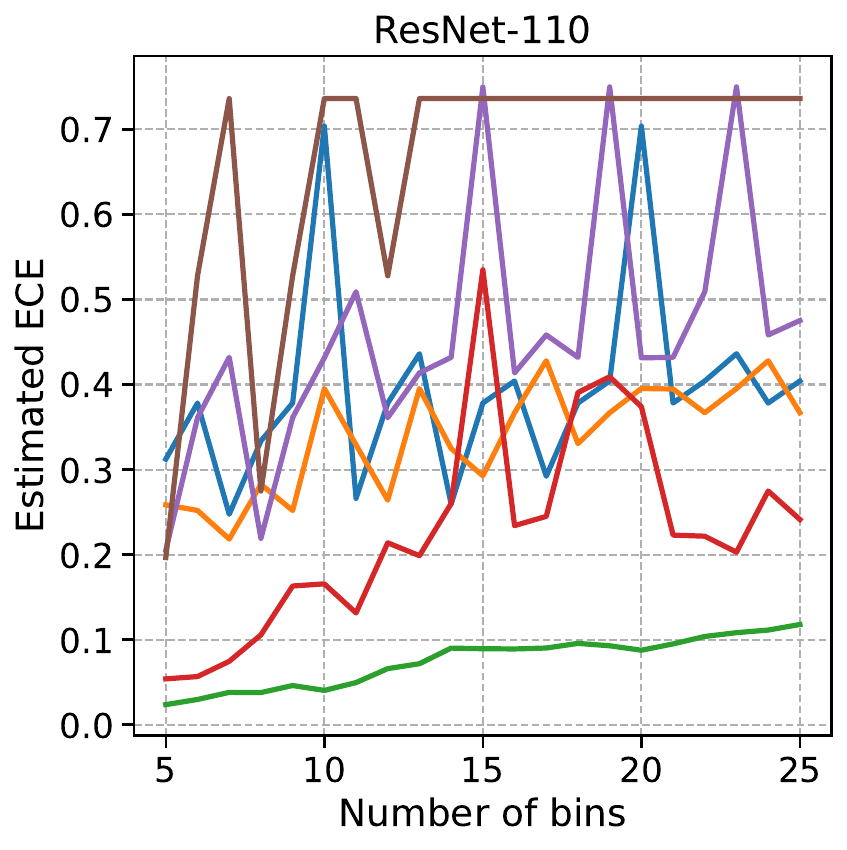}
\includegraphics[width=0.24\textwidth]{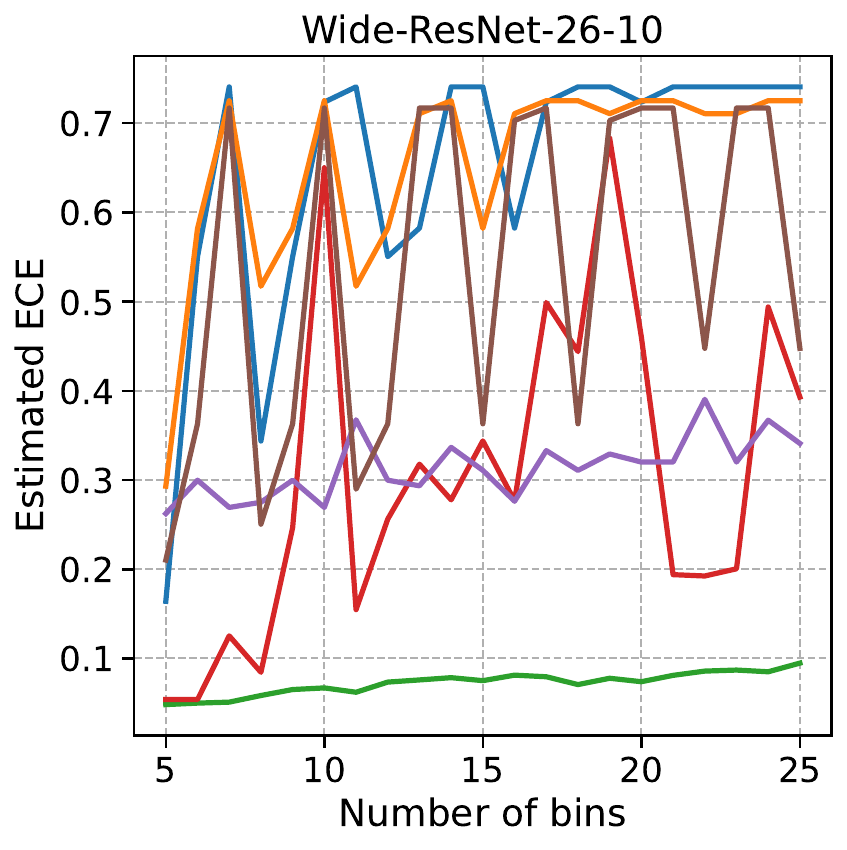}
\includegraphics[width=0.24\textwidth]{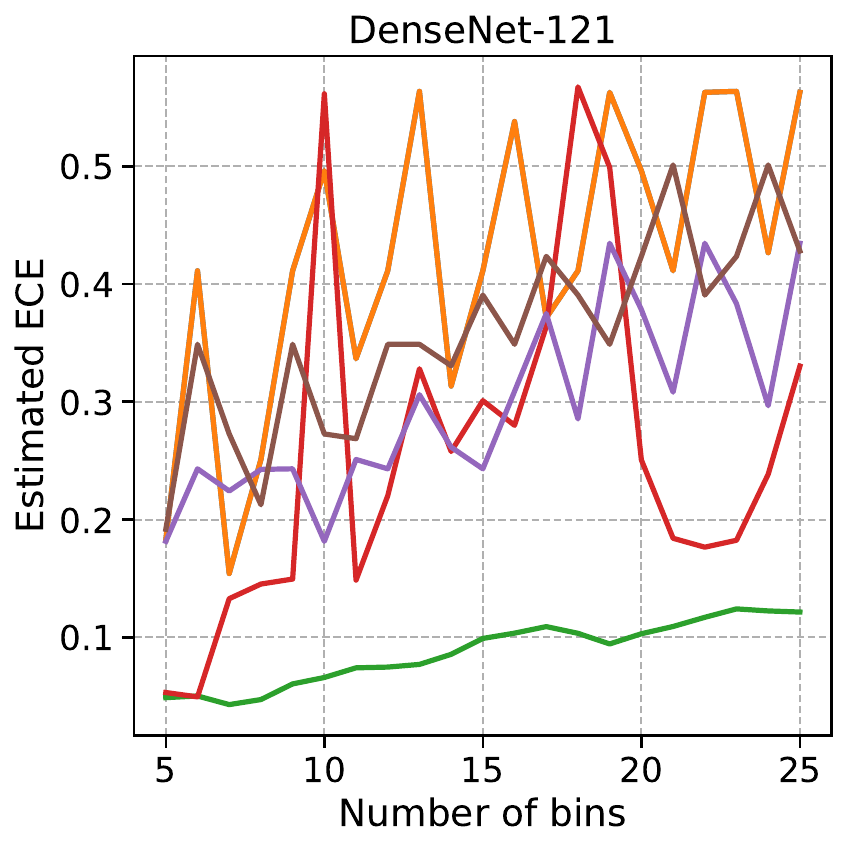}
\caption{TL-MCE estimates on CIFAR-10 with focal loss. The only reliably and consistently well-performing method is TL-HB.}
\label{fig:cifar10-conditional-focal}
\end{subfigure}
\begin{subfigure}[t]{\linewidth}
\includegraphics[width=0.24\textwidth]{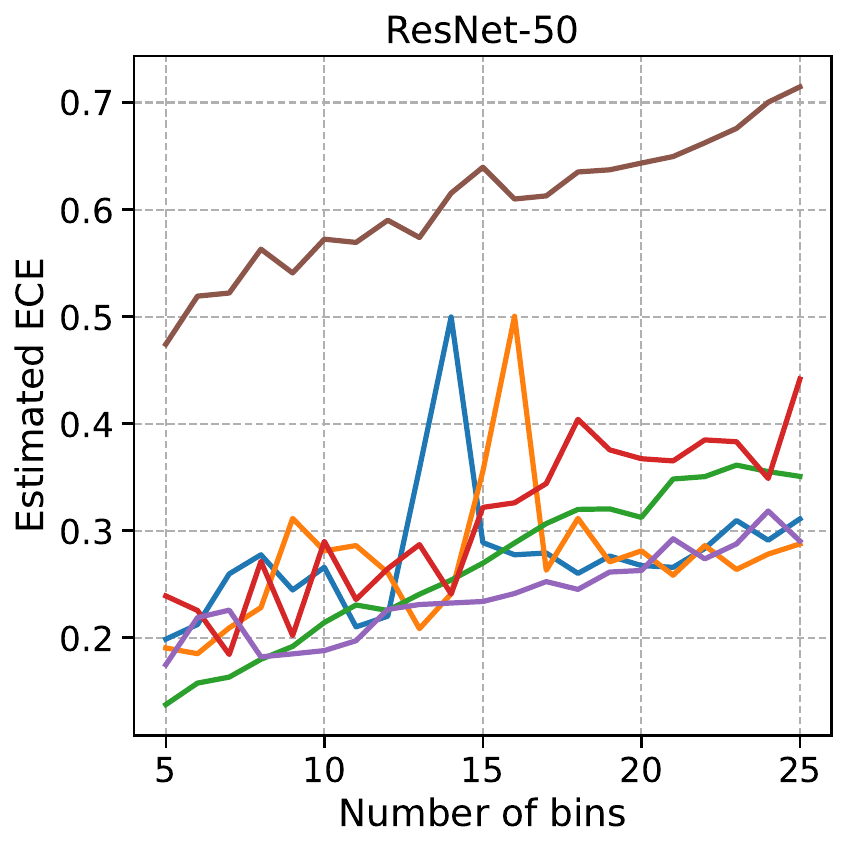}
\includegraphics[width=0.24\textwidth]{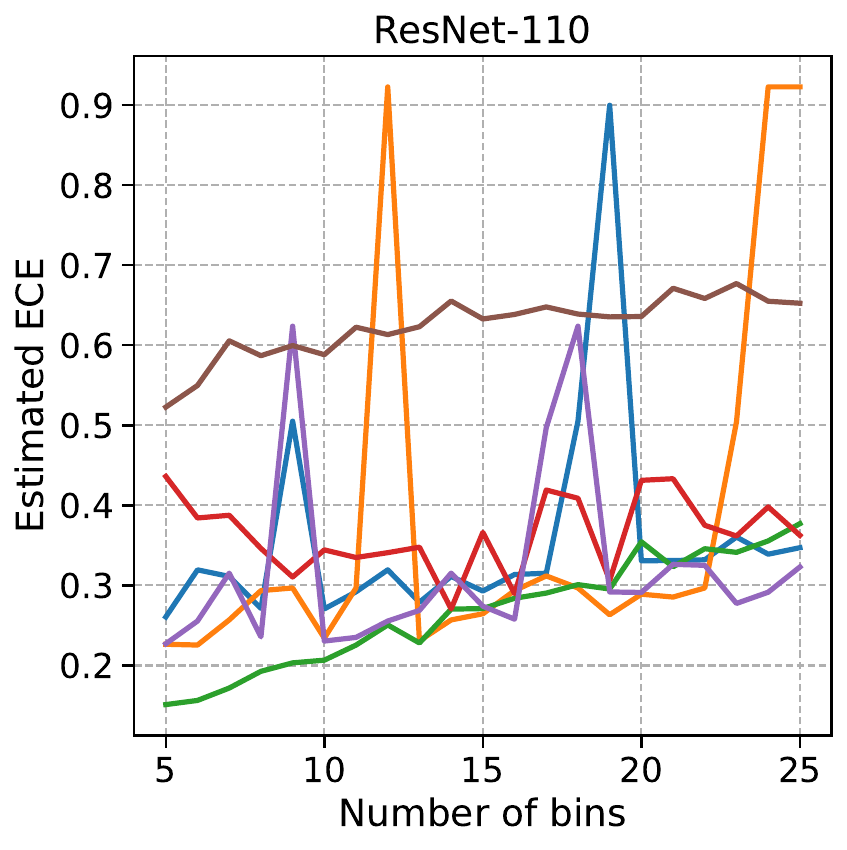}
\includegraphics[width=0.24\textwidth]{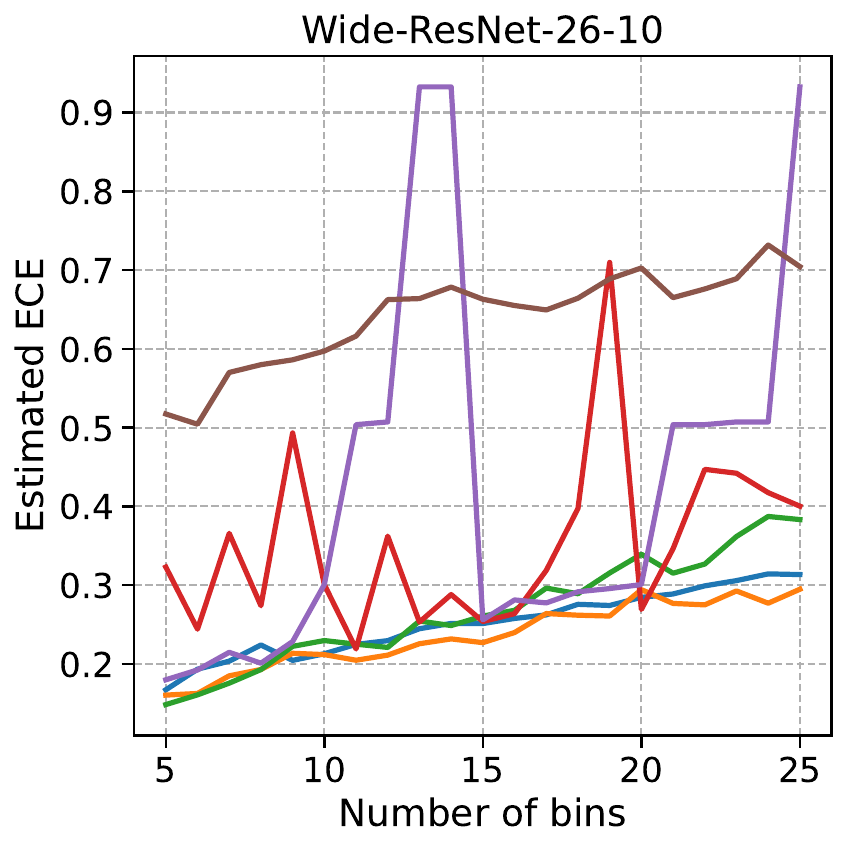}
\includegraphics[width=0.24\textwidth]{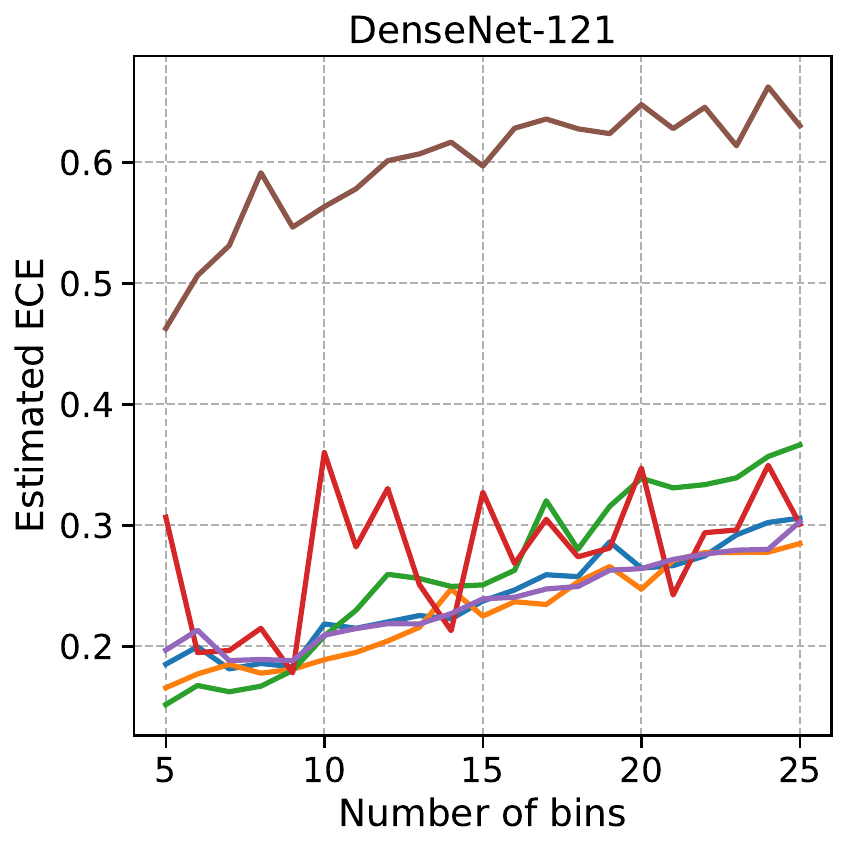}
\caption{TL-MCE estimates on CIFAR-100 with focal loss. DS is the worst performing method. Other methods perform across different values of $B$.}
\label{fig:cifar100-conditional-focal}
\end{subfigure}
\caption{Table~\ref{tab:top-label-focal} style results with the number of bins varied as $B \in [5, 25]$. See Appendix~\ref{appsec:figs-ablations} for further details. The captions summarize the findings in each case. In most cases, the findings are similar to those with $B=15$.  In some cases, the blue base model line and the orange temperature scaling line coincide. This occurs since the optimal temperature on the calibration data was learnt to be $T=1$, which corresponds to not changing the base model at all.}
\label{fig:top-label-focal}
\end{figure}

\begin{figure}[H]
\centering \includegraphics[width=0.7\linewidth]{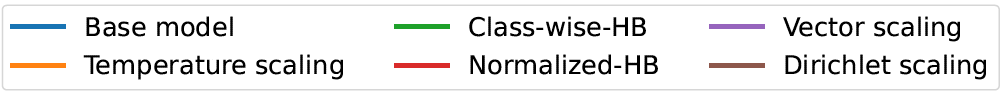}
\begin{subfigure}[t]{\linewidth}
\vspace{0.2cm}\includegraphics[width=0.24\textwidth]{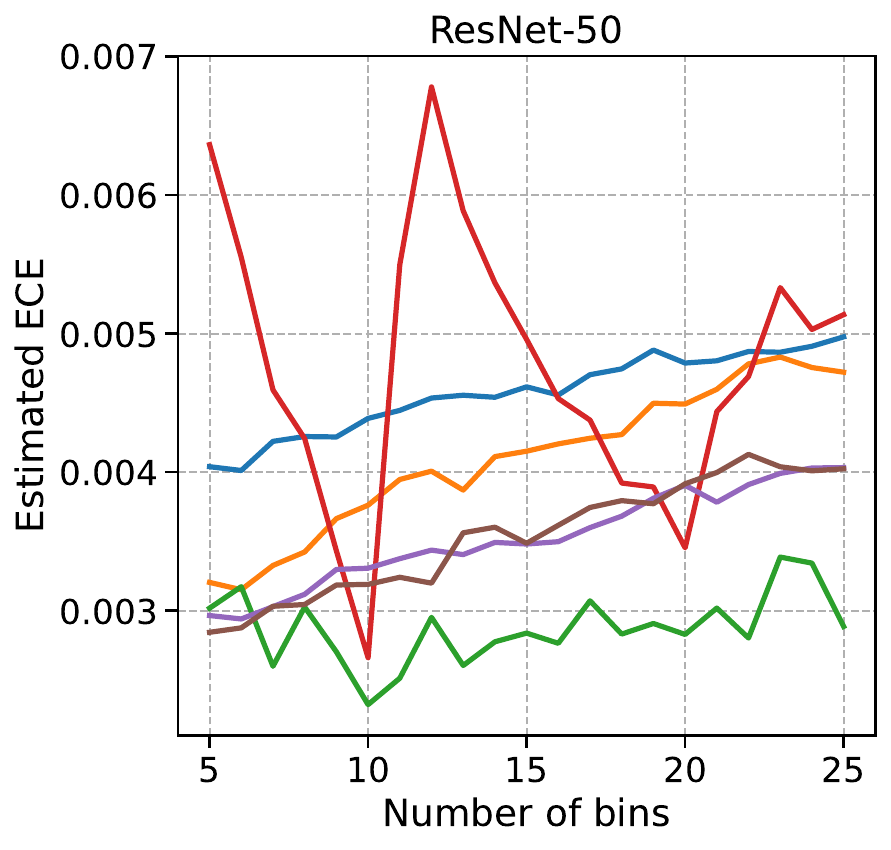}
\includegraphics[width=0.24\textwidth]{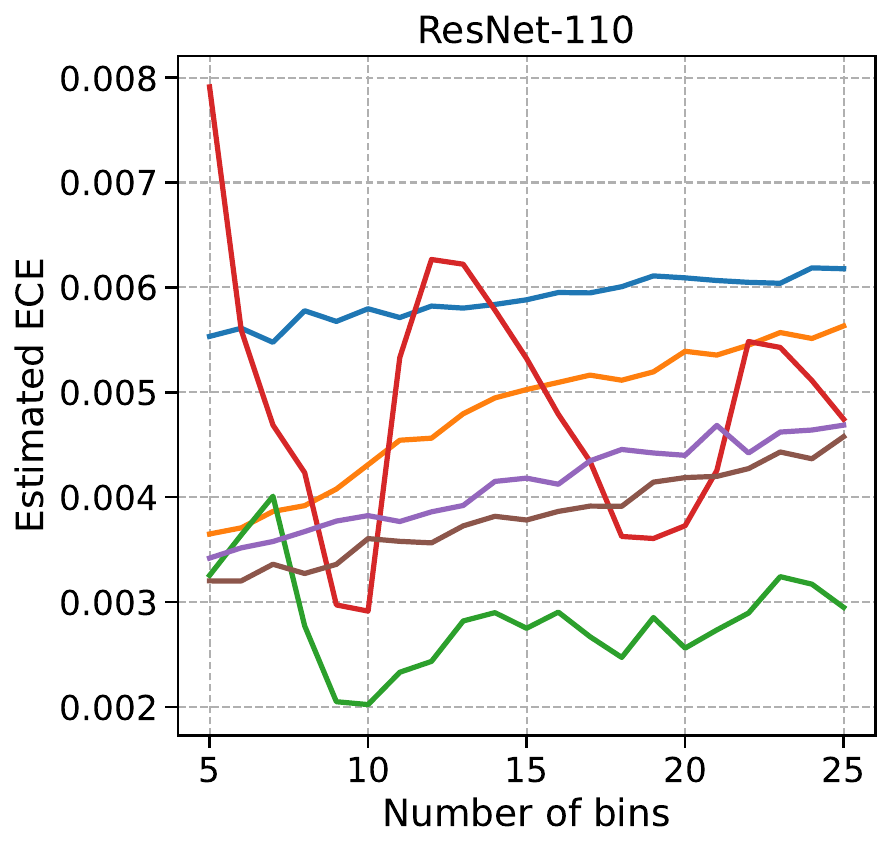}
\includegraphics[width=0.24\textwidth]{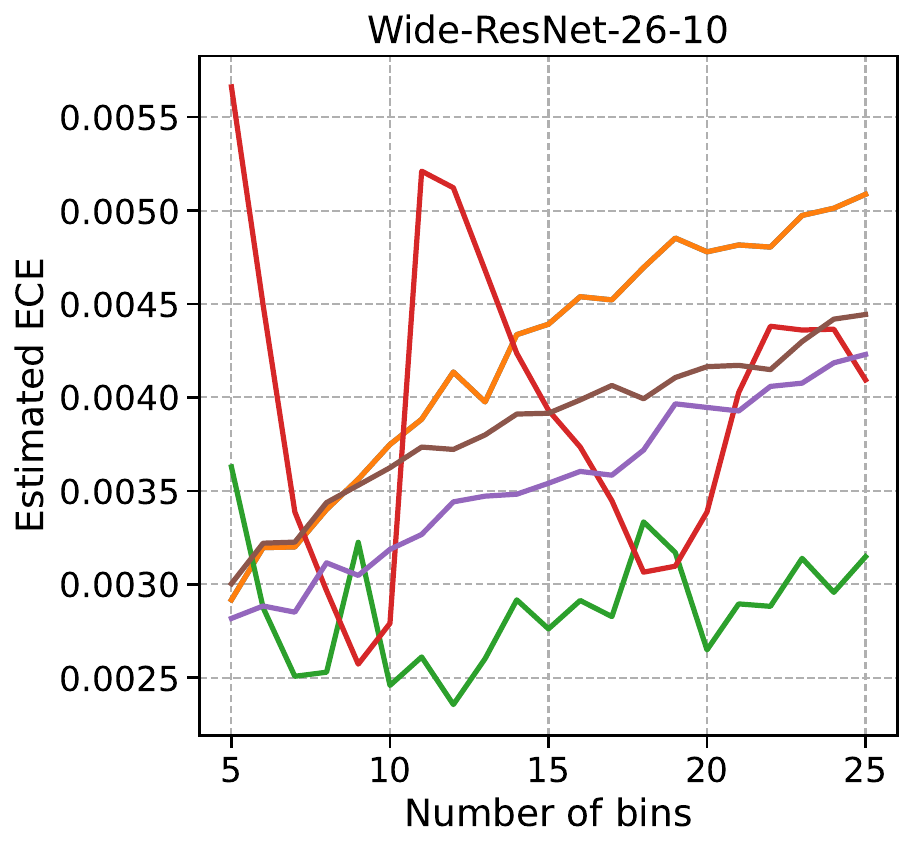}
\includegraphics[width=0.24\textwidth]{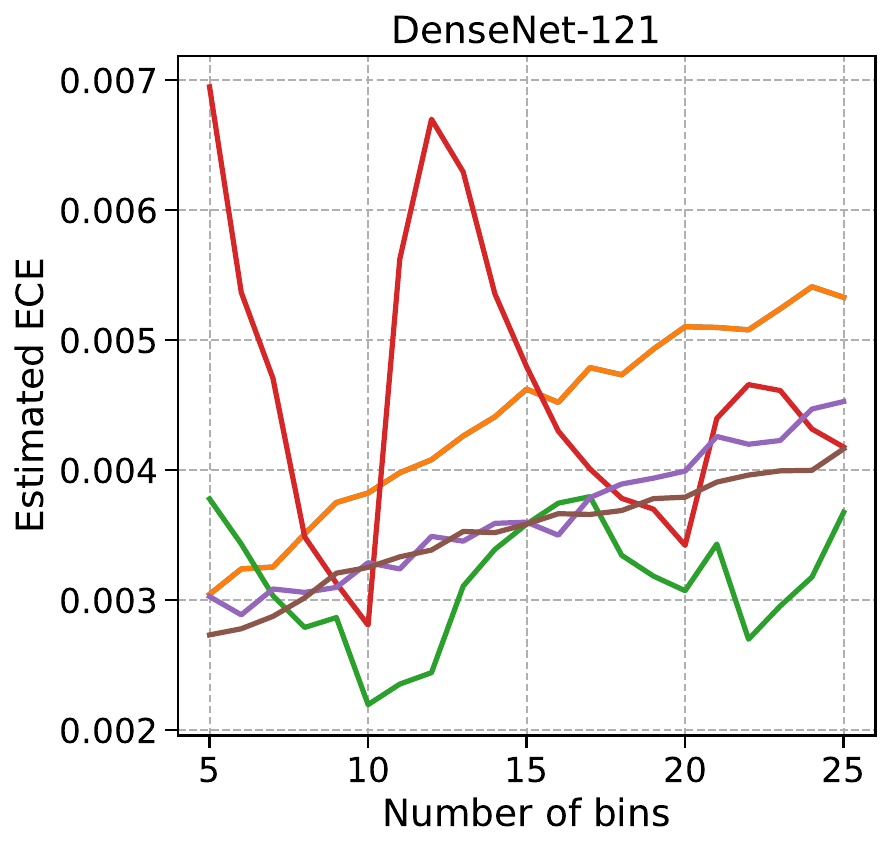}
\caption{CW-ECE estimates on CIFAR-10 with Brier score. CW-HB is the best performing method across bins, and N-HB is quite unreliable.}
\label{fig:cifar10-classwise-brier}
\end{subfigure}
\begin{subfigure}[t]{\linewidth}
\includegraphics[width=0.24\textwidth]{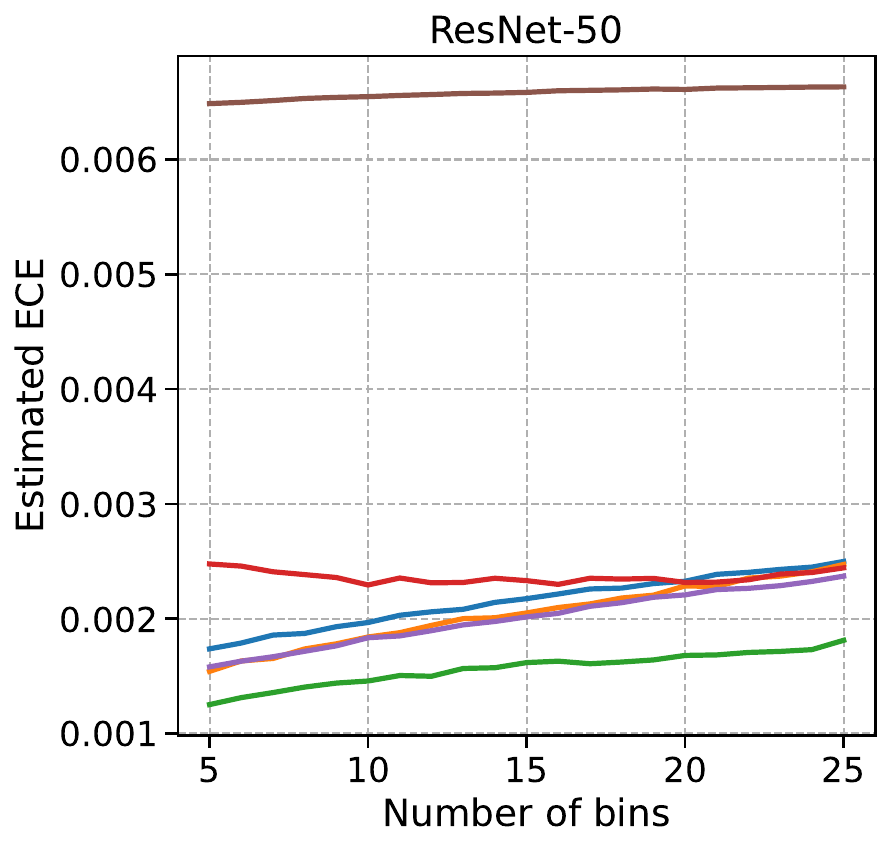}
\includegraphics[width=0.24\textwidth]{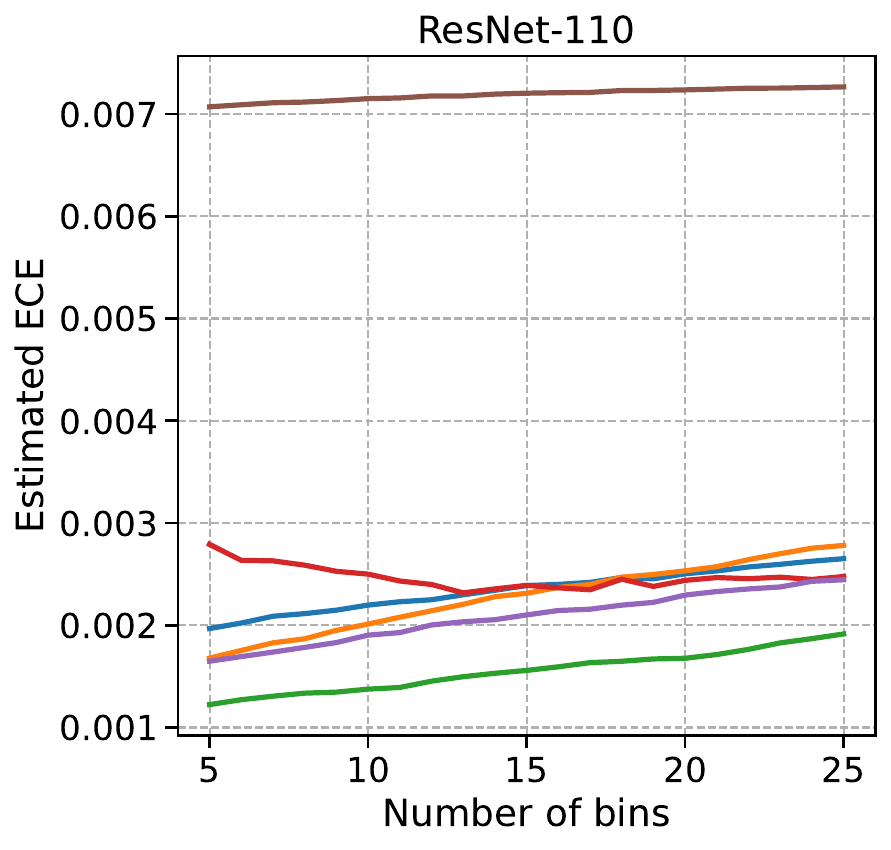}
\includegraphics[width=0.24\textwidth]{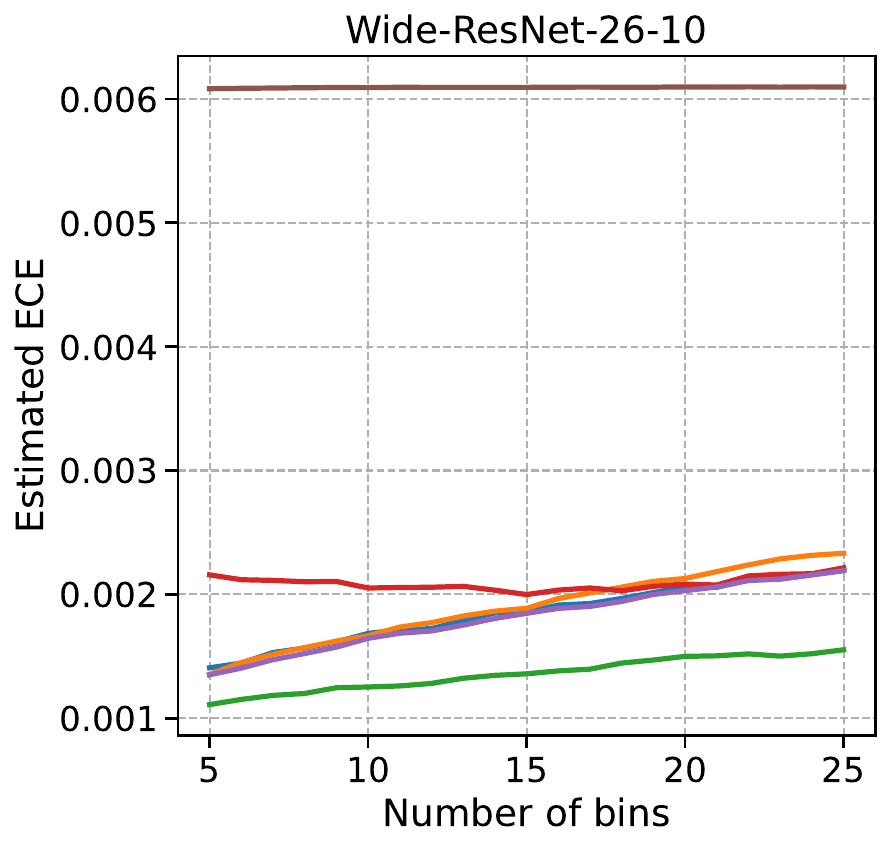}
\includegraphics[width=0.24\textwidth]{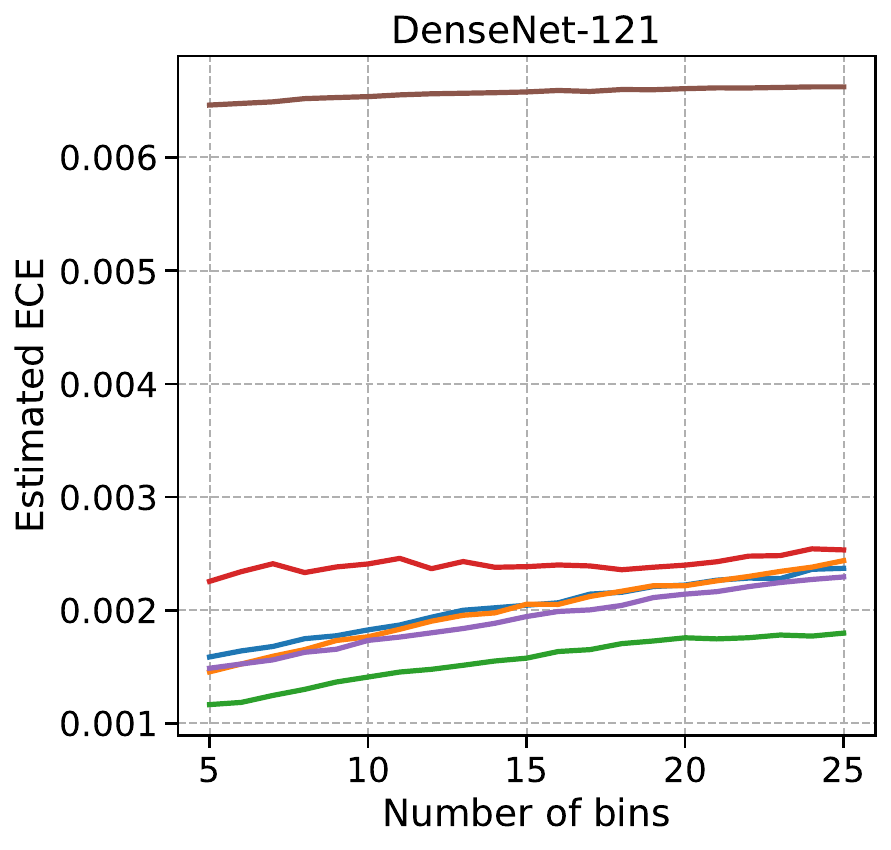}
\caption{CW-ECE estimates on CIFAR-100 with Brier score. CW-HB is the best performing method. DS and N-HB are the worst performing methods.}\label{fig:cifar100-classwise-brier}
\end{subfigure}
\caption{Table~\ref{tab:class-wise-brier} style results with the number of bins varied as $B \in [5, 25]$. The captions summarize the findings in each case, which are consistent with those in the table. See Appendix~\ref{appsec:figs-ablations} for further details.}
\label{fig:class-wise-brier}

\vspace{0.5cm}
\begin{subfigure}[t]{\linewidth}
\includegraphics[width=0.24\textwidth]{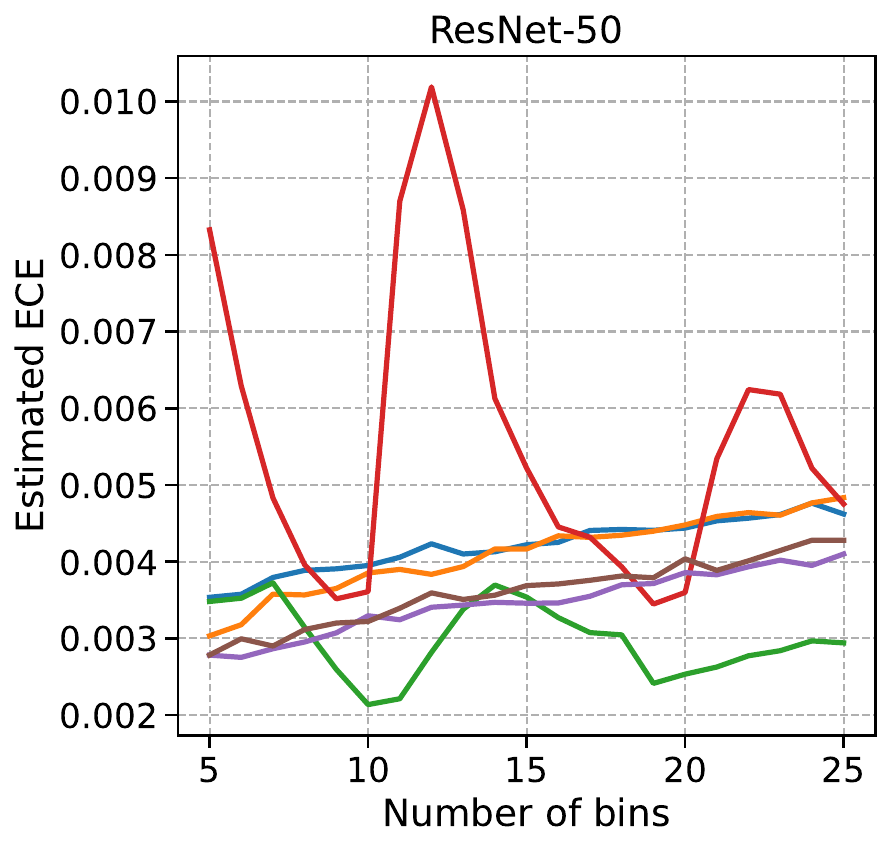}
\includegraphics[width=0.24\textwidth]{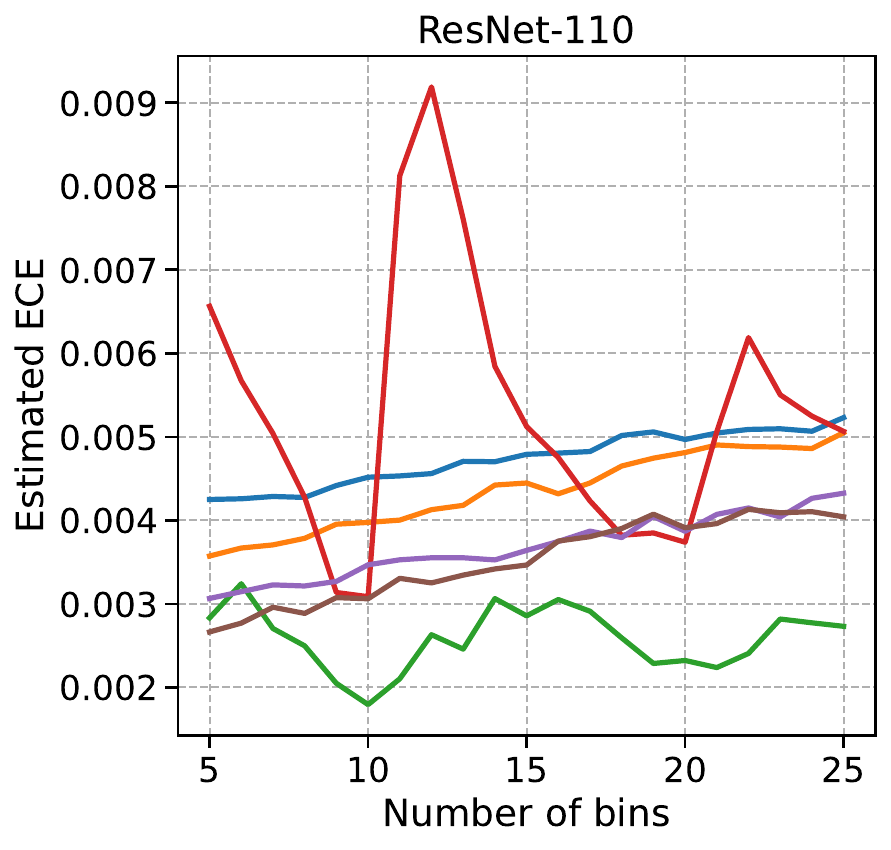}
\includegraphics[width=0.24\textwidth]{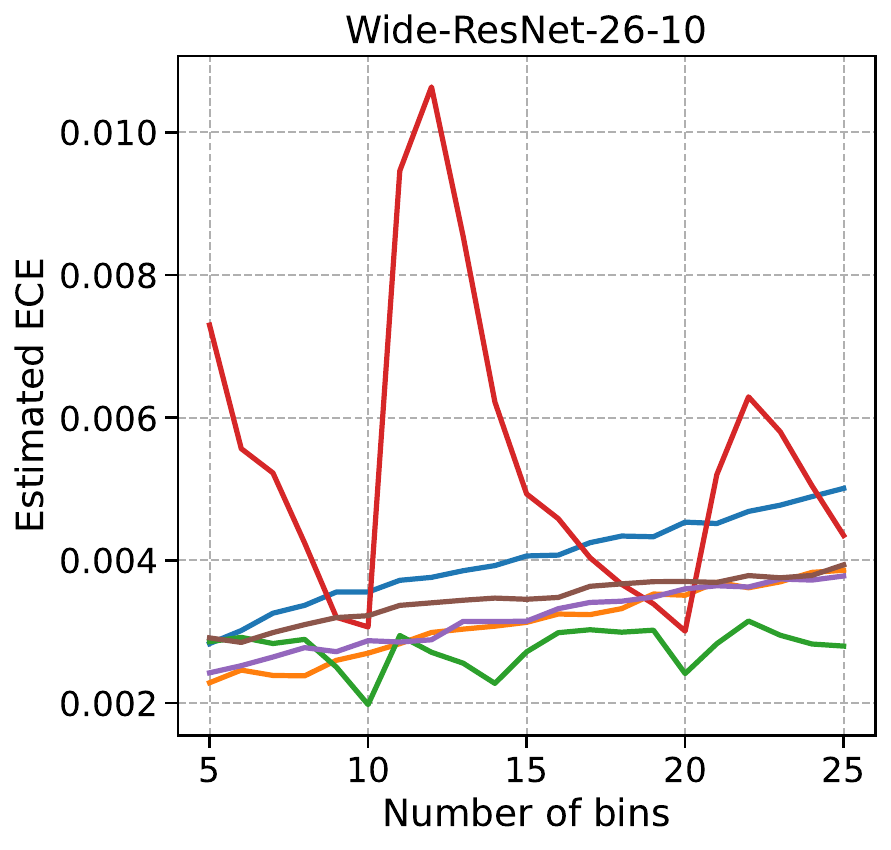}
\includegraphics[width=0.24\textwidth]{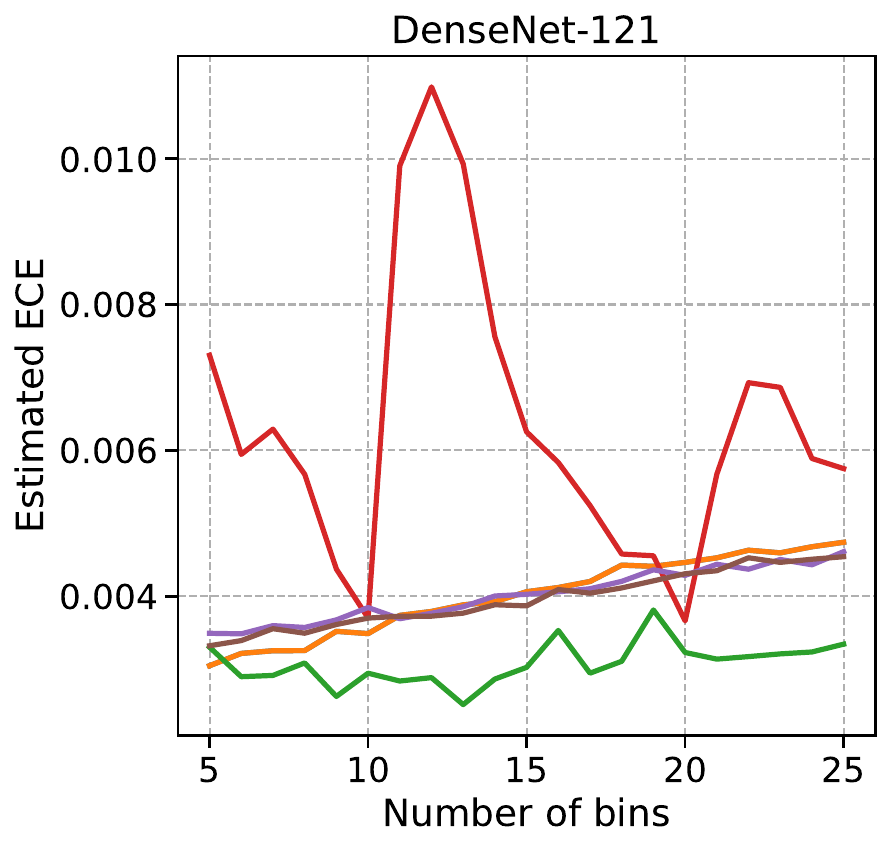}
\caption{CW-ECE estimates on CIFAR-10 with focal loss. CW-HB is the best performing method across bins, and N-HB is quite unreliable.}
\label{fig:cifar10-classwise-focal}
\end{subfigure}
\begin{subfigure}[t]{\linewidth}
\includegraphics[width=0.24\textwidth]{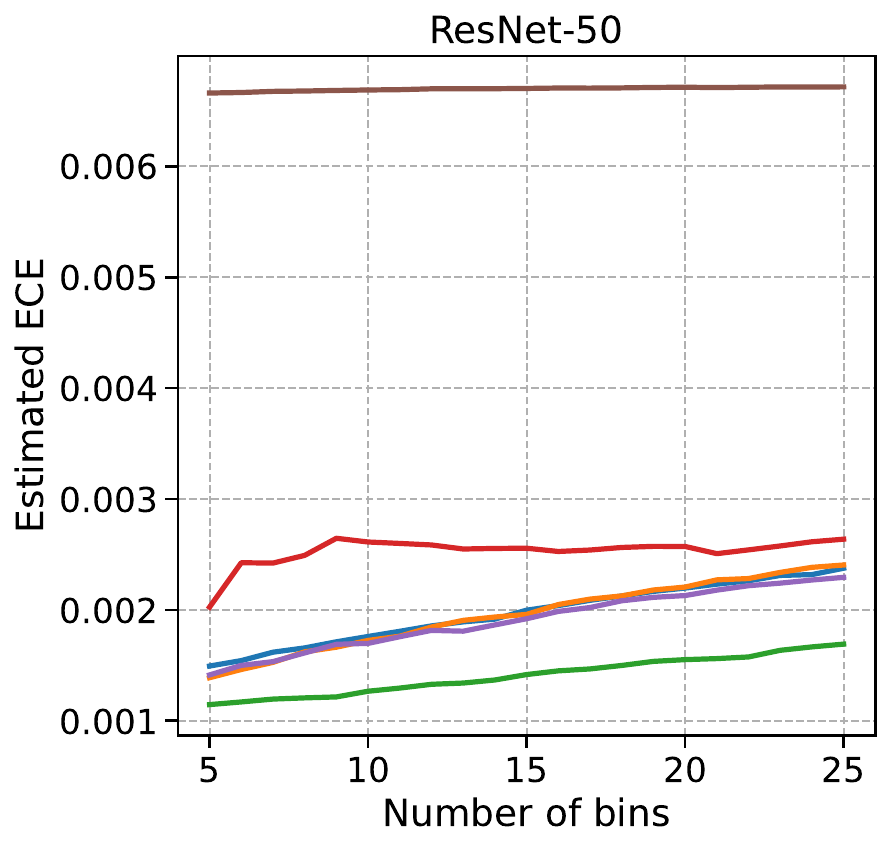}
\includegraphics[width=0.24\textwidth]{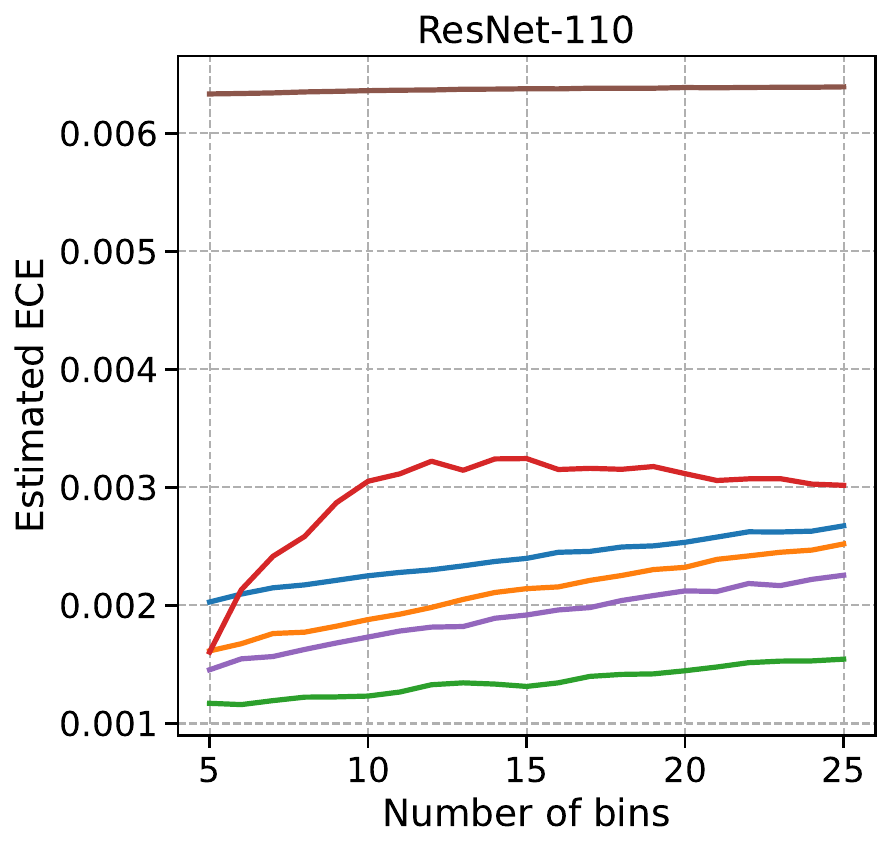}
\includegraphics[width=0.24\textwidth]{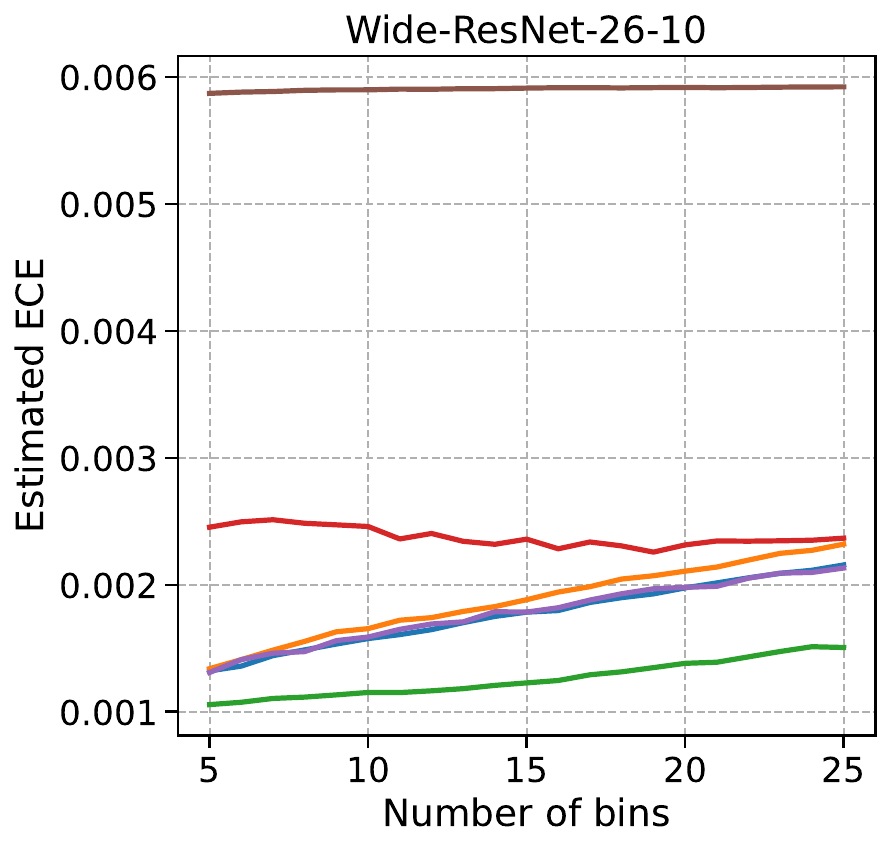}
\includegraphics[width=0.24\textwidth]{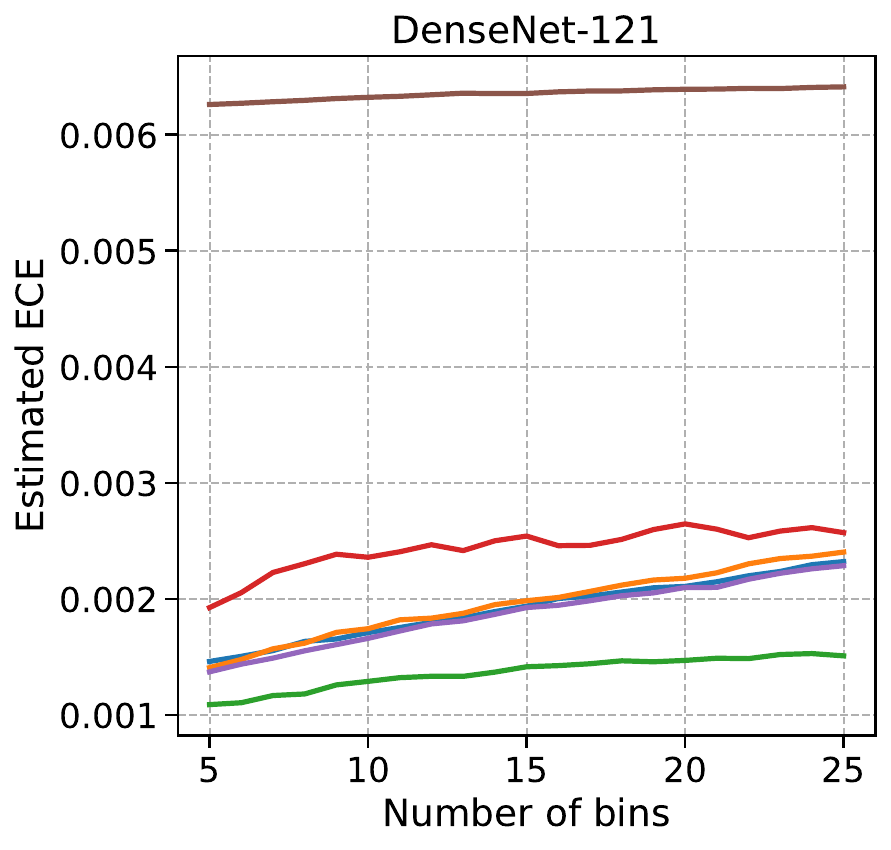}
\caption{CW-ECE estimates on CIFAR-100 with focal loss. CW-HB is the best performing method. DS and N-HB are the worst performing methods.}\label{fig:cifar100-classwise-focal}
\end{subfigure}
\caption{Table~\ref{tab:class-wise-focal} style results with the number of bins varied as $B \in [5, 25]$. The captions summarize the findings in each case, which are consistent with those in the table. See Appendix~\ref{appsec:figs-ablations} for further details.}
\label{fig:class-wise-focal}
\end{figure}

\section{Additional experimental details and results for CIFAR-10 and CIFAR-100}
\label{appsec:additional-exps}

We present additional details and results to supplement the experiments with CIFAR-10 and CIFAR-100 in Sections~\ref{sec:conf-to-top-label} and \ref{sec:experiments} of the main paper. 

\subsection{External libraries used}
\label{appsec:external-libraries}
All our base models were pre-trained deep-net models generated by \citet{mukhoti2020calibrating}, obtained from \texttt{www.robots.ox.ac.uk/$\mathtt{\sim}$viveka/focal\_calibration/} and used along with the code at \texttt{https://github.com/torrvision/focal\_calibration} to obtain base predictions. We focused on the models trained with Brier score and focal loss, since it was found to perform the best for calibration. All reports in the main paper are with the Brier score; in Appendix~\ref{appsec:focal-table}, we report corresponding results with focal loss. 

We also used the code at \texttt{https://github.com/torrvision/focal\_calibration} for temperature scaling (TS). For vector scaling (VS) and Dirichlet scaling (DS), we used the code of \citet{kull2019beyond}, hosted at \texttt{https://github.com/dirichletcal/dirichlet\_python}. For VS, we used the file \texttt{dirichletcal/calib/vectorscaling.py}, and for DS, we used the file \texttt{dirichletcal/calib/fulldirichlet.py}. No hyperparameter tuning was performed in any of our histogram binning experiments or baseline experiments; default settings were used in every case. The random seed was fixed so that every run of the experiment gives the same result. In particular, by relying on pre-trained models, we avoid training new deep-net models with multiple hyperparameters, thus avoiding any selection biases that may arise due to test-data peeking across multiple settings. 

\subsection{Further comments on binning for ECE estimation}
\label{appsec:ece-binning}
As mentioned in Remark~\ref{rem:ece-estimation}, ECE estimates for all methods except TL-HB and CW-HB was done using fixed-width bins $[0, 1/B), [1/B, 2/B), \ldots [1-1/B, 1]$ for various values of $B \in [5,25]$. For TL-HB and CW-HB, $B$ is the number of bins used for each call to binary HB. For TL-HB, note that we actually proposed that the number of bins-per-class should be fixed; see Section~\ref{subsec:covertype-top-label}. However, for ease of comparison to other methods, we simply set the number of bins to $B$ for each call to binary HB. That is, in line~\ref{line:call-binary-umd}, we replace $\floor{n_l/k}$ with $B$. For CW-HB, we described Algorithm~\ref{alg:umd-class-wise} with different values of $k_l$ corresponding to the number of bins per class. For the CIFAR-10 and CIFAR-100 comparisons, we set each $k_1 = k_2 = \ldots = k_L = k$, where $k \in \naturals$ satisfies $\floor{n/k} = B$.

Tables~\ref{tab:top-label-brier},\ref{tab:class-wise-brier},~\ref{tab:top-label-focal}, and~\ref{tab:class-wise-focal} report estimates with $B=15$, which has been commonly used in many works \citep{guo2017nn_calibration, kull2019beyond, mukhoti2020calibrating}. Corresponding to each table, we have a figure where ECE estimates with varying $B$ are reported to strengthen conclusions: these are  Figure~\ref{fig:top-label-brier},\ref{fig:class-wise-brier},~\ref{fig:top-label-focal}, and~\ref{fig:class-wise-focal} respectively. Plugin estimates of the ECE were used, same as \citet{guo2017nn_calibration}. Further binning was not done for TL-HB and CW-HB since the output is already discrete and sufficiently many points take each of the predicted values. Note that due to Jensen's inequality, any further binning will only decrease the ECE estimate \citep{kumar2019calibration}. Thus, using unbinned estimates may give TL-HB and CW-HB a disadvantage. 

\subsection{Some remarks on maximum-calibration-error (MCE)}
\label{appsec:mce-comments}
 \citet{guo2017nn_calibration} defined MCE with respect to confidence calibration, as follows: 
\begin{equation}
    \label{eq:conf-MCE}
    \text{conf-MCE}(c, h) := \sup_{r \in \text{Range}(h)}\abs{\Prob(Y = c(X) \mid h(X) = r ) -r}.
\end{equation}
Conf-MCE suffers from the same issue illustrated in Figure~\ref{fig:cifar10-deepnets-main-paper} for conf-ECE. In Figure~\ref{fig:conf-v-top-binwise}, we looked at the reliability diagram within two bins. These indicate two of the values over which the supremum is taken in equation~\eqref{eq:conf-MCE}: these are the Y-axis distances between the $\bigstar$ markers and the $X=Y$ line for bins 6 and 10 (both are less than $0.02$). On the other hand, the effective \emph{maximum} miscalibration for bin 6 is roughly $0.15$ (for class 1), and roughly $0.045$ (for class 4), and the maximum should be taken with respect to these values across all bins. To remedy the underestimation of the effective MCE, we can consider the top-label-MCE, defined as 
\begin{equation}
\label{eq:TL-MCE}
    \text{TL-MCE}(c, h) := \max_{l \in [L]}\sup_{r \in \text{Range}(h)}\abs{\Prob(Y = l \mid c(X) = l, h(X) = r) - r}.
\end{equation}
Interpreted in words, the TL-MCE assesses the maximum deviation between the predicted and true probabilities across all predictions and all classes. Following the same argument as in the proof of Proposition~\ref{prop:top-label-conf-ece}, it can be shown that for any $c, h$, $\text{conf-MCE}(c, h) \leq \text{TL-MCE}(c, h)$. The TL-MCE is closely related to conditional top-label calibration (Definition~\ref{def:top-label-calibration}b). Clearly, an algorithm is $(\varepsilon, \alpha)$-conditionally top-label calibrated if and only if for every distribution $P$, $P(\text{TL-MCE}(c, h) \leq \varepsilon) \geq 1 - \alpha$. Thus the conditional top-label calibration guarantee of Theorem~\ref{thm:umd-top-label} implies a high probability bound on the TL-MCE as well.

\subsection{Table~\ref{tab:top-label-brier} and \ref{tab:class-wise-brier} style results with focal loss}
\label{appsec:focal-table}
Results for top-label-ECE and top-label-MCE with the base deep net model being trained using focal loss are reported in Table~\ref{tab:top-label-focal}. Corresponding results for class-wise-ECE are reported in Table~\ref{tab:class-wise-focal}. The observations are similar to the ones reported for Brier score: 
\begin{enumerate}
    \item For TL-ECE, TL-HB is either the best or close to the best performing method on CIFAR-10, but suffers on CIFAR-100. This phenomenon is discussed further in Appendix~\ref{subsec:poor-performance-cifar-100}. N-HB is the best or close to the best for both CIFAR-10 and CIFAR-100. 
    \item For TL-MCE, TL-HB is the best performing method on CIFAR-10, by a huge margin. For CIFAR-100, TS or VS perform better than TL-HB, but not by a huge margin. 
    \item For CW-ECE, CW-HB is the best performing method across the two datasets and all four architectures. 
\end{enumerate}

\begin{table}[t]
\begin{tabular}{c|c|c|c|c|c|c|c|c}
\centering
\textbf{Metric}                         & \textbf{Dataset}                    & \textbf{Architecture}      & \textbf{Base} & \textbf{TS}& \textbf{VS} & \textbf{DS} & \textbf{N-HB}  & \textbf{TL-HB}  \\ \hline %
\multirow{8}{1cm}{Top-label-ECE} & \multirow{4}{*}{CIFAR-10}  & ResNet-50         &  0.022 & 0.023  & \textbf{0.018} & 0.019 & 0.023 & 0.019 \\ \cline{3-9} 
                               &                            & ResNet-110        &       0.025 & 0.024 & 0.022 & 0.021 & \textbf{0.020} & \textbf{0.020 }
    \\ \cline{3-9} 
                               &                            & WRN-26-10 & 0.024 & 0.019  & \textbf{0.016} & 0.017 & 0.019 & 0.018\\ \cline{3-9} 
                               &                            & DenseNet-121      &  0.023 & 0.023  & \textbf{0.021} & 0.021 & 0.025  & \textbf{0.021}   \\ \cline{2-9} 
                               & \multirow{4}{*}{CIFAR-100} & ResNet-50         &  0.109 & 0.107 & 0.107 & 0.332 & \textbf{0.086}  & 0.148\\ \cline{3-9} 
                               &                            & ResNet-110        &  0.124 & 0.117 & \textbf{0.105} & 0.316 & 0.115   & 0.153   \\ \cline{3-9} 
                               &                            & WRN-26-10 &    0.100 & 0.100  & 0.101 & 0.293 & \textbf{0.074} & 0.135 \\ \cline{3-9} 
                               &                            & DenseNet-121      &     0.106 & 0.108  & 0.105 & 0.312 & \textbf{0.091} & 0.147 \\ \hline %

\multirow{8}{1cm}{Top-label-MCE} & \multirow{4}{*}{CIFAR-10}  & ResNet-50         & 0.298 & 0.443 & 0.368 & 0.472 & 0.325  & \textbf{0.082} \\ \cline{3-9} 
                               &                            & ResNet-110        &   0.378 & 0.293 & 0.750 & 0.736 & 0.535 & \textbf{0.089} \\ \cline{3-9} 
                               &                            & WRN-26-10 &   0.741 & 0.582 & 0.311 & 0.363 & 0.344 & \textbf{0.075} \\ \cline{3-9} 
                               &                            & DenseNet-121      & 0.411 & 0.411 & 0.243 & 0.391 & 0.301 & \textbf{0.099} \\ \cline{2-9} 
                               & \multirow{4}{*}{CIFAR-100} & ResNet-50         &  0.289 & 0.355 & \textbf{0.234} & 0.640 & 0.322 & 0.273  \\ \cline{3-9} 
                               &                            & ResNet-110    &  0.293 & \textbf{0.265} & 0.274 & 0.633 & 0.366 & 0.272 \\ \cline{3-9} 
                               &                            & WRN-26-10 &   0.251 & \textbf{0.227}  & 0.256 & 0.663 & 0.229 & 0.270\\ \cline{3-9} 
                               &                            & DenseNet-121      &  0.237 & \textbf{0.225} & 0.239 & 0.597 & 0.327 & 0.248 \\ %
\end{tabular}
\caption{Top-label-ECE and top-label-MCE for deep-net models and various post-hoc calibrators. All methods are same as Table~\ref{tab:top-label-brier}. %
Best performing method in each row is in \textbf{bold}.}
\label{tab:top-label-focal}
\end{table}
\begin{table}
\centering
\begin{tabular}{c|c|c|c|c|c|c|c|c}
\textbf{Metric}                         & \textbf{Dataset}                    & \textbf{Architecture}      & \textbf{Base} & \textbf{TS}& \textbf{VS} & \textbf{DS} & \textbf{N-HB}  & \textbf{CW-HB}  \\ \hline %
\multirow{8}{1cm}{Class-wise-ECE $\times 10^2$} & \multirow{4}{*}{CIFAR-10}  & ResNet-50         &  0.42 & 0.42  & \textbf{0.35} & 0.37 & 0.52  & \textbf{0.35}\\ \cline{3-9} 
                               &                            & ResNet-110        &   0.48 & 0.44  & 0.36 & 0.35  & 0.51 & \textbf{0.29} \\ \cline{3-9} 
                               &                            & WRN-26-10 & 0.41 & 0.31 & 0.31 & 0.35 & 0.49  & \textbf{0.27} \\ \cline{3-9} 
                               &                            & DenseNet-121      &   0.41 & 0.41  & 0.40 & 0.39 & 0.63 & \textbf{0.30} \\ \cline{2-9} 
                               & \multirow{4}{*}{CIFAR-100} & ResNet-50         &    0.22 & 0.20 & 0.20 & 0.66  & 0.23 & \textbf{0.16}\\ \cline{3-9} 
                               &                            & ResNet-110        &  0.24 & 0.23  & 0.21 & 0.72 & 0.24  & \textbf{0.16}\\ \cline{3-9} 
                               &                            & WRN-26-10 &     0.19 & 0.19  & 0.18 & 0.61  & 0.20 & \textbf{0.14}\\ \cline{3-9} 
                               &                            & DenseNet-121      &  0.20 & 0.21 & 0.19 & 0.66  & 0.24  & \textbf{0.16}\\ %
                               
\end{tabular}
\caption{Class-wise-ECE for deep-net models and various post-hoc calibrators. All methods are same as Table~\ref{tab:top-label-brier}, except top-label-HB is replaced with class-wise-HB or Algorithm~\ref{alg:general-class-wise} (CW-HB). %
Best performing method in each row is in \textbf{bold}.}
\label{tab:class-wise-focal}
\end{table}

\subsection{ECE and MCE estimates with varying number of bins}
\label{appsec:figs-ablations}
Corresponding to each entry in Tables~\ref{tab:top-label-brier} and \ref{tab:top-label-focal}, we perform an ablation study with the number of bins varying as $B \in [5, 25]$. This is in keeping with the findings of \citet{roelofs2020mitigating} that the ECE/MCE estimate can vary with different numbers of bins, along with the relative performance of the various models.

The results are reported in Figure~\ref{fig:top-label-brier} (ablation of Table~\ref{tab:top-label-brier}) and Figure~\ref{fig:class-wise-brier} (ablation of Table~\ref{tab:class-wise-brier}). The captions of these figures contain further details on the findings. Most findings are similar to those in the main paper, but the findings in the tables are strengthened through this ablation. The same ablations are performed for focal loss as well. The results are reported in Figure~\ref{fig:top-label-focal} (ablation of Table~\ref{tab:top-label-focal}) and Figure~\ref{fig:class-wise-focal} (ablation of Table~\ref{tab:class-wise-focal}). The captions of these figures contain further details on the findings. The ablation results in the figures support those in the tables. 

\subsection{Analyzing the poor performance of TL-HB on CIFAR-100}
\label{subsec:poor-performance-cifar-100}
CIFAR-100  %
is an imbalanced dataset with 100 classes and 5000 points for validation/calibration (as per the default splits).  
Due to random subsampling, the validation split we used had one of the classes predicted as the top-label only 31 times. Thus, based on Theorem~\ref{thm:umd-top-label}, we do not expect HB to have small TL-ECE. This is confirmed by the empirical results presented in Tables~\ref{tab:top-label-brier}/\ref{tab:top-label-focal}, and Figures~\ref{fig:cifar100-toplabel-brier}/\ref{fig:cifar100-toplabel-focal}. %
We observe that HB has higher estimated TL-ECE than all methods except DS, for most values of the number of bins. The performance of TL-HB for TL-MCE however is much much closer to the other methods since HB uses the same number of points per bin, ensuring that the predictions are somewhat equally calibrated across bins (Figures~\ref{fig:cifar100-conditional-brier}/\ref{fig:cifar100-conditional-focal}). In comparison, for CW-ECE, CW-HB is the best performing method. This is because in the class-wise setting, 5000 points are available for recalibration irrespective of the class, which is sufficient for HB. 

The deterioration in performance of HB when few calibration points are available was also observed in the binary setting by \citet[Appendix C]{gupta2021distribution}. \citet{Niculescu2005predicting} noted in the conclusion of their paper that Platt scaling \citep{Platt99probabilisticoutputs}, which is closely related to TS, performs well when the data is small, but another nonparametric binning method, isotonic regression \citep{zadrozny2002transforming} performs better when enough data is available. \citet[Section 4.1]{kull2019beyond} compared HB to other calibration techniques for class-wise calibration on 21 UCI datasets, and found that HB performs the worst. On inspecting the UCI repository, we found that most of the datasets they used had fewer than 5000 (total) data points, and many contain fewer than 500. 

Overall, comparing our results to previous empirical studies, we believe that if %
sufficiently many points are available for recalibration, or the number of classes is small, then HB performs quite well. To be more precise, we expect HB to be competitive if at least 200 points per class can be held out for recalibration, and the number of points per bin is at least $k \geq 20$.

\section{Additional experimental details and results for COVTYPE-7}%
\label{appsec:covtype7}

\begin{figure}
    \centering
\begin{subfigure}[b]{0.49\linewidth}
    \centering
\includegraphics[width=0.8\linewidth, trim=0 20 0 -20,clip]{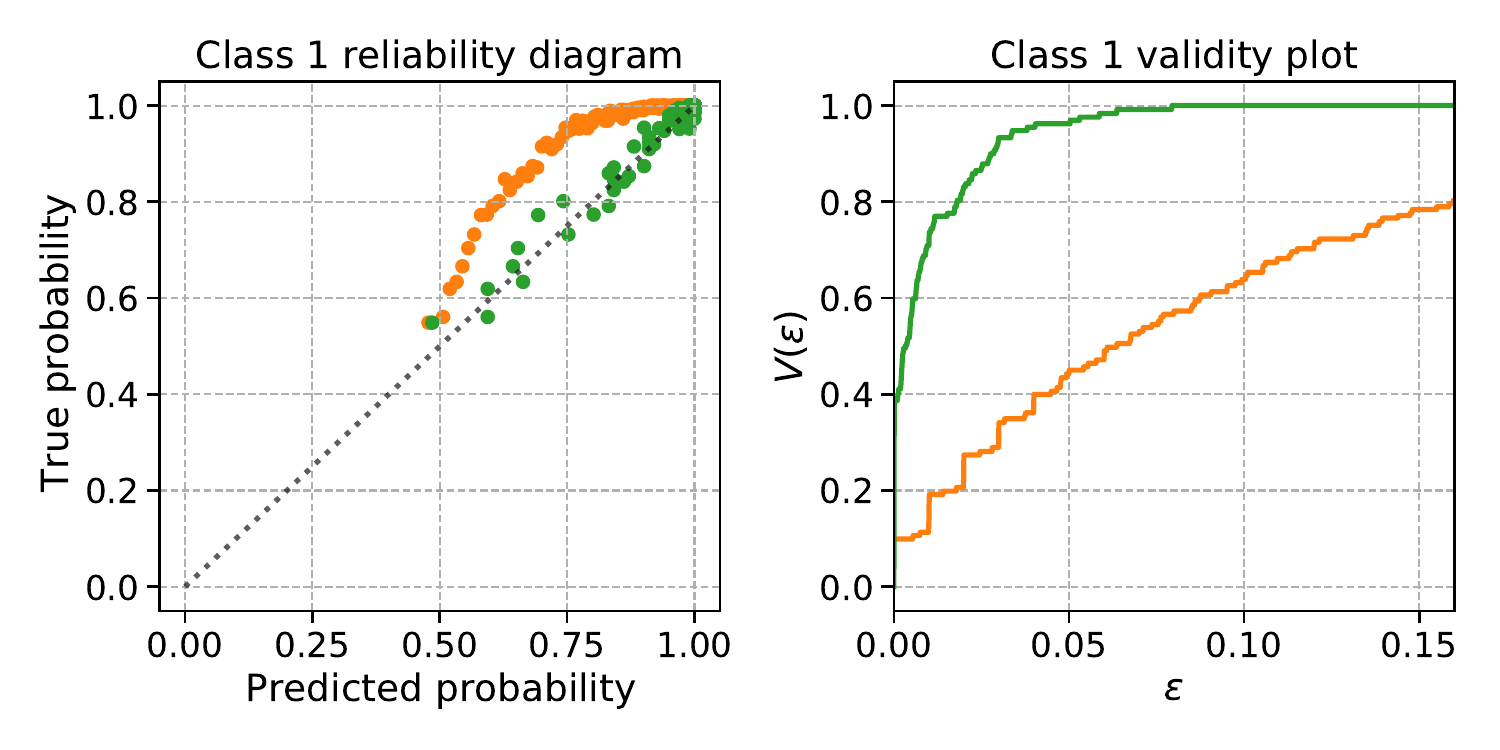}
\includegraphics[width=0.8\linewidth, trim=0 20 0 -20,clip]{covtype_toplabel_2_1.pdf}
\includegraphics[width=0.8\linewidth, trim=0 20 0 -20,clip]{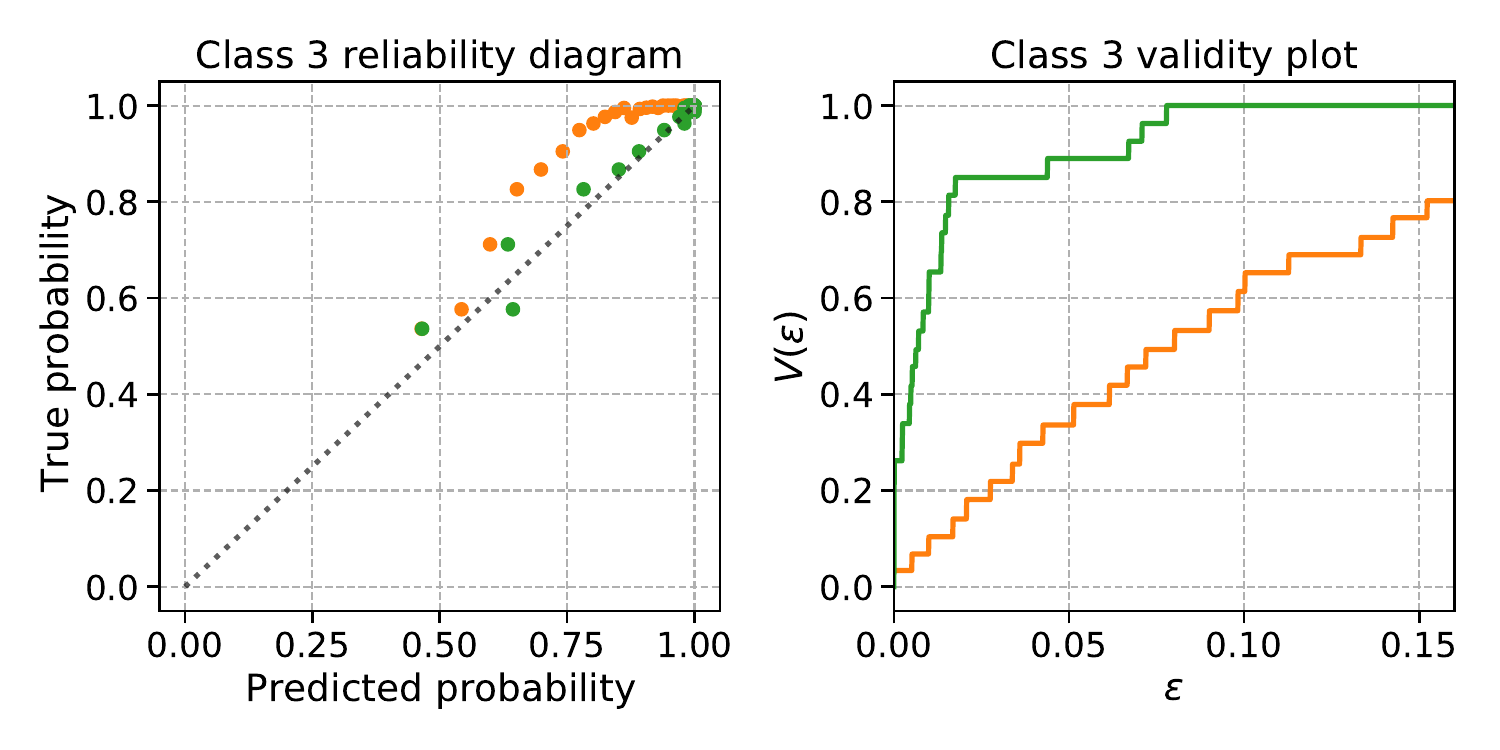}
\includegraphics[width=0.8\linewidth, trim=0 20 0 -20,clip]{covtype_toplabel_2_3.pdf}
\includegraphics[width=0.8\linewidth, trim=0 20 0 -20,clip]{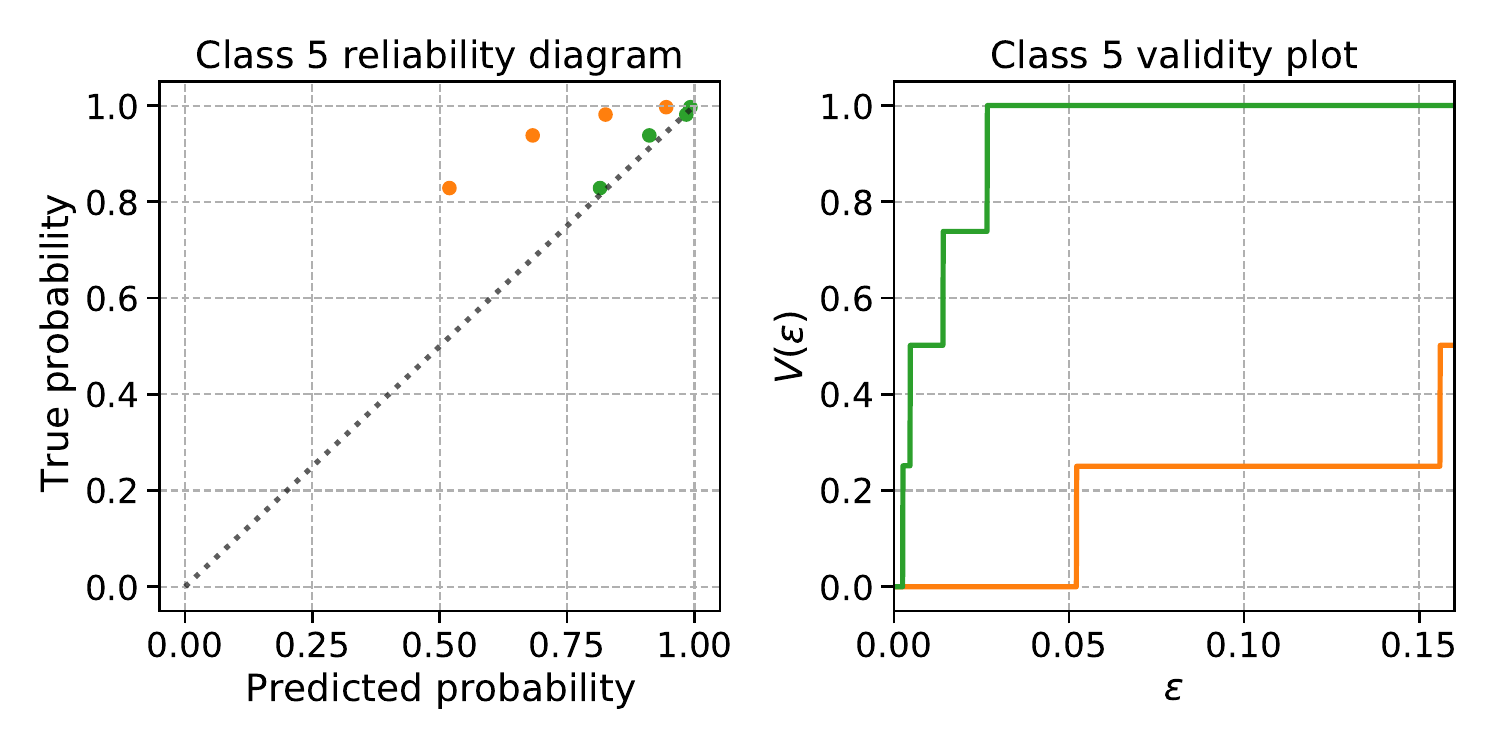}
\includegraphics[width=0.8\linewidth, trim=0 20 0 -20,clip]{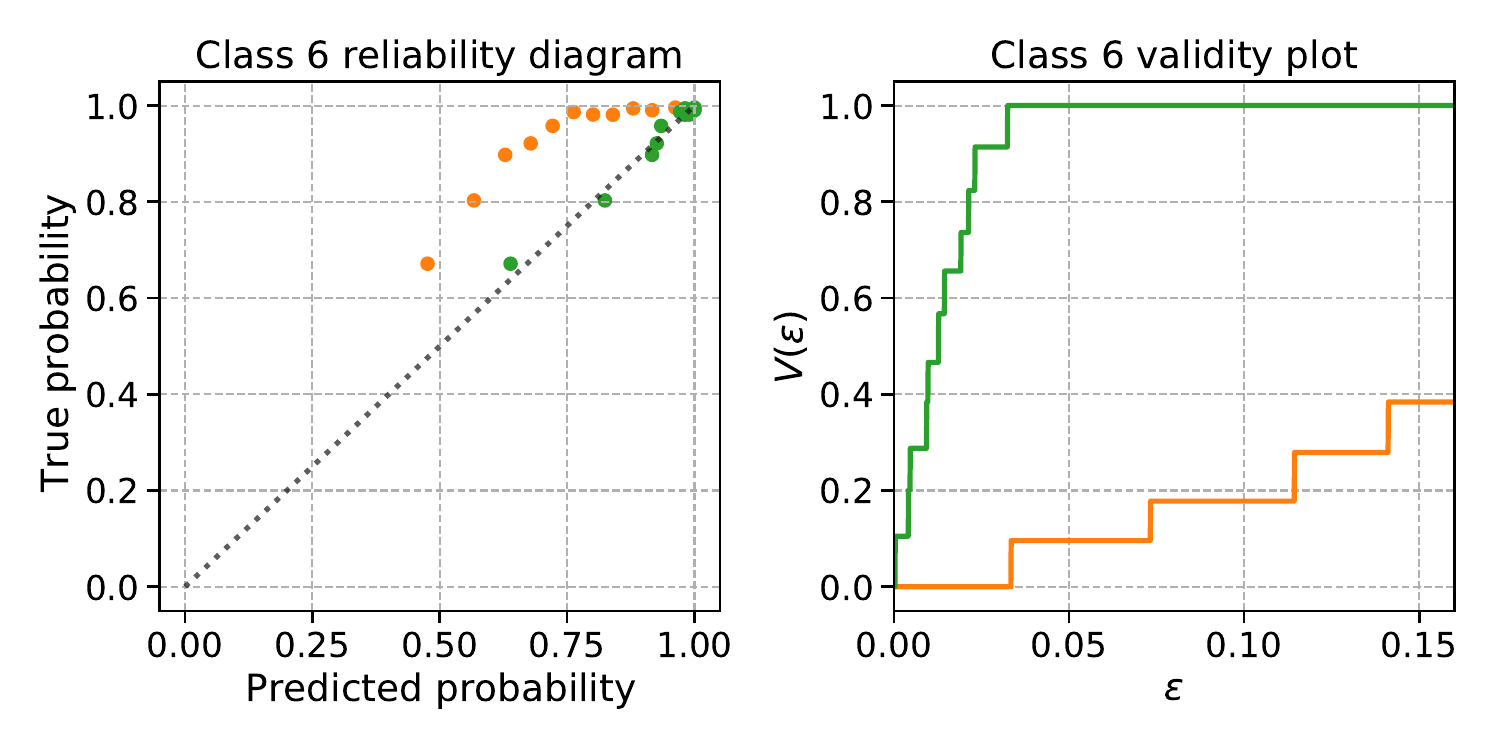}
\includegraphics[width=0.8\linewidth, trim=0 20 0 -20,clip]{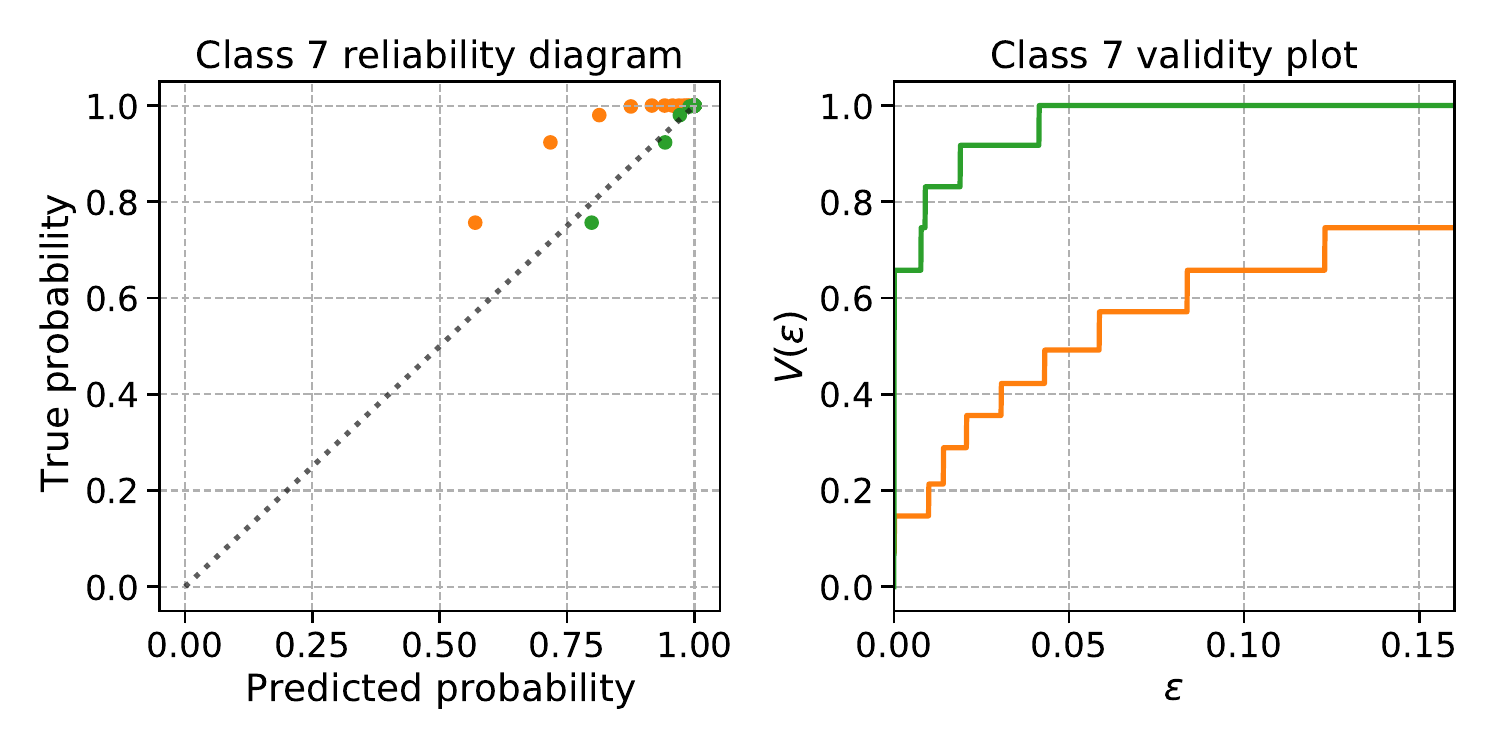}
\caption{Top-label HB with $k = 100$ points per bin.}
    \label{fig:covtype-all-adaptive}
\end{subfigure}%
\begin{subfigure}[b]{0.49\linewidth}
    \centering
\includegraphics[width=0.8\linewidth, trim=0 20 0 -20,clip]{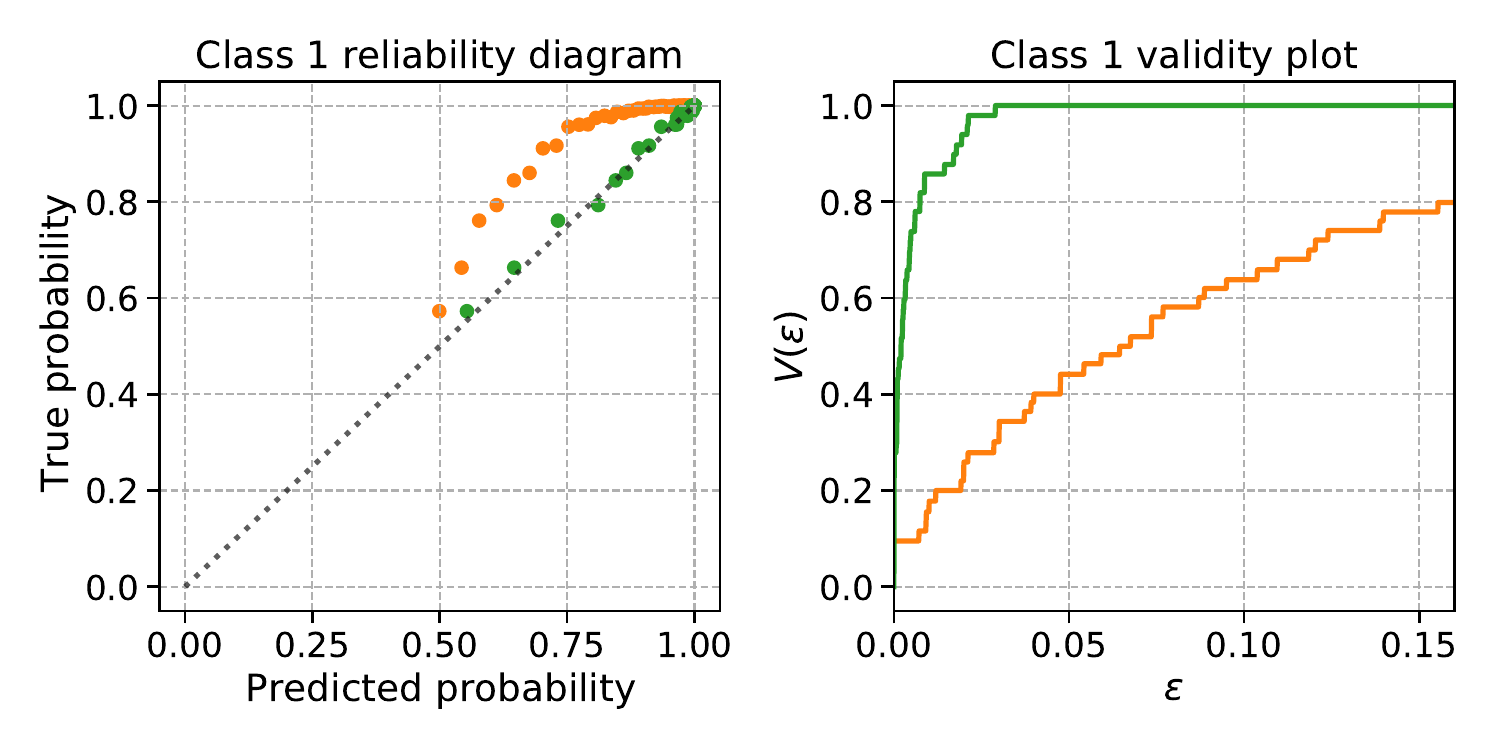}
\includegraphics[width=0.8\linewidth, trim=0 20 0 -20,clip]{covtype_toplabel_4_1.pdf}
\includegraphics[width=0.8\linewidth, trim=0 20 0 -20,clip]{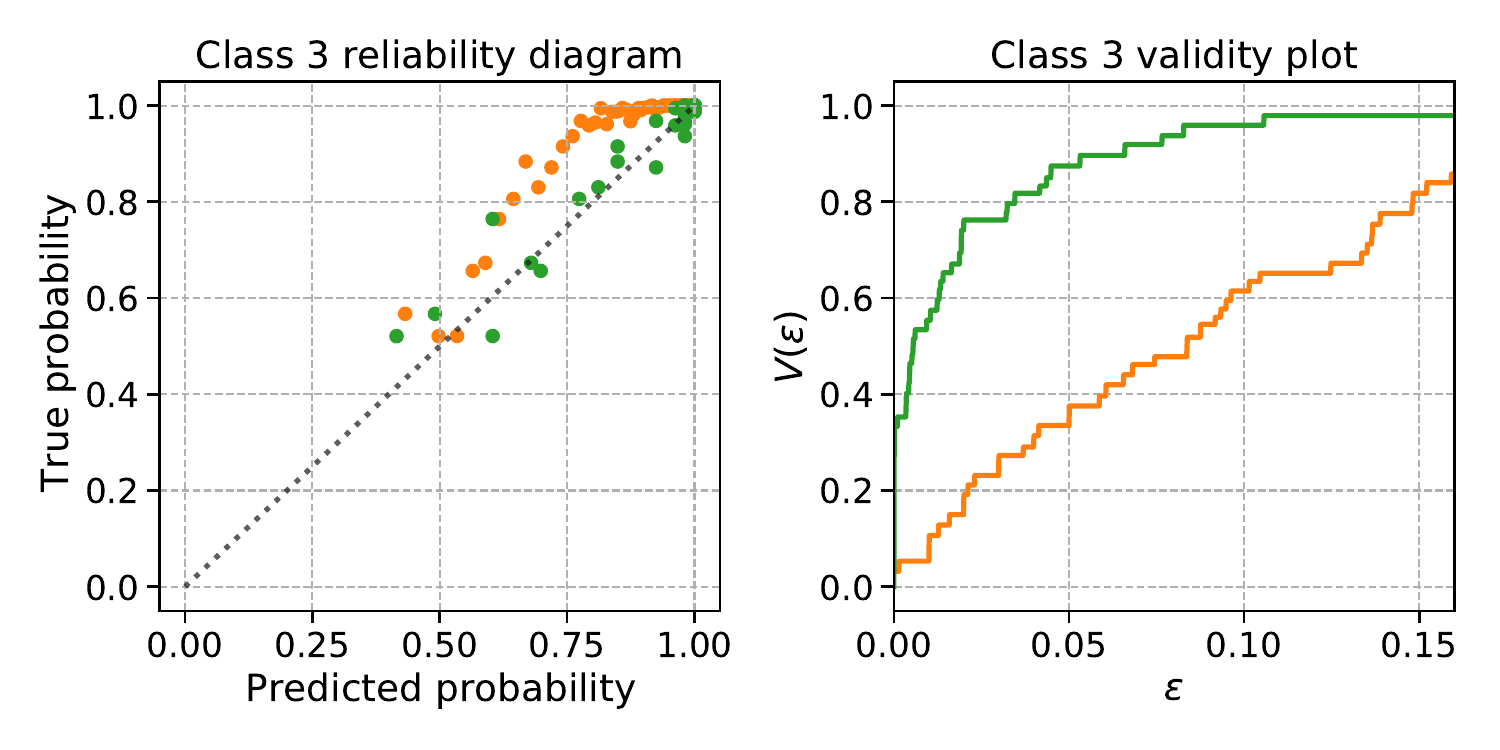}
\includegraphics[width=0.8\linewidth, trim=0 20 0 -20,clip]{covtype_toplabel_4_3.pdf}
\includegraphics[width=0.8\linewidth, trim=0 20 0 -20,clip]{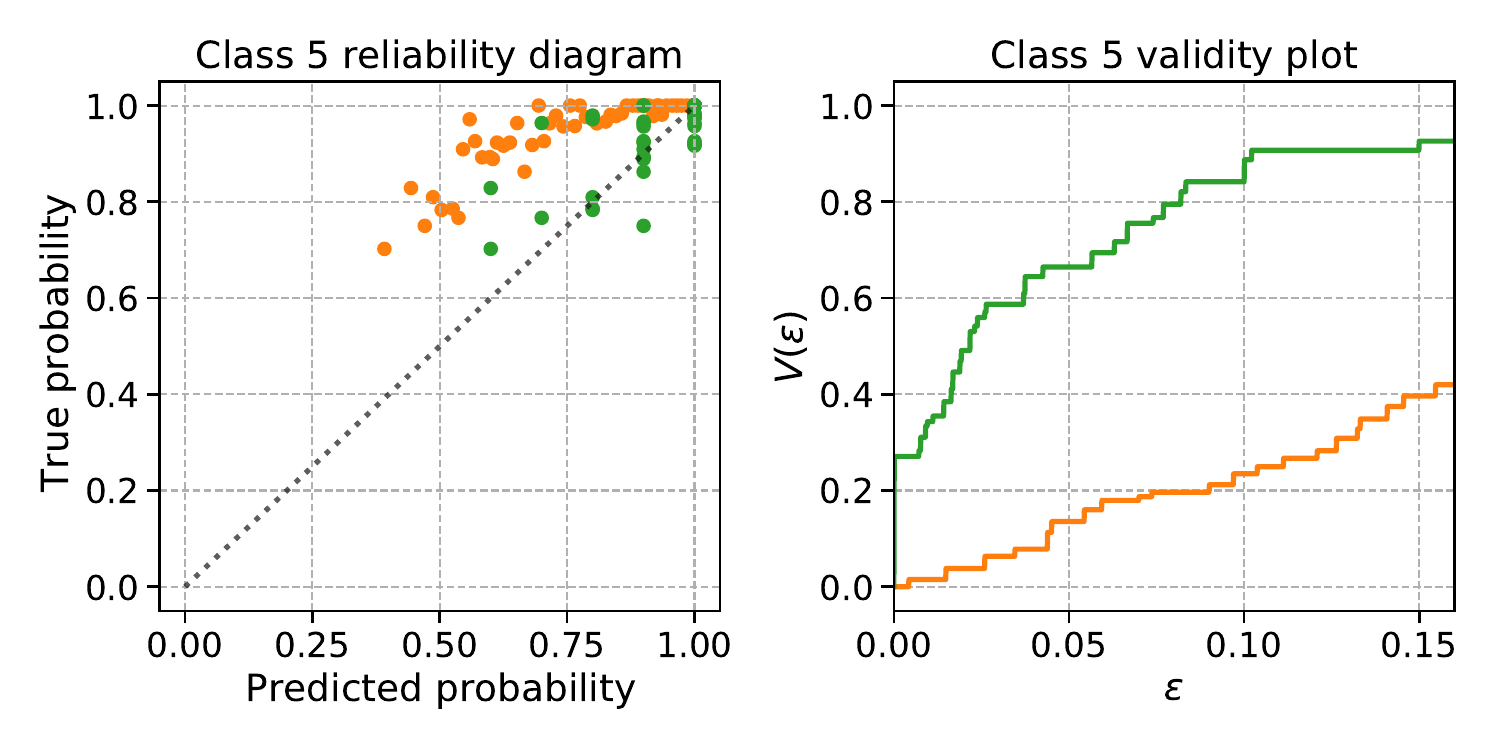}
\includegraphics[width=0.8\linewidth, trim=0 20 0 -20,clip]{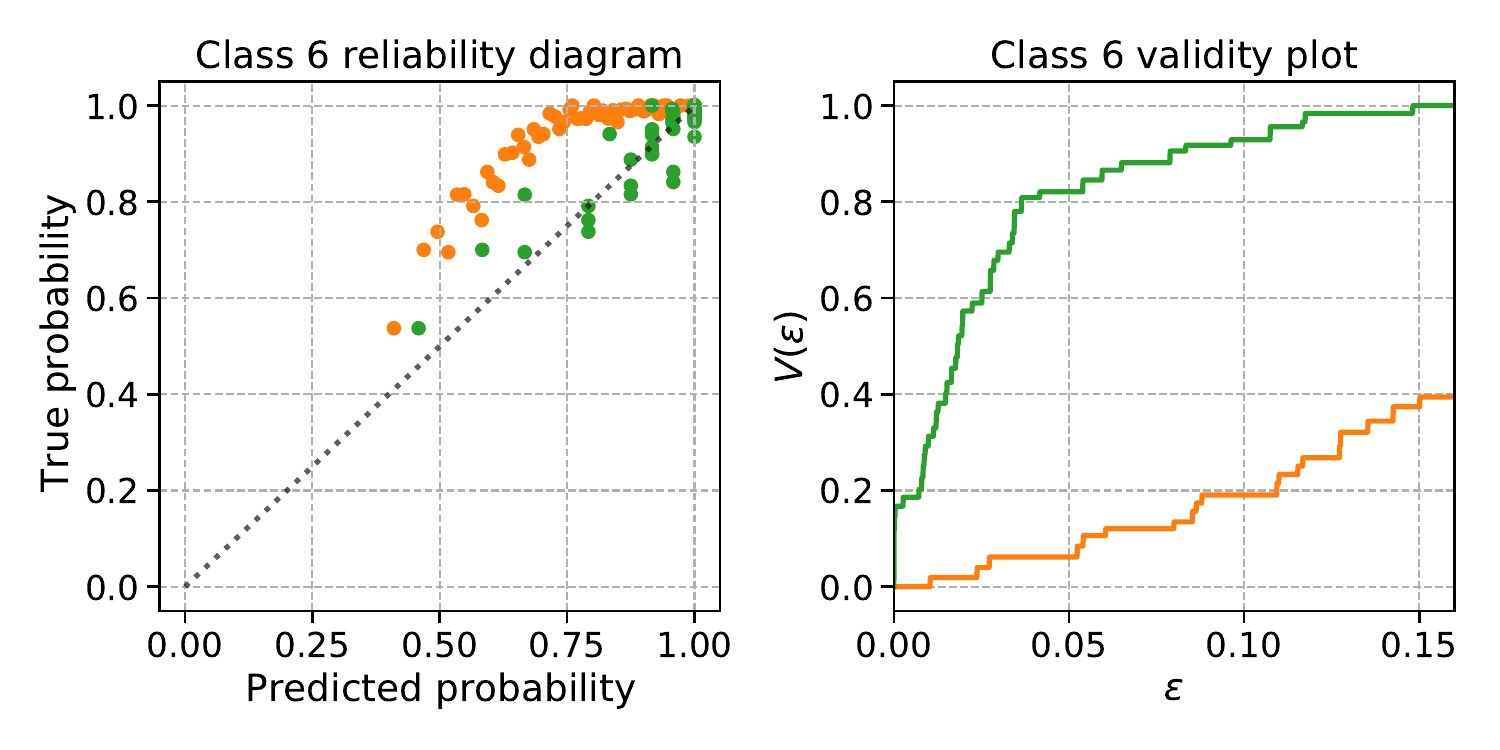}
\includegraphics[width=0.8\linewidth, trim=0 20 0 -20,clip]{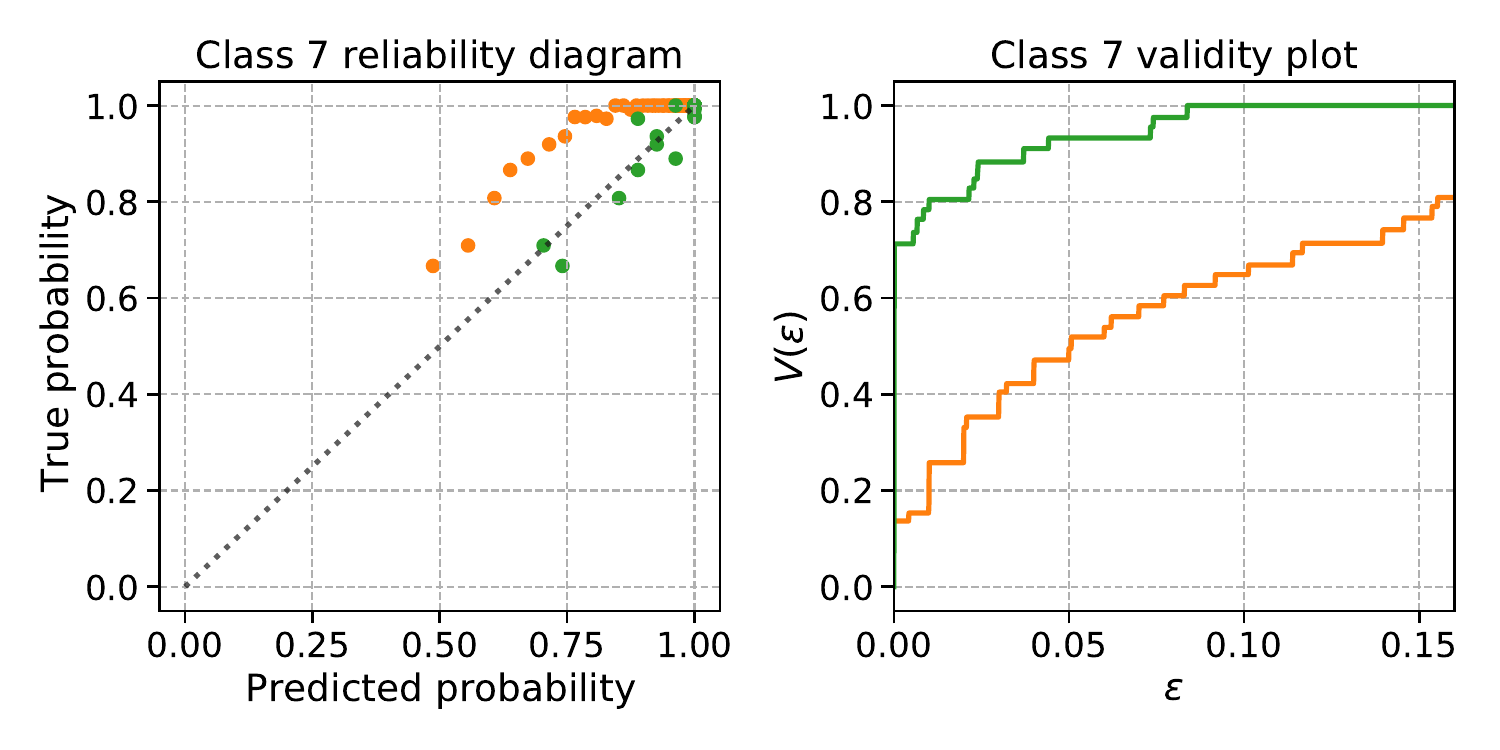}
    \caption{Top-label HB with $B = 50$ bins per class.}
    \label{fig:covtype-all-fixed}
\end{subfigure}
    \caption{Top-label histogram binning (HB) calibrates a miscalibrated random-forest on the class imbalanced COVTYPE-7 dataset. For the less likely classes (4, 5, and 6), the left column is better calibrated than the right column. Similar observations are made on other datasets, and so we recommend adaptively choosing a different number of bins per class, as Algorithm~\ref{alg:umd-top-label} does.}
    \label{fig:covtype-all}
\end{figure}

We present additional details and results for the top-label HB experiment of Section~\ref{subsec:covertype-top-label}. The base classifier is an RF learnt using \texttt{sklearn.ensemble import RandomForestClassifier} with default parameters. The base RF is a nearly continuous base model since most predictions are unique. Thus, we need to use binning to make reliability diagrams, validity plots, and perform ECE estimation, for the base model. To have a fair comparison, instead of having a fixed binning scheme to assess the base model, the binning scheme was decided based on the unique predictions of top-label HB. Thus for every $l$, and $ r \in \text{Range}(h_l)$, the bins are defined as $\{x : c(x) = l,  h_l(x) = r\}$. Due to this, while the base model in Figures~\ref{fig:covtype-adaptive} and \ref{fig:covtype-fixed} are the same, the reliability diagrams and validity plots in orange are different. As can be seen in the bar plots in Figure~\ref{fig:covtype-RF}, the ECE estimation is not affected significantly. 

When $k=100$, the total number of bins chosen by Algorithm~\ref{alg:umd-top-label} was 403, which is roughly $57.6$ bins per class. The choice of $B=50$ for the fixed bins per class experiment was made on this basis.

Figure~\ref{fig:covtype-all} supplements Figure~\ref{fig:covtype-RF} in the main paper by presenting reliability diagrams and validity plots of top-label HB for all classes. Figure~\ref{fig:covtype-all-adaptive} presents the plots with adaptive number of bins per class (Algorithm~\ref{alg:umd-top-label}), and Figure~\ref{fig:covtype-all-fixed} presents these for fixed number of bins per class. We make the following observations. 
\begin{enumerate}[label=(\alph*)]
    \item For every class $l \in [L]$, the RF is overconfident. This may seem surprising at first since we generally expect that models may be overconfident for certain classes and underconfident for others. However, note that all our plots assess top-label calibration, that is, we are assessing the predicted and true probabilities of only the predicted class. It is possible that a model is overconfident for every class whenever that class is predicted to be the top-label. 
    \item For the most likely classes, namely classes 1 and 2, the number of bins in the adaptive case is higher than 50. Fewer bins leads to better calibration (at the cost of sharpness). This can be verified through the validity plots for classes 1 and 2---the validity plots in the fixed bins case is slightly \emph{above} the validity plot in the adaptive bin case. However both validity plots are quite similar.
    \item The opposite is true for the least likely classes, namely classes 4, 5, 6. The validity plot in the fixed bins case is \emph{below} the validity plot in the adaptive bins case, indicating higher TL-ECE in the fixed bins case. The difference between the validity plots is high. Thus if a fixed number of bins per class is pre-decided, the performance for the least likely classes significantly suffers. 
\end{enumerate}
Based on these observations, we recommend adaptively choosing the number of bins per class, as done by Algorithm~\ref{alg:umd-top-label}.

\section{Binning-based calibrators for canonical multiclass calibration}
\label{appsec:umd-joint}
Canonical calibration is a notion of calibration that does not fall in the M2B category. To define canonical calibration, we use $\Y$ to denote the output as a 1-hot vector. That is, $\Y_i = \e_{Y_i} \in \Delta^{L-1}$, where $e_l$ corresponds to the $l$-th canonical basis vector in $\Real^d$. Recall that a predictor $\h = (h_1, h_2, \ldots, h_L)$ is said to be canonically calibrated if $\Prob(Y=l \mid \h(X)) = h_l(X)$ for every $l \in [L]$. Equivalently, this can be stated as $\Exp{}{\Y \mid \h(X)} = \h(X)$. Canonical calibration implies class-wise calibration:
\begin{proposition}
\label{prop:canonical-implies-class-wise}
If $\Exp{}{\Y \mid \h(X)} = \h(X)$, then for every $l \in [L]$, $\Prob(Y = l \mid h_l(X)) = h_l(X)$.
\end{proposition}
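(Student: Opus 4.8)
The plan is to read off the $l$-th coordinate of the vector identity $\Exp{}{\Y \mid \h(X)} = \h(X)$ and then coarsen the conditioning from the full vector $\h(X)$ to its single coordinate $h_l(X)$ via the tower property of conditional expectation. First I would fix $l \in [L]$ and note that the $l$-th component of $\Y = \e_Y$ is exactly the indicator $\indicator{Y = l}$, so the assumption gives $\Exp{}{\indicator{Y = l} \mid \h(X)} = h_l(X)$ almost surely; equivalently $\Prob(Y = l \mid \h(X)) = h_l(X)$.

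Next I would observe that $h_l(X)$ is a measurable function of $\h(X)$ (it is the $l$-th coordinate projection), so $\sigma(h_l(X)) \subseteq \sigma(\h(X))$. Applying the tower property with this nested pair of $\sigma$-algebras,
\[
\Exp{}{\indicator{Y = l} \mid h_l(X)} = \Exp{}{\Exp{}{\indicator{Y = l} \mid \h(X)} \mid h_l(X)} = \Exp{}{h_l(X) \mid h_l(X)} = h_l(X),
\]
where the middle equality uses the assumption and the last uses that $h_l(X)$ is $\sigma(h_l(X))$-measurable. Since $\Exp{}{\indicator{Y = l} \mid h_l(X)} = \Prob(Y = l \mid h_l(X))$, this is precisely the claimed class-wise calibration identity, and since $l$ was arbitrary it holds for every $l \in [L]$.

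There is no real obstacle here — the only point requiring a moment's care is the measurability remark that justifies the $\sigma$-algebra inclusion (and hence the legitimacy of the tower step), which is immediate since coordinate projections are continuous. All statements are understood to hold $P$-almost surely, consistent with the conventions used elsewhere in the paper.
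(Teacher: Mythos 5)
Your proposal is correct and follows essentially the same route as the paper's proof: read off the $l$-th coordinate of the canonical calibration identity and apply the tower property to coarsen the conditioning from $\h(X)$ to $h_l(X)$. The only addition is your explicit remark on the $\sigma$-algebra inclusion, which the paper leaves implicit.
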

The proof in Appendix~\ref{appsec:proofs} is straightforward, but the statement above is illuminating, 
because there exist predictors that are class-wise calibrated but not canonically calibrated \citep[Example 1]{vaicenavicius2019bayescalibration}.

Canonical calibration is not an M2B notion since the conditioning occurs on the $L$-dimensional prediction vector $\text{pred}(X) = \h(X)$, and after this conditioning, each of the $L$ statements $P(Y = l \mid \text{pred}(X)) = h_l(X)$ should \emph{simultaneously} be true. On the other hand, M2B notions verify only individual binary calibration claims for every such conditioning. Since canonical calibration does not fall in the M2B category, Algorithm~\ref{alg:m2b-calibration} does not lead to a calibrator for canonical calibration. In this section, we discuss alternative binning-based approaches to achieving canonical calibration. 

For binary calibration, there is a complete ordering on the interval $[0,1]$, and this ordering is leveraged by binning based calibration algorithms. However, $\Delta^{L-1}$, for $L \geq 3$ does not have such a natural ordering. Hence, binning algorithms do not obviously extend for multiclass classification. In this section, we briefly discuss some binning-based calibrators for canonical calibration. Our descriptions are for general $L \geq 3$, but we anticipate these algorithms to work reasonably only for small $L$, say if $L \leq 5$.

As usual, denote $\g: \Xcal \to \Delta^{L-1}$ as the base model and $\h : \Xcal \to \Delta^{L-1}$ as the model learnt using some post-hoc canonical calibrator. For canonical calibration, we can surmise binning schemes that directly learn $\h$ by partitioning the prediction space $\Delta^{L-1}$ into bins and estimating the distribution of $\Y$ in each bin. A canonical calibration guarantee can be showed for such a binning scheme using multinomial concentration \citep[Section 3.1]{podkopaev2021distribution}. However, since $\text{Vol}(\Delta^{L-1}) = 2^{\Theta(L)}$, there will either be a bin whose volume is $2^{\Omega(L)}$ (meaning that $\h$ would not be sharp), or the number of bins will be $2^{\Omega(L)}$, entailing $2^{\Omega(L)}$ requirements on the sample complexity---a \emph{curse of dimensionality}. Nevertheless, let us consider some binning schemes that could work if $L$ is small. %

Formally, a binning scheme corresponds to a partitioning of $\Delta^{L-1}$ into $B \geq 1$ bins. We denote this binning scheme as $\Bcal: \Delta^{L-1} \to [B]$, where $\Bcal(\s)$ corresponds to the bin to which $\s \in \Delta^{L-1}$ belongs. To learn $\h$, the calibration data is binned to get sets of data-point indices that belong to each bin, depending on the $\g(X_i)$ values: %
\[
\mbox{for every } b \in [B],\ T_{b} := \{i : \Bcal(\g(X_i)) = b\}, n_b = \abs{T_b}.
\]
We then compute the following estimates for the label probabilities in each bin:
\[
\mbox{for every } (l, b) \in [L] \times [B],\ \widehat{\Pi}_{l, b} := \frac{\sum_{i \in T_b} \indicator{Y_i = l}}{n_b} \text{ if } n_b > 0 \text{ else }\widehat{\Pi}_{l, b} = 1/B.
\]
The binning predictor $\h: \Xcal \to \Delta^{L-1}$ is now defined component-wise as follows: %
\[
\mbox{for every } l \in [L],\ h_l(x) = \widehat{\Pi}_{l, \Bcal(x)}.
\]
In words, for every bin $b\in [B]$, $\h$ predicts the empirical distribution of the $Y$ values in bin $b$. %

Using a multinomial concentration inequality \citep{devroye1983equivalence, qian2020concentration, weissman2003inequalities}, calibration guarantees can be shown for the learnt $\h$. \citet[Theorem 3]{podkopaev2021distribution} show such a result using the Bretagnolle-Huber-Carol inequality. All of these concentration inequality give bounds that depend inversely on $n_b$ or $\sqrt{n_b}$. %

In the following subsections, we describe some binning schemes which can be used for canonical calibration based on the setup illustrated above. First we describe fixed schemes that are not adaptive to the distribution of the data: Sierpinski binning (Appendix~\ref{appsec:sierpinski}) and grid-style binning (Appendix~\ref{appsec:grid-style}). These are analogous to fixed-width binning for $L=2$. Fixed binning schemes are not adapted to the calibration data and may have highly imbalanced bins leading to poor estimation of the distribution of $\Y$ in bins with small $n_b$. In the binary case, this issue is remedied using histogram binning to ensure that each bin has nearly the same number of calibration points \citep{gupta2021distribution}. While histogram binning uses the order of the scalar $g(X_i)$ values, there is no obvious ordering for the multi-dimensional $\g(X_i)$ values. In Appendix~\ref{appsec:umd-canonical} we describe a projection based histogram binning scheme that circumvents this issue and ensures that each $n_b$ is reasonably large. In Appendix~\ref{appsec:canonical-exps}, we present some preliminary experimental results using our proposed binning schemes. 

Certain asymptotic consistency results different from calibration have been established for histogram regression and classification in the nonparametric statistics literature  \citep{nobel1996histogram, lugosi1996consistency, gordon1984almost, breiman2017classification, devroye1988automatic}; further extensive references can be found within these works. The methodology of histogram regression and classification relies on binning and is very similar to the one we propose here. The main difference is that these works consider binning the feature space $\Xcal$ directly, unlike the post-hoc setting where we are essentially interested in binning $\Delta^{L-1}$. In terms of theory, the results these works target are asymptotic consistency for the (Bayes) optimal classification and regression functions, instead of canonical calibration. It would be interesting to consider the (finite-sample) canonical calibration properties of the various algorithms proposed in the context of histogram classification. However, such a study is beyond the scope of this paper.

\subsection{Sierpinski binning}
\label{appsec:sierpinski}

\begin{figure}[t]
    \centering
\begin{subfigure}[t]{0.24\linewidth}
    \centering
\includegraphics[width=0.8\linewidth]{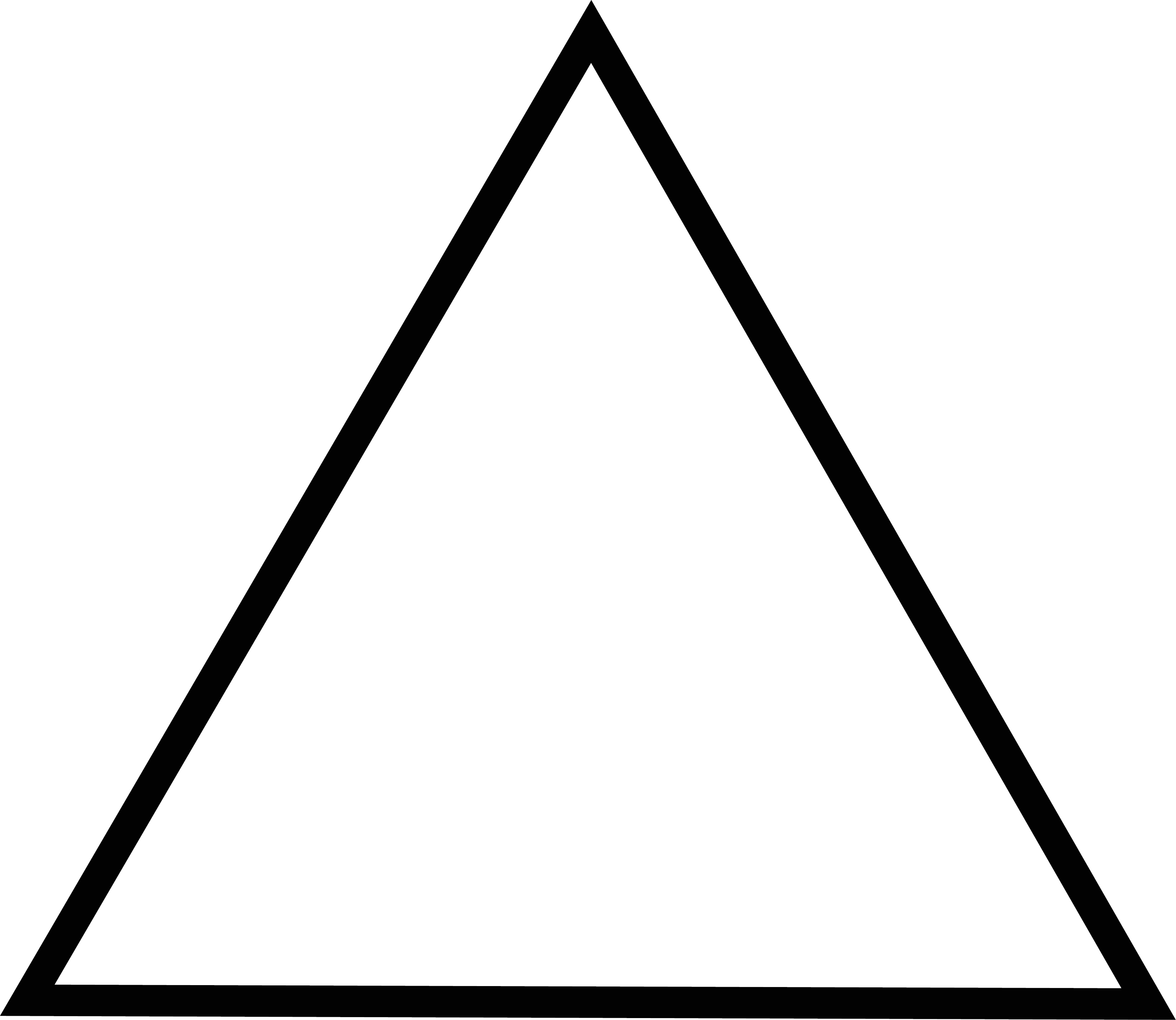}
\caption*{$\Delta^2$}
\end{subfigure}
\begin{subfigure}[t]{0.24\linewidth}
    \centering \includegraphics[width=0.8\linewidth]{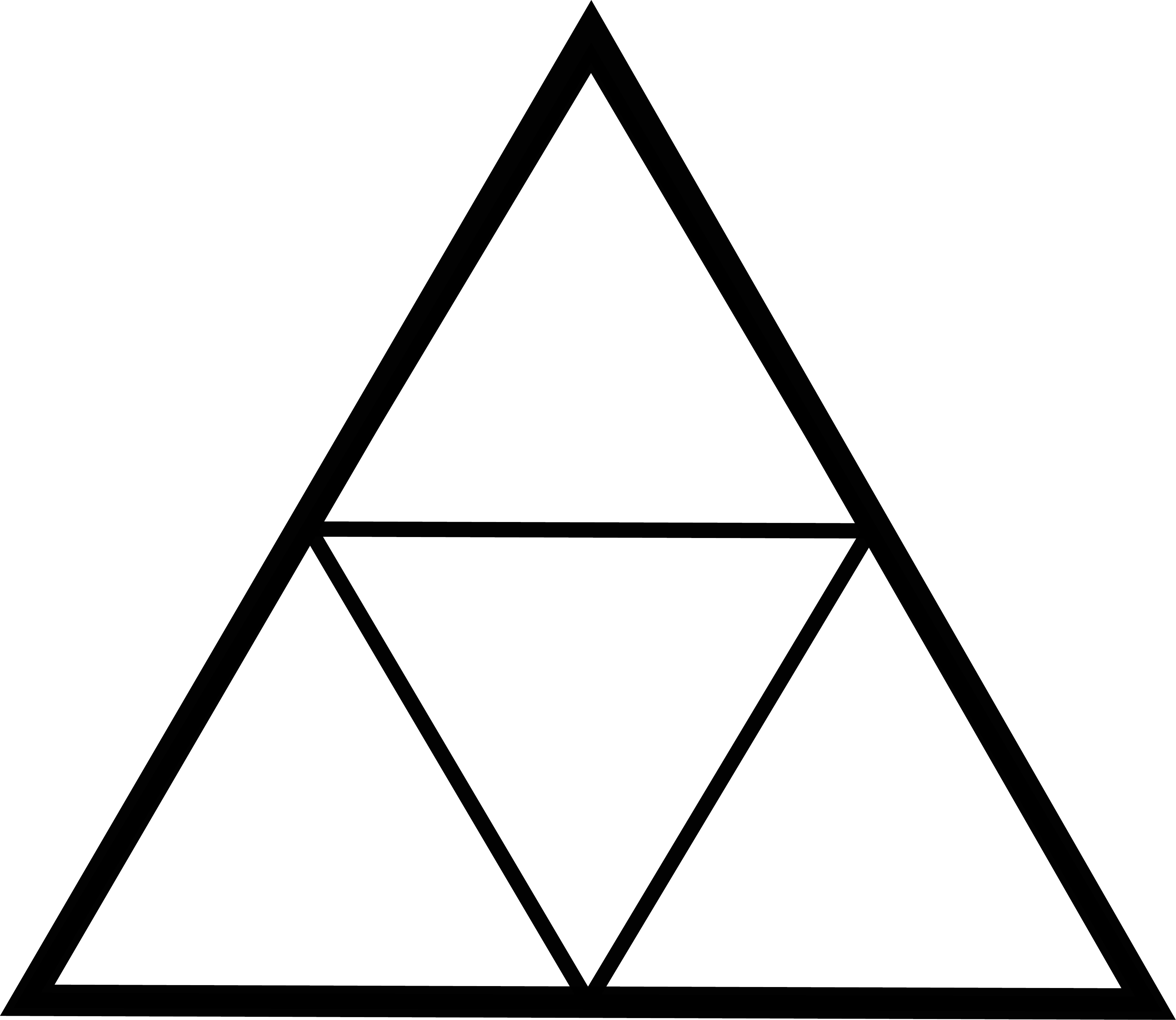}
    \caption*{$q=1$}
\end{subfigure}
\begin{subfigure}[t]{0.24\linewidth}
    \centering
    \includegraphics[width=0.8\linewidth]{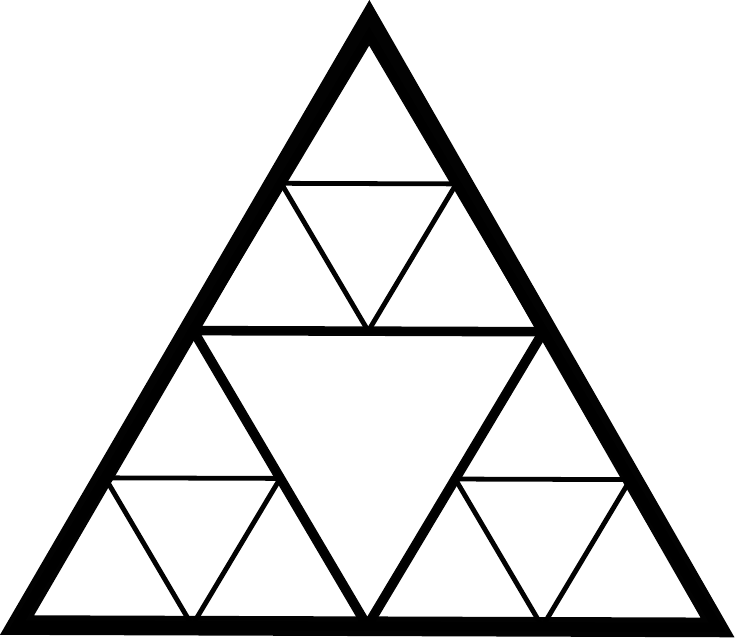}
    \caption*{$q=2$}
\end{subfigure}       \begin{subfigure}[t]{0.24\linewidth}
    \centering
    \includegraphics[width=0.8\linewidth]{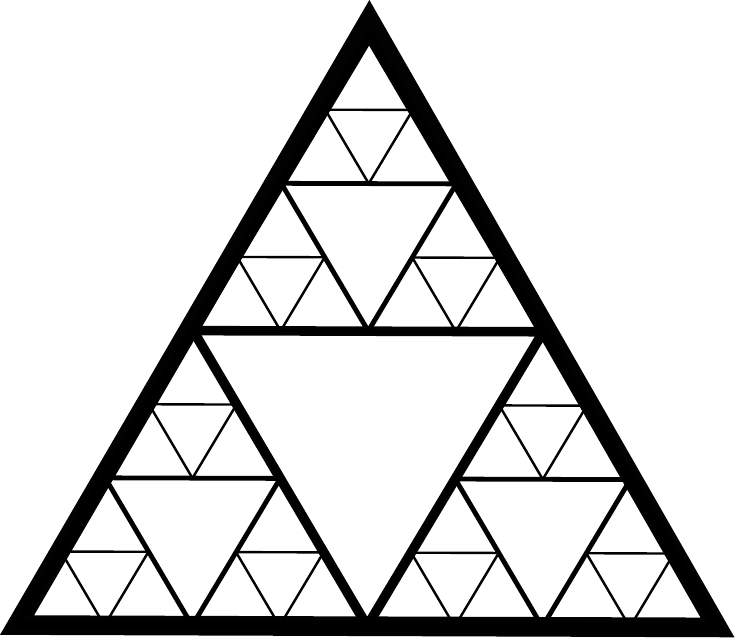}
    \caption*{$q=3$}    
\end{subfigure}
    \caption{Sierpinski binning for $L=3$. The leftmost triangle represents the probability simplex $\Delta^2$. Sierpinski binning divides $\Delta^2$ recursively based on a depth parameter $q \in \naturals$.}%
    \label{fig:sierpinski}
\end{figure}

First, we describe Sierpinski binning for $L=3$. The probability simplex for $L=3$, $\Delta^2$, is a triangle with vertices $\e_1 = (1,0,0)$, $\e_2 = (0,1,0)$, and $\e_3 = (0,0,1)$. Sierpinski binning is a recursive partitioning of this triangle based on the fractal popularly known as the Sierpinski triangles. Some Sierpinski bins for $L=3$ are shown in Figure~\ref{fig:sierpinski}. 
Formally, we define Sierpinski binning recursively based on a depth parameter $q \in \naturals$. Given an $x \in \Xcal$, let $\s = \g(x)$. For $q = 1$, the number of bins is $B = 4$, and the binning scheme $\Bcal$ is defined as: %
\begin{align}
\Bcal(\s) =
\left\{
	\begin{array}{ll}
		1  & \mbox{if } s_1 > 0.5 \\
		2 & \mbox{if } s_2 > 0.5 \\
		3 & \mbox{if } s_3 > 0.5 \\
		4 & \mbox{otherwise.}
	\end{array}
\right.
\label{eq:L-3-B-4}
\end{align}
Note that since $s_1 + s_2 + s_3 = 1$, only one of the above conditions can be true. It can be verified that each of the bins have volume equal to $(1/4)$-th the volume of the probability simplex $\Delta^2$. If a finer resolution of $\Delta^2$ is desired, $B$ can be increased by further dividing the partitions above. Note that each partition is itself a triangle; thus each triangle can be mapped to $\Delta^2$ to recursively define the sub-partitioning. For $i \in [4]$, define the bins $b_i = \{\s : \Bcal(\s) = i\}$. Consider the bin $b_1$. Let us \emph{reparameterize} it as 
$(t_1, t_2, t_3) = (2s_1 - 1, 2s_2, 2s_3)$. It can be verified that 
\[
b_1 = \{(t_1, t_2, t_3) : %
s_1 > 0.5\} = \{(t_1, t_2, t_3) : t_1 + t_2 + t_3 = 1, t_1 \in (0, 1], t_2 \in [0, 1), t_3 \in [0, 1)\}. 
\]
Based on this reparameterization, %
we can recursively sub-partition $b_1$ as per the scheme \eqref{eq:L-3-B-4}, replacing $s$ with $t$. Such reparameterizations can be defined for each of the bins defined in \eqref{eq:L-3-B-4}:
\begin{align*}
    b_2 = \{(s_1, s_2, s_3): s_2 > 0.5\} &: (t_1, t_2, t_3) = (2s_1, 2s_2 - 1, 2s_3), \\
    b_3 = \{(s_1, s_2, s_3): s_3 > 0.5\} &: (t_1, t_2, t_3) = (2s_1, 2s_2, 2s_3 - 1), \\
    b_4 = \{(s_1, s_2, s_3): s_i \leq 0.5 \text{ for all }i\} &: (t_1, t_2, t_3) = (1-2s_1, 1-2s_2, 1-2s_3),
\end{align*}
and thus every bin can be recursively sub-partitioned as per \eqref{eq:L-3-B-4}. As illustrated in Figure~\ref{fig:sierpinski}, for Sierpinski binning, we sub-partition only the bins $b_1, b_2, b_3$ since the bin $b_4$ corresponds to low confidence for all labels, where finer calibration may not be needed. (Also, in the $L > 3$ case described shortly, the corresponding version of $b_4$ is geometrically different from $\Delta^{L-1}$, and the recursive partitioning cannot be defined for it.) If at every depth, we sub-partition all bins except the corresponding $b_4$ bins, then it can be shown using simple algebra that the total number of bins is $(3^{q+1}-1)/2$. For example, in Figure~\ref{fig:sierpinski}, when $q=2$, the number of bins is $B=14$, and when $q=3$, the number of bins is $B = 40$.

As in the case of $L = 3$, Sierpinski binning for general $L$ is defined through a partitioning function of $\Delta^{L-1}$ into $L+1$ bins, and a reparameterization of the partitions so that they can be further sub-partitioned. The $L+1$ bins at depth $q=1$ are defined as 
\begin{align}
\Bcal(\s) =
\left\{
	\begin{array}{ll}
		l  & \mbox{if } s_l > 0.5, \\
        L+1 & \mbox{otherwise.}
	\end{array}
\right.
\label{eq:general-L}
\end{align}

While this looks similar to the partitioning \eqref{eq:L-3-B-4}, the main difference is that the bin $b_{L+1}$ has a larger volume than other bins. Indeed for $l \in [L]$, $\text{vol}(b_l) = \text{vol}(\Delta^{L-1})/2^{L-1}$,  while $\text{vol}(b_{L+1}) = \text{vol}(\Delta^{L-1}) ( 1 - L/2^{L-1}) \geq \text{vol}(\Delta^{L-1})/2^{L-1}$, with equality only occuring for $L=3$. Thus the bin $b_{L+1}$ is larger than the other bins. If $\g(x) \in b_{L+1}$, then the prediction for $x$ may be not be very sharp, compared to if $\g(x)$ were in any of the other bins. On the other hand, if $\g(x) \in b_{L+1}$, the score for every class is smaller than $0.5$, and sharp calibration may often not be desired in this region.

In keeping with this understanding, we only reparameterize the bins $b_1, b_2, \ldots, b_L$ so that they can be further divided:
\begin{align*}
    b_l = \{(s_1, s_2, \ldots, s_L): s_l > 0.5\} &: (t_1, t_2, \ldots, t_L) = (2s_1,  \ldots, 2s_l - 1, \ldots, 2s_L).
\end{align*}
For every $l \in [L]$, under the reparameterization above, it is straightforward to verify that 
\[
\{(t_1, t_2, \ldots, t_L) :
s_l > 0.5\} = \{(t_1, t_2, \ldots, t_L) : \sum_{u \in [L]} t_u = 1, t_l \in (0, 1], t_u \in [0, 1) \ \forall u \neq l\}. 
\]
Thus every bin can be recursively sub-partitioned following \eqref{eq:general-L}. 
For Sierpinski binning with $L$ labels, the number of bins at depth $q$ is given by $(L^{q+1}-1)/(L-1)$. %

\begin{figure}[t]
    \centering
    \begin{subfigure}{0.49\linewidth}
    \centering
    \includegraphics[width=0.5\linewidth]{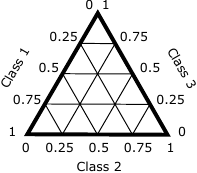}
    \caption*{$K = 4, \tau = 0.25$}
    \end{subfigure}
    \begin{subfigure}{0.49\linewidth}
    \centering
    \includegraphics[width=0.5\linewidth]{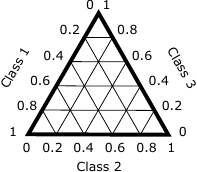}
    \caption*{$K = 5, \tau = 0.2$}
    \end{subfigure}
    \caption{Grid-style binning for $L=3$.}
    \label{fig:grid-style}
\end{figure}
\subsection{Grid-style binning }
\label{appsec:grid-style}
Grid-style binning is motivated from the 2D reliability diagrams of \citet[Figure 1]{widmann2019calibration}, where they partitioned $\Delta^2$ into multiple equi-volume bins in order to assess canonical calibration on a 3-class version of CIFAR-10. For $L=3$, $\Delta^2$ can be divided as shown in Figure~\ref{fig:grid-style}. This corresponds to \emph{gridding} the space $\Delta^2$, just the way we think of \emph{gridding} the real hyperplane. However, the mathematical description of this grid for general $L$ is not apparent from Figure~\ref{fig:grid-style}. We describe grid-style binning formally for general $L \geq 3$. 

Consider some $\tau > 0$ such that $ K := 1/\tau \in \naturals$. For every tuple $\kvec = (k_1, k_2, \ldots, k_{L})$ in the set  \begin{equation} I = \{
\kvec \in \naturals^L : \max(L, K + 1) \leq \sum_{l \in [L]} k_l \leq K + (L - 1)\}, \label{eq:k-tuple}
\end{equation}
define the bins 
\begin{equation}
b_{\kvec} := \{\svec \in \Delta^{L-1}: \mbox{ for every } l \in [L], s_lK \in [k_l - 1,k_l].  \label{eq:grid-style-bins}
\end{equation}

These bins are not mutually disjoint, but intersections can only occur at the edges. That is, for every $\s$ that belongs to more than one bin,  at least one component $s_l$ satisies $s_lK \in \naturals$. In order to identify a single bin $\s$, ties can be broken arbitrarily. One strategy is to use some ordering on $\naturals^L$; say for $\kvec_1 \neq \kvec_2 \in \naturals$, $\kvec_1 < \kvec_2$ if and only if for the first element of $\kvec_1$ that is unequal to the corresponding element of $\kvec_2$ the one corresponding to $\kvec_1$ is smaller. Then define the binning function $\Bcal : \Delta^{L-1} \to \abs{I}$ as $\Bcal(\s) = \min\{\kvec : \s \in b_{\kvec}\}$. The following propositions prove that a) each $\s$ belongs to at least one bin, and b) that every bin is an $L-1$ dimensional object (and thus a meaningful partition of $\Delta^{L-1}$).

\begin{proposition}
The bins $\{b_\kvec : \kvec \in I\}$ defined by \eqref{eq:grid-style-bins} mutually exhaust $\Delta^{L-1}$. 
\end{proposition}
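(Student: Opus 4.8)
The plan is to fix an arbitrary $\svec \in \Delta^{L-1}$ and explicitly construct a multi-index $\kvec \in I$ with $\svec \in b_\kvec$. Writing $a_l := s_l K \ge 0$ (so that $\sum_{l\in[L]} a_l = K$), unwinding \eqref{eq:grid-style-bins} shows that $\svec \in b_\kvec$ is precisely the requirement that $k_l \in [a_l, a_l+1]$ for every $l \in [L]$. So it suffices to produce positive integers $k_1, \dots, k_L$ with each $k_l \in [a_l, a_l+1]$ whose sum lands in the admissible window $[\max(L, K+1),\, K+L-1]$ of \eqref{eq:k-tuple}; then $\kvec \in I$ and $\svec \in b_\kvec$.

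First I would record, for each coordinate $l$, the set of legal integer values $V_l := [a_l, a_l+1] \cap \naturals$. Since $[a_l, a_l+1]$ is an interval of length one containing the positive integer $\max(1, \ceil{a_l})$, the set $V_l$ is a nonempty run of one or two consecutive positive integers, with minimum $\underline k_l := \max(1, \ceil{a_l})$ and maximum $\overline k_l := \floor{a_l} + 1$. Because each $V_l$ is a block of consecutive integers, a trivial bumping argument (start from $(\underline k_1, \dots, \underline k_L)$ and raise one coordinate by one unit at a time, which is always possible until the target is reached since the total remaining slack stays positive) shows that \emph{every} integer $m$ with $\sum_l \underline k_l \le m \le \sum_l \overline k_l$ is realized as $\sum_l k_l$ for some admissible choice $k_l \in V_l$. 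Hence the whole proposition reduces to checking that the interval $\big[\sum_l \underline k_l,\ \sum_l \overline k_l\big]$ meets $[\max(L, K+1),\, K+L-1]$, i.e.\ to the two inequalities $\sum_l \overline k_l \ge \max(L, K+1)$ and $\sum_l \underline k_l \le K+L-1$.

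For the first inequality, $\overline k_l \ge 1$ gives $\sum_l \overline k_l \ge L$, while $\overline k_l = \floor{a_l}+1 > a_l$ gives $\sum_l \overline k_l > \sum_l a_l = K$, hence $\sum_l \overline k_l \ge K+1$ since the left-hand side is an integer. For the second, $\underline k_l \le a_l+1$ yields only the weak bound $\sum_l \underline k_l \le K+L$, and the improvement by one — the step that needs care — I would get from integrality: $\sum_l \underline k_l - K$ is a nonnegative integer (since $\underline k_l \ge a_l$) that is \emph{strictly} below $L$, because $\underline k_l - a_l = 1$ forces $a_l = 0$ and not all $a_l$ can vanish as $\sum_l a_l = K \ge 1$; a nonnegative integer strictly below $L$ is at most $L-1$, so $\sum_l \underline k_l \le K+L-1$. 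Taking $m := \max\big(\sum_l \underline k_l,\ \max(L, K+1)\big)$ then lies in both intervals: it satisfies $\sum_l \underline k_l \le m \le \sum_l \overline k_l$ (the upper bound because $\max(L,K+1) \le \sum_l \overline k_l$), and $\max(L, K+1) \le m \le K+L-1$ (the upper bound because $\sum_l \underline k_l \le K+L-1$ and $\max(L,K+1) \le K+L-1$, using $K \ge 1$ and $L \ge 3$). Any $\kvec$ realizing $\sum_l k_l = m$ then has $\kvec \in I$ and, since each $k_l \in V_l \subseteq [a_l, a_l+1]$, satisfies $\svec \in b_\kvec$.

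The one point that needs genuine thought is that the naive recipe ``round every coordinate up'', $k_l = \ceil{a_l}$, does \emph{not} work in general: its sum can be as small as $K$, which may fall below the required threshold $\max(L, K+1)$ — for instance $L = K = 3$ and $\svec = (\tfrac13, \tfrac13, \tfrac13)$ give $\sum_l \ceil{a_l} = 3 < 4$. The real content is therefore recognizing that there is exactly one unit of slack per coordinate (inherited from the length-one intervals $[a_l, a_l+1]$) to push the sum into the window, together with the two integrality bookkeeping steps above; everything else is elementary arithmetic.
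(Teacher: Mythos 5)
Your proof is correct, and it follows the same basic strategy as the paper's: explicitly construct $\kvec$ with each $k_l \in [s_lK,\, s_lK+1]\cap\naturals$, then verify the two sum constraints via the same integrality observations (the sum can be pushed strictly above $K$ because each coordinate can be rounded up, and it stays strictly below $K+L$ because not every coordinate can contribute a full unit of slack, as not all $s_lK$ vanish). The packaging differs in a useful way. The paper commits to one specific $\kvec$ through a case analysis on whether the coordinates $s_lK$ are integers or zero, whereas you characterize the entire set of achievable sums as the integer interval $\left[\sum_l \underline{k}_l,\ \sum_l \overline{k}_l\right]$ and show that this interval meets the admissible window $[\max(L,K+1),\, K+L-1]$. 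Your version buys robustness: the paper's recipe, read literally, assigns $k_l = s_lK$ to \emph{every} coordinate when all $s_lK$ are positive integers (its condition $C$ is then vacuously true), which yields $\sum_l k_l = K$ and misses the window --- precisely the $\svec = (1/3,1/3,1/3)$, $K=L=3$ instance you flag as the failure mode of naive rounding. Your interval argument repairs this automatically by bumping one coordinate to reach $m = \max\left(\sum_l \underline{k}_l,\ \max(L,K+1)\right)$. So the route is not fundamentally different, but your write-up is cleaner and closes a small gap in the paper's case analysis.
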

\begin{proof}
Consider any $\s \in \Delta^{L-1}$. For $s_lK \notin \naturals = \{1, 2, \ldots \}$, set $k_l = \max(1, \ceil{s_lK}) > s_lK$. Consider the condition \[C : \text{ for all } l \text{ such that } s_lK \notin \naturals, s_lK=0.\] If $C$ is true, then for $l$ such that $s_lK \in \naturals$, set $k_l = s_lK$. If $C$ is not true, then for $l$ such that $s_lK \in \naturals$, set exactly one $k_l = s_lK + 1$, and for the rest set $k_l = s_lK$. Based on this setting of $\kvec$, it can be verified that $\s \in b_{\kvec}$. 

Further, note that for every $l$, $k_l \geq s_lK$, and there exists at least one $l$ such that $k_l > s_lK$. Thus we have: 
\begin{align*}
    \sum_{l=1}^L k_l &> \sum_{l=1}^L s_lK
    \\ &= K.
\end{align*}
Since $\sum_{l=1}^L k_l \in \naturals$, we must have $\sum_{l=1}^L k_l \geq K+1$. Further since each $k_l \in \naturals$, $\sum_{l=1}^L k_l \geq L$. 

Next, note that for every $l$, $k_l \leq s_lK + 1$. If $C$ is true, then there is at least one $l$ such that $s_lK \in \naturals$, and for this $l$, we have set $k_l = s_lK < s_lK+1$. If $C$ is not true, then either there exists at least one $l$ such that $s_lK \notin \naturals \cup \{0\}$ for which $k_l = \ceil{s_lK} < s_lK+1$, or every $s_lK \in \naturals$, in which case we have set $k_l = s_lK$ for one of them. In all cases, note that there exists an $l$ such that  $k_l < s_lK + 1$. Thus,
\begin{align*}
    \sum_{l=1}^L k_l &< \sum_{l=1}^L (s_lK+1)
    \\ &= K+L.
\end{align*}
Since $\sum_{l=1}^L k_l \in \naturals$, we must have $\sum_{l=1}^L k_l \leq K+L-1$. 

\end{proof}

Next, we show that each bin indexed by $\kvec \in I$ contains a non-zero volume subset of $\Delta^{L-1}$, where volume is defined with respect to the Lebesgue measure in $\Real^{L-1}$. This can be shown by arguing that  $b_\kvec$ contains a scaled and translated version of $\Delta^{L=1}$.

\begin{proposition}
For every $\kvec \in I$, there exists some $\uvec \in \Real^L$ and $v > 0$ such that  $\uvec + v\Delta^{L-1} \subseteq b_\kvec$.
\end{proposition}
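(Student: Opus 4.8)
The plan is to produce an explicit scaled-and-translated copy of the simplex sitting inside $b_{\kvec}$; positive $(L-1)$-dimensional volume of $b_{\kvec}$ is then immediate because $v\Delta^{L-1}$ is a nondegenerate simplex. Parametrize a generic point of $\uvec + v\Delta^{L-1}$ as $\uvec + v\mathbf{t}$ with $\mathbf{t} \in \Delta^{L-1}$. This point lies in the affine hull of $\Delta^{L-1}$ exactly when $\sum_{l} u_l = 1 - v$, and (given that, together with $u_l + v t_l \ge 0$) it lies in $b_{\kvec}$ exactly when $(u_l + v t_l)K \in [k_l - 1, k_l]$ for every $l$. Since each coordinate $t_l$ sweeps all of $[0,1]$ as $\mathbf{t}$ ranges over $\Delta^{L-1}$, after the substitution $w_l := u_l K$ the requirement $\uvec + v\Delta^{L-1} \subseteq b_{\kvec}$ is equivalent to the linear system: $w_l \ge k_l - 1$ and $w_l + vK \le k_l$ for all $l$, together with $\sum_l w_l = K(1-v)$, $v > 0$, and $w_l \ge 0$ (the last being automatic from $k_l \ge 1$). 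So the whole statement reduces to solving this feasibility problem.

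Next I would choose $v$ small. The constraints $w_l \ge k_l - 1$ and $w_l \le k_l - vK$ are jointly satisfiable only if $vK \le 1$; when that holds, each $w_l$ is free to lie in the box $[k_l - 1,\, k_l - vK]$. By the elementary fact that $\{(w_l)_l : w_l \in [a_l, b_l],\ \sum_l w_l = S\}$ is nonempty precisely when $\sum_l a_l \le S \le \sum_l b_l$, solvability of the remaining equation $\sum_l w_l = K(1-v)$ amounts to $\sum_l (k_l - 1) \le K(1-v) \le \sum_l (k_l - vK)$. Rearranging, the left inequality reads $v \le (K + L - \sum_l k_l)/K$ and the right reads $v \le (\sum_l k_l - K)/(K(L-1))$. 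This is exactly where the definition of $I$ enters: the upper bound $\sum_l k_l \le K + L - 1$ makes the first right-hand side at least $1/K$, while the lower bound $\sum_l k_l \ge \max(L, K+1) \ge K+1$ makes the second positive (at least $1/(K(L-1))$, using $L \ge 3$). Consequently $v := \tfrac{1}{2K(L-1)}$ is positive and satisfies $vK \le 1$ together with both displayed upper bounds at once.

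With this $v$ fixed, I would pick any $(w_l)_l$ with $w_l \in [k_l - 1,\, k_l - vK]$ and $\sum_l w_l = K(1-v)$ (which exists by the box-sum fact just invoked), set $\uvec := (w_1/K, \dots, w_L/K)$, and verify the containment directly: for any $\mathbf{t} \in \Delta^{L-1}$ the coordinate $u_l + v t_l = w_l/K + v t_l \ge (k_l-1)/K \ge 0$ and $\sum_l (u_l + v t_l) = (1-v) + v = 1$, so $\uvec + v\mathbf{t} \in \Delta^{L-1}$; moreover $(u_l + v t_l)K = w_l + vK t_l \in [w_l,\, w_l + vK] \subseteq [k_l - 1, k_l]$, so $\uvec + v\mathbf{t} \in b_{\kvec}$. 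Hence $\uvec + v\Delta^{L-1} \subseteq b_{\kvec}$ with $v > 0$, and since the translate of the full-dimensional simplex $v\Delta^{L-1}$ has positive $(L-1)$-dimensional Lebesgue measure within the affine hull of $\Delta^{L-1}$, so does $b_{\kvec}$.

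I do not anticipate a genuine obstacle: the content is entirely linear-algebraic. The only point requiring care is that a single scale $v$ can be chosen to meet all the compatibility inequalities simultaneously, and this is precisely what the two defining bounds of $I$ deliver (the lower bound $\max(L,K+1)$ controls the right-hand box constraint, the upper bound $K+L-1$ controls the left-hand one). The auxiliary box-sum feasibility claim is standard — a one-line continuity or greedy argument suffices — and everything else is direct substitution.
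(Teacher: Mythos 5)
Your argument is correct and follows essentially the same route as the paper's: both proofs exhibit an explicit translate-and-scaling $\uvec + v\Delta^{L-1}$ of the simplex inside $b_{\kvec}$, with the two defining inequalities of $I$ entering exactly where you say they do. The only (immaterial) difference is in bookkeeping --- the paper picks a single offset $\tau$ via the intermediate value theorem so that $v = (1-\tau)/K$ depends on $\kvec$, whereas you fix a uniform $v = \tfrac{1}{2K(L-1)}$ and solve a box-plus-sum feasibility system for the translation.
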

\begin{proof}
Fix some $\kvec \in I$. By condition \eqref{eq:k-tuple}, $\sum_{l=1}^Lk_l \in [\max(L, K+1), K+L-1]$. Based on this, we claim that there exists a $\tau \in [0, 1)$ such that 
\begin{equation}
    \sum_{l=1}^L(k_l -1) + \tau L + (1 - \tau)= K. \label{eq:ivp-property}
\end{equation} 
Indeed, note that for $\tau = 0$, $\sum_{l=1}^L(k_l -1) + \tau L + (1 - \tau) \leq (K-1)+1 = K$ and for $\tau = 1$, $\sum_{l=1}^L(k_l -1) + \tau L + (1 - \tau) = \sum_{l=1}^Lk_l > K$. Thus, there exists a $\tau$ that satisfies \eqref{eq:ivp-property} by the intermediate value theorem. 

Next, define $\uvec = K^{-1}(\kvec + (\tau-1)\mathbf{1}_L)$ and $v = K^{-1}(1-\tau) > 0$, where $\mathbf{1}_L$ denotes the vector in $\Real^L$ with each component equal to $1$. Consider any $\s \in \uvec + v\Delta^{L-1}$. Note that for every $l \in [L]$, $s_lK \in [k_l - 1, k_l]$ and by property \eqref{eq:ivp-property}, 
\begin{align*}
    \sum_{l=1}^L s_lK = \roundbrack{\sum_{l=1}^L (k_l+(\tau-1))} + v =\sum_{l=1}^L(k_l -1) + \tau L + (1 - \tau)= K.
\end{align*}
Thus, $\s \in \Delta^{L-1}$ and by the definition of $b_{\kvec}$, $\s\in b_{\kvec}$. This completes the proof. 

\end{proof}

The previous two propositions imply that $\Bcal$ satisfies the property we require of a reasonable binning scheme. For $L = 3$, grid-style binning gives equi-volume bins as illustrated in Figure~\ref{fig:grid-style}; however this is not true for $L > 3$. We now describe a histogram binning based partitioning scheme.

\begin{algorithm}[!t]
	\SetAlgoLined
	\KwInput{Base multiclass predictor $\g : \fancyX \to \Delta^{L-1}$, calibration data $\Dcal = \{(X_1, Y_1), (X_2, Y_2), \ldots, (X_n, Y_n)\}$}
	\KwHyper{number of bins $B$, unit vectors $\cut_1, \cut_2, \ldots, \cut_B \in \Real^L$,}
	\KwOutput{Approximately calibrated scoring function $\h$}
	$S \gets \{\g(X_1), \g(X_2), \ldots, \g(X_n)\}$\;
	$T \gets$ empty array of size $B$\;
	$c \gets \lfloor \frac{n+1}{B} \rfloor$\;
	\For{$b \gets 1$ \textbf{to} $B-1$}{
	    $T_b \gets \text{order-statistics}(S, \cut_b, c)$\;
        $S \gets S \setminus \{v \in S: v^T\cut_b \leq T_b\}$\;
	}
	$T_B \gets 1.01$\;%
	$\Bcal(\g(\cdot)) \gets \min \{b \in [B] : \g(\cdot)^T\cut_b < T_b\}$\;%
	$\widehat{\Pi} \gets$ empty matrix of size $B \times L$\;
	\For{$b \gets 1$ \textbf{to} $B$}{
    	\For{$l \gets 1$ \textbf{to} $L$}{
	       $\widehat{\Pi}_{b, l} \gets$ Mean$\{\indicator{Y_i = l} : \Bcal(\g(X_i)) = b \text{ and }\forall s \in [B],\ \g(X_i)^Tq_s \neq T_s\}$\;
	       }}
	\For{$l \gets 1$ \textbf{to} $L$}{
    	$h_l(\cdot) \gets \widehat{\Pi}_{\Bcal(\g(\cdot)), l}$\;
   }
   	\Return $\h$\;
    \caption{Projection histogram binning for canonical calibration}
	\label{alg:umd-multiclass}
\end{algorithm}

\subsection{Projection based histogram binning for canonical calibration}
\label{appsec:umd-canonical}
Some of the bins defined by Sierpinski binning and grid-style binning may have very few calibration points $n_b$, leading to poor estimation of $\widehat{\Pi}$. In the binary calibration case, this can be remedied using histogram binning which strongly relies on the scoring function $g$ taking values in a fully ordered space $[0, 1]$. To ensure that each bin contains $\Omega(n/B)$ points, we estimate the quantiles of $g(X)$ and created the bins as per these quantiles. However, there are no natural quantiles for unordered prediction spaces such as $\Delta^{L-1}$ ($L \geq 3$). In this section, we develop a histogram binning scheme for $\Delta^{L-1}$ that is semantically interpretable and has desirable statistical properties.

Our algorithm takes as input a prescribed number of bins $B$ and an arbitrary sequence of vectors $\cut_1, \cut_2, \ldots, \cut_{B-1} \in \Real^L$ with unit $\ell_2$-norm: $\|\cut_i\|_2 = 1$. %
Each of these vectors represents a direction on which we will project $\Delta^{L-1}$ in order to induce a full order on $\Delta^{L-1}$. Then, for each direction, we will use an order statistics on the induced full order to identify a bin with exactly $\lfloor (n+1)/B\rfloor - 1$ calibration points (except the last bin, which may have more points). %
The formal algorithm is described in Algorithm~\ref{alg:umd-multiclass}. It uses the following new notation: given $m$ vectors $v_1, v_2, \ldots, v_m \in \Real^L$, a unit vector $u$, and an index $j \in [m]$, let $\text{order-statistics}(\{v_1, v_2, \ldots, v_m\}, u, j)$ denote the $j$-th order-statistics of $\{v_1^Tu, v_2^Tu, \ldots, v_m^Tu\}$.  %

We now briefly describe some values computed by Algorithm~\ref{alg:umd-multiclass} in words to build intuition. The array $T$, which is learnt on the data, represents the identified thresholds for the directions given by $q$. Each $(q_b, T_b)$ pair corresponds to a hyperplane that \emph{cuts} $\Delta^{L-1}$ into two subsets given by $\{x \in \Delta^{L-1} : x^Tq_b < T_b\}$ and $\{x \in \Delta^{L-1} : x^Tq_b \geq T_b\}$. The overall partitioning of $\Delta^{L-1}$ is created by merging these cuts sequentially. This defines the binning function $\Bcal$. By construction, the binning function is such that each bin contains at least $\lfloor \frac{n+1}{B} \rfloor - 1$ many points in its interior. As suggested by \citet{gupta2021distribution}, we do not include the points that lie on the boundary, that is, points $X_i$ that satisfy $\g(X_i)^Tq_s = T_s$ for some $s \in [B]$. The interior points bins are then used to estimate the bin biases $\widehat{\Pi}$.

No matter how the $q$-vectors are chosen, the bins created by Algorithm~\ref{alg:umd-multiclass} have at least $\floor{\frac{n}{B}} - 1$ points for bias estimation. However, we discuss some simple heuristics for setting $q$ that are semantically meaningful. For some intuition, note that the binary version of histogram binning \citet[Algorithm 1]{gupta2021distribution} is essentially recovered by Algorithm~\ref{alg:umd-multiclass} if $L = 2$ by setting each $q_b$ as $\e_2$ (the vector $[0, 1]$). Equivalently, we can set each $q_b$ to $-\e_1$ since both are equivalent for creating a projection-based order on $\Delta_2$. Thus for $L \geq 3$, a natural strategy for the $q$-vectors is to rotate between the canonical basis vectors: $q_1 = -\e_1, q_2 = -\e_2, \ldots, q_L = -\e_L, q_{L+1} = -\e_1, \ldots,$ and so on. Projecting with respect to $-\e_l$ focuses on the class $l$ by forming a bin corresponding to the largest values of $g_l(X_i)$ among the remaining $X_i$'s which have not yet been binned. (On the other hand, projecting with respect to $\e_l$ will correspond to forming a bin with the smallest values of $g_l(X_i)$.)

The $q$-vectors can also be set adaptively based on the training data (without seeing the calibration data). For instance, if most points belong to a specific class $l \in [L]$, we may want more sharpness for this particular class. In that case, we can choose a higher ratio of the $q$-vectors to be $-\e_l$. %

\begin{figure}[!t]
\centering
\begin{subfigure}[t]{\linewidth}
\centering
\includegraphics[width=0.54\textwidth]{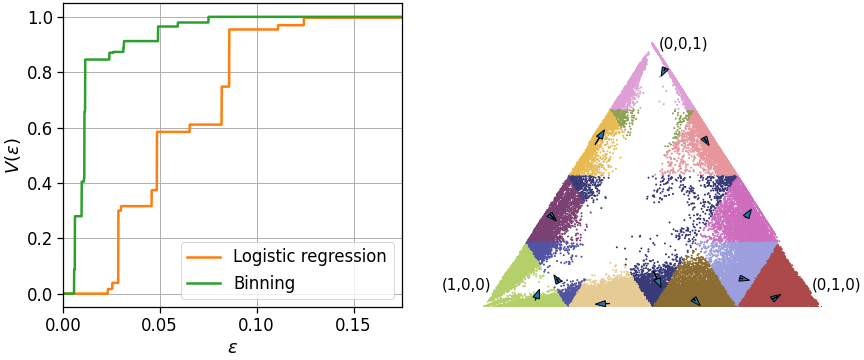}
\caption{Calibration using Sierpinski binning at depth $q=2$.}
\end{subfigure}
\begin{subfigure}[t]{\linewidth}
\centering
\vspace{0.5cm}
\includegraphics[width=0.54\textwidth]{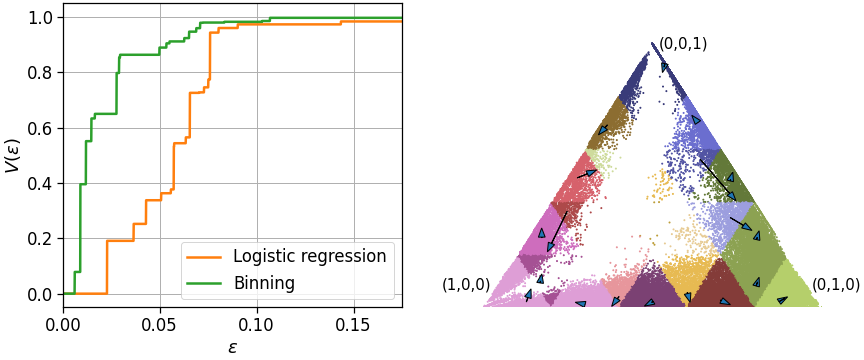}
\caption{Calibration using grid-style binning with $K=5$, $\tau = 0.2$.}
\end{subfigure}
\begin{subfigure}[t]{\linewidth}
\centering
\vspace{0.5cm}
\includegraphics[width=0.54\textwidth]{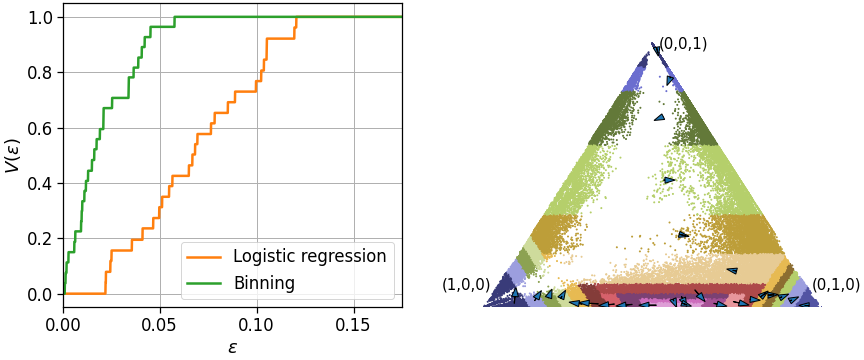}
\caption{Projection-based HB with $B=27$ projections: $q_1 = -\e_1, q_2 = -\e_2, \ldots, q_4, -\e_1, \ldots$, and so on.}
\end{subfigure}
\begin{subfigure}[t]{\linewidth}
\centering
\vspace{0.5cm}
\includegraphics[width=0.54\textwidth]{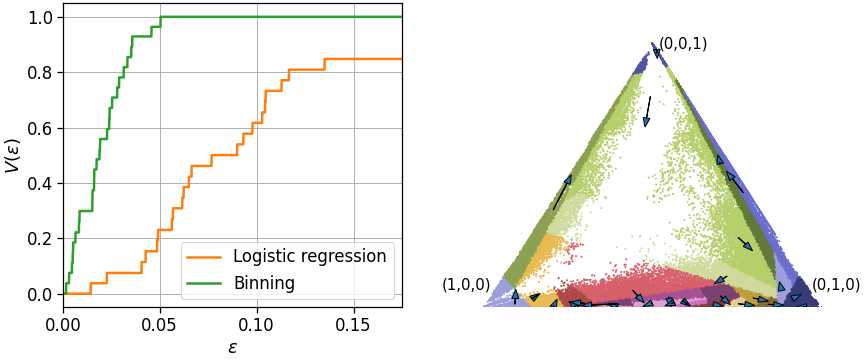}
\caption{Projection-based HB with $B=27$ random projections ($q_i$ drawn uniformly from the $\ell_2$-unit-ball in $\Real^3$).}
\end{subfigure}
\caption{Canonical calibration using fixed and histogram binning on a 3-class version of the COVTYPE-7 dataset. The reliability diagrams (left) indicate that all forms of binning improve the calibration of the base logistic regression model. The recalibration diagrams (right) are a scatter plot of the predictions $\g(X)$ on the test data with the points colored in 10 different colors depending on their bin. For every bin, the arrow-tail indicates the mean probability predicted by the base model $\g$ whereas the arrow-head indicates the probability predicted by the updated model $\h$. }
\label{fig:covtype-canonical}
\end{figure}

\subsection{Experiments with the COVTYPE dataset}\label{appsec:canonical-exps}
In Figure~\ref{fig:covtype-canonical} we illustrate the binning schemes proposed in this section on a 3-class version of the COVTYPE-7 dataset considered in Section~\ref{subsec:covertype-top-label}. As noted before, this is an imbalanced dataset where classes 1 and 2 dominate. We created a 3 class problem with the classes 1, 2, and other (as class 3). The entire dataset has 581012 points and the ratio of the classes is approximately 36\%, 49\%, 15\% respectively. The dataset was split into train-test in the ratio 70:30. The training data was further split into modeling-calibration in the ratio 90:10. A logistic regression model $\g$ using \texttt{sklearn.linear\_model.LogisticRegression} was learnt on the modeling data, and $\g$ was recalibrated on the calibration data. 

The plots on the right in Figure~\ref{fig:covtype-canonical} are \emph{recalibration diagrams}. The base predictions $\g(X)$ on the test-data are displayed as a scatter plot on $\Delta^2$. Points in different bins are colored using one of 10 different colors (since the number of bins is larger than 10, some colors correspond to more than one bin). For each bin, an arrow is drawn, where the tail of the arrow corresponds to the average $\g(X)$ predictions in the bin and the head of the arrow corresponds to the recalibrated $\h(X)$ prediction for the bin. For bins that contained very few points, the arrows are suppressed for visual clarity. 

The plots on the left in Figure~\ref{fig:covtype-canonical} are validity plots \citep{gupta2021distribution}. Validity plots display estimates of \[V(\varepsilon) = \Prob_{\text{test-data}}\roundbrack{\|\Exp{}{\Y \mid \g(X)} - \g(X)\|_1 \leq \varepsilon},\]
as $\varepsilon$ varies ($\g$ corresponds to the validity plot for logistic regression; replacing $\g$ with $\h$ above gives plots for the binning based classifier $\h$). For logistic regression, the same binning scheme as the one provided by $\h$ is used to estimate $V(\varepsilon)$. 

Overall, Figure~\ref{fig:covtype-canonical} shows that all of the binning approaches improve the calibration of the original logistic regression model across different $\varepsilon$. However, the recalibration does not change the original model significantly. Comparing the different binning methods to each other, we find that they all perform quite similarly. It would be interesting to further study these and other binning methods for post-hoc canonical calibration.

\section{Proofs}
Proofs appear in separate subsections, in the same order as the corresponding results appear in the paper and Appendix. Proposition~\ref{prop:top-label-conf-ece} was stated informally, so we state it formally as well. 
\label{appsec:proofs}

\subsection{Statement and proof of Proposition~\ref{prop:top-label-conf-ece}}
\begin{proposition}
For any predictor $(c, h)$, $\text{conf-ECE}(c, h) \leq \text{TL-ECE}(c, h)$. 
\label{prop:top-label-conf-ece}
\end{proposition}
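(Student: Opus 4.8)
The plan is to realize $\text{conf-ECE}$ as a "coarsening" of $\text{TL-ECE}$ via the tower property, and then apply the conditional Jensen inequality. First I would introduce the shorthand $Z := \indicator{Y = c(X)}$, so that $\Prob(Y = c(X) \mid h(X)) = \E[Z \mid h(X)]$ and $\Prob(Y = c(X) \mid c(X), h(X)) = \E[Z \mid c(X), h(X)]$. The key structural fact is that the $\sigma$-algebra generated by $h(X)$ is contained in the one generated by $(c(X), h(X))$, so by the tower property
\begin{equation*}
    \E[Z \mid h(X)] = \E\big[\,\E[Z \mid c(X), h(X)] \,\big|\, h(X)\,\big].
\end{equation*}

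Next, since $h(X)$ is $\sigma(h(X))$-measurable, we may write $h(X) = \E[h(X) \mid h(X)]$, and subtracting gives
\begin{equation*}
    \E[Z \mid h(X)] - h(X) = \E\big[\,\E[Z \mid c(X), h(X)] - h(X) \,\big|\, h(X)\,\big].
\end{equation*}
Applying the conditional Jensen inequality to the convex function $t \mapsto |t|$ yields
\begin{equation*}
    \big|\E[Z \mid h(X)] - h(X)\big| \leq \E\big[\,\big|\E[Z \mid c(X), h(X)] - h(X)\big| \,\big|\, h(X)\,\big].
\end{equation*}
Finally, taking $\E_X$ of both sides and using the tower property once more on the right-hand side (the outer conditioning on $h(X)$ disappears after taking the full expectation) gives exactly $\text{conf-ECE}(c,h) \leq \text{TL-ECE}(c,h)$.

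I do not anticipate a genuine obstacle here; the only point requiring mild care is the measure-theoretic bookkeeping — making sure the conditioning $\sigma$-algebras are nested the right way and that the conditional Jensen inequality is invoked with the correct conditioning variable — but this is routine. One could alternatively present the argument entirely in the discretized/binned picture (grouping by the value of $h(X)$, then further splitting each group by the value of $c(X)$, and applying the finite triangle inequality $|\sum_j w_j a_j| \leq \sum_j w_j |a_j|$ with convex weights $w_j$), which may read more transparently for readers who prefer to avoid conditional-expectation formalism; I would likely include that as the main line of the proof since the statement is also used with empirical (binned) estimates elsewhere in the paper.
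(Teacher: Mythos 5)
Your proposal is correct and is essentially identical to the paper's own proof: both introduce the indicator $\indicator{Y = c(X)}$, use the tower property to nest the conditioning on $(c(X), h(X))$ inside the conditioning on $h(X)$, apply the conditional Jensen inequality to the absolute value, and then take the outer expectation. The binned/triangle-inequality variant you mention at the end is a fine expository alternative but is not needed; the paper states the result at the population level and proves it exactly as in your main line.
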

\begin{proof}
To avoid confusion between the the conditioning operator and the absolute value operator $\abs{\cdot}$, we use $\textabs{\cdot}$ to denote absolute values below. Note that,
\begin{align*}
    \textabs{\Prob(Y = c(X) \mid h(X)) -h(X)} &= \textabs{\Exp{}{\indicator{Y = c(X)} \mid h(X)} -h(X)} \\  &= \textabs{\Exp{}{\indicator{Y = c(X)} - h(X) \mid h(X)}} \\  &= \textabs{\Exp{}{\Exp{}{\indicator{Y = c(X)} -h(X) \mid h(X), c(X)} \mid h(X)}}
    \\  &\leq \Exp{}{\textabs{\Exp{}{\indicator{Y = c(X)} -h(X) \mid h(X), c(X)} }\mid h(X)}
    \\ &\qquad \qquad \qquad \text{(by Jensen's inequality)}
    \\  &= \Exp{}{\textabs{\Prob(Y = c(X)\mid h(X), c(X)) -h(X)  }\mid h(X)}.
\end{align*}
Thus, 
\begin{align*}
    \text{conf-ECE}(c, h) &= \Exp{}{\textabs{\Prob(Y = c(X) \mid h(X)) -h(X)}} \\ &\leq \Exp{}{\Exp{}{\textabs{\Prob(Y = c(X)\mid h(X), c(X)) -h(X)  }\mid h(X)}} 
    \\ &= \Exp{}{\textabs{\Prob(Y = c(X)\mid h(X), c(X)) -h(X)  }}
    \\ &= \text{TL-ECE}(c, h). 
\end{align*}

\end{proof}

\subsection{Proof of Theorem~\ref{thm:umd-top-label}}
The proof strategy is as follows. First, we use the bound of \citet[Theorem 4]{gupta2021distribution} (henceforth called the GR21 bound), derived in the binary calibration setting, to conclude marginal, conditional, and ECE guarantees for each $h_l$. Then, we show that the binary guarantees for the individual $h_l$'s leads to a top-label guarantee for the overall predictor $(c, h)$.

Consider any $l \in [L]$. Let $P_l$ denote the conditional distribution of $(X, \indicator{Y=l})$ given $c(X) = l$. Clearly, $D_l$ is a set of $n_l$ i.i.d. samples from $P_l$, and $h_l$ is learning a binary calibrator with respect to $P_l$ using binary histogram binning. The number of data-points is $n_l$ and the number of bins is $B_l = \floor{n_l/k}$ bins. We now apply the GR21 bounds on $h_l$.  First, we verify that the condition they require is satisfied:
\[n_l \geq k\floor{n_l/k} \geq 2B_l.\]

Thus their marginal calibration bound for $h_l$ gives, 
\[
P\roundbrack{\abs{P(Y = l \mid c(X) = l, h_l(X)) - h_l(X)} \leq \delta +  \sqrt{\frac{\log(2/\alpha)}{2(\floor{n_l/B_l} - 1)}} \mid c(X) = l} \geq 1 - \alpha.
\]
Note that since 
    $\floor{n_l/B_l} = \floor{n_l/\floor{n_l/k}} \geq  k$,
 \[\varepsilon_1 = \delta +  \sqrt{\frac{\log(2/\alpha)}{2(k - 1)}} \geq \delta +  \sqrt{\frac{\log(2/\alpha)}{2(\floor{n_l/B_l} - 1)}}.\] 
Thus we have
\[
P\roundbrack{\abs{P(Y = l \mid c(X) = l, h_l(X)) - h_l(X)} \leq \varepsilon_1 \mid c(X) = l} \geq 1 - \alpha.
\]
This is satisfied for every $l$. Using the law of total probability gives us the top-label marginal calibration guarantee for $(c, h)$:
\begin{align*}
    &~P(\abs{P(Y = c(X) \mid c(X), h(X)) - h(X)} \leq \varepsilon_1) \\&= \sum_{l = 1}^L P(c(X) = l) P(\abs{P(Y = c(X) \mid c(X), h(X)) - h(X)} \leq \varepsilon_1 \mid c(X) = l) 
    \\ &\qquad \qquad \qquad \text{(law of total probability)}
    \\ &= \sum_{l = 1}^L P(c(X) = l) P(\abs{P(Y = l \mid c(X) = l, h_l(X)) - h_l(X)} \leq \varepsilon_1 \mid c(X) = l) 
    \\ &\qquad \qquad \qquad\text{(by construction, if $c(x) = l$, $h(x) = h_l(x)$)}
    \\ &\geq \sum_{l = 1}^L P(c(X) = l)(1-\alpha) 
    \\ &= 1-\alpha. 
\end{align*}

Similarly, the in-expectation ECE bound of GR21, for $p = 1$, gives for every $l$, 
\begin{align*}
    \E\abs{P(Y = l \mid c(X) = l, h_l(X)) - h_l(X) \mid c(X) = l} &\leq \sqrt{B_l/2n_l} + \delta \\&= \sqrt{\floor{n_l/k}/2n_l} + \delta \\&\leq \sqrt{1/2k} + \delta.
\end{align*}
Thus,
\begin{align*}
    &~\E{\abs{P(Y = c(X) \mid c(X), h_l(X)) - h(X)}} \\ &=     
    \sum_{l = 1}^L P(c(X) = l)\E{\abs{P(Y = l \mid c(X) = l, h_l(X)) - h_l(X)} \mid c(X) = l}
    \\&\leq \sum_{l = 1}^L P(c(X) = l)(\sqrt{1/2k} + \delta) 
    \\ &= \sqrt{1/2k} + \delta.
\end{align*}

Next, we show the top-label conditional calibration bound. Let $B = \sum_{l=1}^L B_l$ and $\alpha_l = \alpha B_l/B$. Note that $B\leq \sum_{l=1}^L n_l/k = n/k$. The binary conditional calibration bound of GR21 gives
\begin{multline*}
    P\roundbrack{\forall r \in \text{Range}(h_l), \abs{P(Y = l \mid c(X) = l, h_l(X) = r) -  r} \leq \delta +  \sqrt{\frac{\log(2B_l/\alpha_l)}{2(\floor{n_l/B_l} - 1)}} \mid c(X) = l}\\ \geq 1 - \alpha_l.
\end{multline*}
Note that 
\begin{align*}
    \sqrt{\frac{\log(2B_l/\alpha_l)}{2(\floor{n_l/B_l} - 1)}} &= \sqrt{\frac{\log(2B/\alpha)}{2(\floor{n_l/B_l} - 1)}}
    \\ &\leq \sqrt{\frac{\log(2n/k\alpha)}{2(\floor{n_l/B_l} - 1)}} &\text{(since $B \leq n/k$)}
    \\ &\leq \sqrt{\frac{\log(2n/k\alpha)}{2(k - 1)}} &\text{(since $k \leq \floor{n_l/B_l}$)}.
\end{align*}
Thus for every $l \in [L]$, 
\begin{align*}
    P(\forall r \in \text{Range}(h_l),&\ \abs{P(Y = l \mid c(X) = l, h_l(X) = r) -  r}\leq \varepsilon_2) \geq 1 - \alpha_l.
\end{align*}
By construction of $h$, conditioning on $c(X) = l$ and $h_l(X) = r$ is the same as conditioning on $c(X) = l$ and $h(X) = r$. Taking a union bound over all $L$ gives 
\begin{align*}
    P(\forall l \in [L], r \in \text{Range}(h), &\abs{P(Y = c(X) \mid c(X) = l, h(X) = r) -  r}) \leq \varepsilon_2) \\ &\geq 1 - \sum_{l=1}^L\alpha_l = 1 - \alpha,
\end{align*}
proving the conditional calibration result. Finally, note that if for every $l \in [L], r \in \text{Range}(h)$, 
\[
\abs{P(Y = c(X) \mid c(X) = l, h(X) = r) -  r} \leq \varepsilon_2,\]
then also 
\[
\text{TL-ECE}(c, h) = \E{\abs{\Prob(Y = c(X)\mid h(X), c(X)) -h(X)  }} \leq \varepsilon_2.
\]
This proves the high-probability bound for the TL-ECE. 
\qed
\begin{remark}
\citet{gupta2021distribution} proved a more general result for general $\ell_p$-ECE bounds. Similar results can also be derived for the suitably defined $\ell_p$-TL-ECE. Additionally, it can be shown that with probability $1 - \alpha$, the TL-MCE of $(c, h)$  is bounded by $\varepsilon_2$. (TL-MCE is defined in Appendix~\ref{appsec:additional-exps}, equation~\eqref{eq:TL-MCE}.)
\end{remark}

\subsection{Proof of Proposition~\ref{prop:canonical-implies-class-wise}}
Consider a specific $l \in [L]$. We use $h_l$ to denote the $l$-th component function of $\h$ and $Y_l = \indicator{Y = l}$. Canonical calibration implies 
\begin{equation}
    \Prob(Y = l \mid \h(X)) = \Exp{}{Y_l \mid \h(X)} = h_l(X).
    \label{eq:canonical-calibration-l}
\end{equation}  We can then use the law of iterated expectations (or tower rule) to get the final result: 
\begin{align*}
    \Exp{}{Y_l \mid h_l(X)} &= \Exp{}{\Exp{}{Y_l \mid \h(X)} \mid h_l(X)} 
    \\ &= \Exp{}{h_l(X) \mid h_l(X)}&\text{(by the canonical calibration property \eqref{eq:canonical-calibration-l})}
    \\ &= h_l(X). 
\end{align*}
\qed

\subsection{Proof of Theorem~\ref{thm:umd-class-wise}}
We use the bounds of \citet[Theorem 4]{gupta2021distribution} (henceforth called the GR21 bounds), derived in the binary calibration setting, to conclude marginal, conditional, and ECE guarantees for each $h_l$. This leads to the class-wise results as well. 

Consider any $l \in [L]$. Let $P_l$ denote the distribution of $(X, \indicator{Y=l})$. Clearly, $D_l$ is a set of $n$ i.i.d. samples from $P_l$, and $h_l$ is learning a binary calibrator with respect to $P_l$ using binary histogram binning. The number of data-points is $n$ and the number of bins is $B_l = \floor{n/k_l}$ bins. We now apply the GR21 bounds on $h_l$.  First, we verify that the condition they  require is satisfied:
\[n \geq k_l\floor{n/k_l} \geq 2B_l.\]

Thus the GR21 marginal calibration bound gives that for every $l \in [L]$, $h_l$ satisfies
\[
P\roundbrack{\abs{P(Y = l \mid h_l(X)) - h_l(X)} \leq \delta +  \sqrt{\frac{\log(2/\alpha_l)}{2(\floor{n/B_l} - 1)}}} \geq 1 - \alpha_l.
\]
The class-wise marginal calibration bound of Theorem~\ref{thm:umd-class-wise} follows since $\floor{n/B_l} = \floor{n/\floor{n/k_l}} \geq k_l$, and so 
 \[\varepsilon_l^{(1)} \geq \delta +  \sqrt{\frac{\log(2/\alpha_l)}{2(\floor{n/B_l} - 1)}}.\] 

Next, the GR21 conditional calibration bound gives for every $l \in [L]$, $h_l$ satisfies
\[
P\roundbrack{\forall r \in \text{Range}(h_l), \abs{P(Y = l \mid h_l(X) = r) - r} \leq \delta +  \sqrt{\frac{\log(2B_l/\alpha_l)}{2(\floor{n/B_l} - 1)}}} \geq 1 - \alpha_l.
\]
The class-wise marginal calibration bound of Theorem~\ref{thm:umd-class-wise} follows since $B_l = \floor{n/k_l} \leq n/k_l$ and $\floor{n/B_l} = \floor{n/\floor{n/k_l}} \geq k_l$, and so 
 \[\varepsilon_l^{(2)} \geq \delta +  \sqrt{\frac{\log(2B_l/\alpha_l)}{2(\floor{n/B_l} - 1)}}.\] 

Let $k = \min_{l \in [L]} k_l$. The in-expectation ECE bound of GR21, for $p = 1$, gives for every $l$, 
\begin{align*}
    \Exp{}{\text{binary-ECE-for-class-$l$ }(h_l)} &\leq \sqrt{B_l/2n_l} + \delta \\&= \sqrt{\floor{n/k_l}/2n} + \delta \\&\leq \sqrt{1/2k_l} + \delta
    \\&\leq \sqrt{1/2k} + \delta.
\end{align*}
Thus,
\begin{align*}
    \Exp{}{\text{CW-ECE}(c,h)} &= \Exp{}{L^{-1}\sum_{l=1}^L \text{binary-ECE-for-class-$l$ }(h_l)} 
    \\ &\leq L^{-1}\sum_{l=1}^L (\sqrt{1/2k} + \delta)
    \\ &= \sqrt{1/2k} + \delta,
\end{align*}
as required for the in-expectation CW-ECE bound of Theorem~\ref{thm:umd-class-wise}. Finally, for the high probability CW-ECE bound, let $\varepsilon = \max_{l \in [L]}\varepsilon_l^{(2)}$ and $\alpha = \sum_{l=1}^L \alpha_l$. By taking a union bound over the the conditional calibration bounds for each $h_l$, we have, with probability $1-\alpha$, for every $l \in [L]$ and $r \in \text{Range}(h)$,
\[
\abs{P(Y = l \mid c(X) = l, h(X) = r) -  r} \leq \varepsilon_l^{(2)} \leq \varepsilon.\]
Thus, with probability $1 - \alpha$, 
\[
\text{CW-ECE}(c, h) = L^{-1}\sum_{l=1}^L\E{\abs{\Prob(Y = l\mid h_l(X)) -h_l(X)}} \leq \varepsilon.
\]
This proves the high-probability bound for the CW-ECE. 
\qed

\end{document}